\crefname{equation}{}{}  
\Crefname{equation}{}{}  
\definecolor{mydarkblue}{rgb}{0,0.08,0.45}
\theoremstyle{plain}
\newtheorem{theorem}{Theorem}[section]
\newtheorem{assumption}[theorem]{Assumption}
\newtheorem{proposition}[theorem]{Proposition}
\newtheorem{lemma}[theorem]{Lemma}
\newtheorem{corollary}[theorem]{Corollary}
\theoremstyle{definition}
\newtheorem{definition}[theorem]{Definition}
\theoremstyle{remark}
\newtheorem{remark}[theorem]{Remark}
\newtheorem{observation}[theorem]{Observation}
\newacro{RKHS}[RKHS]{reproducing kernel Hilbert space}
\newacro{RK}[RK]{reproducing kernel}
\newacro{KME}[KME]{kernel mean embedding}
\newacro{CME}[CME]{conditional mean embedding}
\newacro{MMD}[MMD]{maximum mean discrepancy}
\newacro{SVM}[SVM]{support vector machine}
\newacro{KRR}[KRR]{kernel ridge regression}
\newacro{RHS}[RHS]{right-hand side}
\newacro{ae}[a.e.]{almost everywhere}
\newacro{as}[a.s.]{almost surely}
\newacro{LHS}[LHS]{left-hand side}
\newacro{RHS}[RHS]{right-hand side}
\newacro{iid}[i.i.d.]{independent and identically distributed}
\newacro{ONB}[ONB]{orthonormal basis}
\newacro{UBD}[UBD]{uniform-block-diagonal}
\newacro{GP}[GP]{Gaussian process}
\newacro{CMMD}[CMMD]{conditional maximum mean discrepancy}
\newcommand{\inputSet}{\mathcal{X}}
\newcommand{\outputSet}{\mathcal{Y}}
\newcommand{\measurementSet}{\mathcal{Z}}
\newcommand{\outputSpace}{\mathcal{G}}
\newcommand{\featureMap}{\Phi}
\newcommand{\dataSet}{\mathcal{D}}
\newcommand{\covariateRejectionRegion}{\chi}
\newcommand{\Pbb}{\mathbb{P}}
\newcommand{\E}{\mathbb{E}}
\newcommand{\Var}{\mathrm{Var}}
\newcommand{\N}{\mathbb{N}}
\newcommand{\Np}{{\mathbb{N}_{>0}}}
\newcommand{\Ncal}{\mathcal{N}}
\newcommand{\R}{\mathbb{R}}
\newcommand{\Rp}{\mathbb{R}_{>0}}
\newcommand{\hilbert}[1]{\mathcal{#1}}
\newcommand{\tildeHilbert}[1]{{\tilde{\hilbert{#1}}}}
\newcommand{\sigAlgebra}{\mathcal{A}}
\newcommand{\probaMeasures}{{\mathcal{M}_1^+}}
\newcommand{\upar}[1]{{(#1)}}
\newcommand{\rkhs}[1]{\mathcal{H}_{#1}}
\newcommand{\dd}{\mathrm{d}}
\newcommand{\expectation}{\mathbb{E}}
\newcommand{\testMap}{\mathcal{T}}
\newcommand{\covariateError}{\eta}
\newcommand{\hypothesisSet}{\Theta}
\newcommand{\subsetOfInterest}{\mathcal{S}}
\newcommand{\bootstrapRegion}{\mathcal M}
\newcommand{\integersOfInterest}{\mathcal{N}}
\newcommand{\integers}[1]{\left[#1\right]}
\newcommand{\fulfillmentRegion}{\varphi}
\newcommand{\errorRegion}{E}
\newcommand{\learningMethod}{\mathfrak{L}}
\newcommand{\linearOperators}{\mathcal{L}}
\newcommand{\bLinearOperators}{\mathcal{L}_\mathrm{b}}
\newcommand{\id}{\mathrm{id}}
\newcommand{\trace}{\mathrm{Tr}}
\newcommand{\elementaryKernel}{{K_0}}
\newcommand{\len}{\mathrm{len}}
\newcommand{\closure}{\mathrm{cl}}
\newcommand{\filtration}{\mathcal F}
\newcommand{\hilbertMDS}{S}
\newcommand{\sphere}{\mathbb{S}}
\newcommand{\support}{\mathrm{supp}}
\newcommand{\range}{\mathrm{ran}}
\newcommand{\isometry}{\iota}
\newcommand{\contraction}{\mathrm{Cont}}
\newcommand{\I}{\ensuremath{\mathrm{I}}}
\newcommand{\II}{\ensuremath{\mathrm{II}}}
\newcommand{\functional}{\mathcal{F}}
\newcommand{\kme}{\Pi}
\newcommand{\scalar}[1]{\langle #1 \rangle}
\newcommand{\lrScalar}[1]{\left\langle #1 \right\rangle}
\newcommand{\norm}[1]{\lVert #1 \rVert}
\newcommand{\lrNorm}[1]{\left\lVert #1 \right\rVert}
\newcommand{\Rnn}{\R_{\geq0}}
\newcommand{\Normal}{\mathcal{N}}
\newcommand{\ident}{\id}
\newcommand{\hs}[1]{\hilbert{#1}} 
\newcommand{\tr}{\trace} 
\newcommand{\mppi}{{\sharp}} 
\newcommand{\RegOpTC}{V}
\author{
    Pierre-François Massiani$^{1}$
    \And
    Christian Fiedler$^{1,2,}$\thanks{Work conducted while with the Institute for Data Science in Mechanical Engineering, RWTH Aachen University}\\
    \And
    Lukas Haverbeck$^{1}$
    \And
    Friedrich Solowjow$^{1}$
    \And
    Sebastian Trimpe$^{1}$
    \AND \phantom{a}\\
        $^{1}$ Institute for Data Science in Mechanical Engineering, RWTH Aachen University\\
        $^2$ Department of Mathematics, School of Computation, Information and Technology, \\ Technical University of Munich, and Munich Center for Machine Learning (MCML)\\
        \texttt{\{massiani,christian.fiedler,solowjow,trimpe\}@dsme.rwth-aachen.de}\\
        \texttt{lukas.haverbeck@rwth-aachen.de}
}
\title{Kernel conditional tests from learning-theoretic bounds}
\begin{document}
\maketitle
\begin{abstract}
We propose a framework for hypothesis testing on conditional probability distributions, which we then use to construct \emph{statistical tests of functionals of conditional distributions}.
These tests identify the inputs where the functionals differ with high probability, and include tests of conditional moments or two-sample tests.
Our key idea is to transform confidence bounds of a learning method into a test of conditional expectations.
We instantiate this principle for kernel ridge regression (KRR) with subgaussian noise.
An intermediate data embedding then enables more general tests --- including \emph{conditional two-sample tests} --- via kernel mean embeddings of distributions.
To have guarantees in this setting, we generalize existing pointwise-in-time or time-uniform confidence bounds for KRR to previously-inaccessible yet essential cases such as infinite-dimensional outputs with non-trace-class kernels.
These bounds also circumvent the need for independent data, allowing for instance online sampling.
To make our tests readily applicable in practice, we introduce bootstrapping schemes leveraging the parametric form of testing thresholds identified in theory to avoid tuning inaccessible parameters.
We illustrate the tests on examples, including one in process monitoring and comparison of dynamical systems.
Overall, our results establish a comprehensive foundation for conditional testing on functionals, from theoretical guarantees to an algorithmic implementation, and advance the state of the art on confidence bounds for vector-valued least squares estimation.
\end{abstract}

\section{Introduction and related work}
Deciding whether the inputs and outputs of two phenomena obey the same conditional relationship is essential in many areas of science.
In control and robotics, one may want to detect changes in the dynamics over time or detect unusual perturbations; in industrial monitoring, to identify the performance of an equipment in varying operating conditions; and in medical studies, to compare treatment responses based on patient characteristics.
These examples exhibit the common characteristic that they concern the \emph{conditional distribution} of an output (next state, performance, treatment response, ...) given an input parameter, which we call the \emph{covariate} (current state, patient characteristics, ...).
The \emph{functional} of interest is the aspect of the output distribution we care about (expectation, other moment, full distribution, ...).
While there are methods to detect global changes or mismatches between (functionals of) probability distributions, they generally have at least one of the following limitations: \begin{enumerate*}[label=(\roman*)]
    \item they already detect a mismatch if the marginal distributions of the covariates differ while the conditional ones coincide \cite{GBR2012};
    \item they already detect a mismatch if they detect a \emph{global} difference between the conditional distributions, disallowing questions such as at what covariate they differ or whether they differ at specific points of interest \cite{ZPJS2012,SP2020,LCK2024,HL2024,TF2019,BM2021,CNB2024}; or
    \item they only provide guarantees in the infinitely-many samples limit \cite{ZPJS2012,SP2020,LCK2024,HL2024,TF2019,BM2021,CNB2024,YLZ2022}.
\end{enumerate*}
In contrast, we propose a framework enabling finite-sample guarantees for conditional hypothesis testing at arbitrary covariates, generalizing such guarantees for classical hypothesis testing in the sense of Neyman-Pearson \cite[Sections 8.1--8.3.1]{CB2002}.
To our knowledge, we are the first to formalize guarantees on the accuracy of the \emph{covariate rejection region}, which is the set of points where the null hypothesis is rejected.
This region and guarantees thereon enable identifying pointwise discrepancies between functionals of conditional distributions, rather than global discrepancies between joint or conditional distributions.
\par
Regardless of the covariate rejection region, conditional testing is notoriously \emph{hard} --- there exists no nontrivial test with prescribed level against all alternatives \cite{SP2020,LCK2024}.
This should be highly unsurprising to readers familiar with no-free-lunch theorems in statistical learning theory \cite{SC2008}, as conditional testing typically requires making (high-probability) statements about conditional distributions at covariates where no samples were observed, and thus requires extrapolation power.
We provide a systematic way to escape this negative result by making explicit the ties between conditional testing and learning theory: \emph{learning precedes testing}, as confidence bounds of learning algorithms yield guarantees for conditional tests.
This connection enables turning source conditions in statistical learning into ``prior sets'', that is, classes of alternatives for which a conditional test has guarantees.
To our knowledge, we are the first to formalize this connection.
\par
The natural assumption on the data in conditional testing is that outputs are sampled according to conditional distributions given inputs.
In particular, outputs are not necessarily \emph{identically distributed}.
They may even not be \emph{independent}; e.g., in the online setting \cite{Abb2013}, where data is generated sequentially as future sampling locations depend on previous measurements.
This is a technical challenge for finite-sample guarantees and contrasts starkly with classical hypothesis testing \cite[Chapter\,8]{CB2002}, where data is \ac{iid} \cite{GBR2012}, approximately \ac{iid} (mixing, ...) \cite{serfling1968wilcoxon}, or at least identically distributed given previous samples (online setting) \cite{SR2023,PR2023}.
A common approach in learning theory is to rely on a measure along which the covariates concentrate \cite{SS2007,ZT2022,MTS2024}.
We must alleviate such assumptions, however, as we seek answers that are pointwise rather than measure-dependent.
We achieve this by relying on \emph{confidence bounds} of learning algorithms, that is, pointwise probabilistic bounds on the error of the learned functions.
This allows us not only to provide finite-sample guarantees, but also to operate on non-\ac{iid} data, whereas previous studies are confined to asymptotic tests with \ac{iid} data \cite{ZPJS2012,SP2020,LCK2024,HL2024,TF2019,BM2021,CNB2024,YLZ2022}.
From a practical point of view, sequential data generation calls for \emph{repeated testing} as more data becomes available, often requiring time-uniform or anytime confidence bounds \cite{Abb2013,chowdhury2017kernelized,CG2021,whitehouse2023sublinear}.
The specific test we propose is based on such a bound, which we generalize to previously inaccessible cases thanks to a new assumption we introduce and a strengthening of subgaussianity \cite{MS2023}.
In particular, our bound is directly time-uniform.
\par
The learning problems involved in conditional testing require mapping covariates to the functional of interest.
In some cases, this may be the full conditional distribution; e.g., in two-sample or independence testing \cite{LCK2024}.
\emph{Distributional learning} usually addresses this by embedding said distributions as expectations in Hilbert spaces\,\cite{SSPG2016,FMST2024}.
We leverage this idea by casting general conditional functional test as a comparison of conditional expectations with correspondingly-embedded data.
It requires, however, confidence bounds applicable to such outputs.
For instance, \ac{KRR} admits a well-known time-uniform confidence bound for scalar outputs \cite{Abb2013,whitehouse2023sublinear}, but its only extension to infinite-dimensional outputs \cite{CG2021} requires strong assumptions on the kernel which exclude the data embeddings that allow testing for complex functionals such as those we identify for two-sample tests.
We address this by establishing a time-uniform bound that holds under significantly weaker assumptions --- this is our main technical contribution.
It is the first bound to hold under such general assumptions to our knowledge, and recovers exactly the ones of \citet{Abb2013} and \citet{CG2021} in their settings.
As a result, it also enjoys the interpretation of being a frequentist correction of the Bayesian uncertainty bound obtained when interpreting \ac{KRR} as a \ac{GP} posterior mean \cite{FST2021,kanagawa2025gaussian}.
\par
Our guarantees depend on quantities common in learning theory but inaccessible in practice.
Furthermore, learning-theoretic bounds are commonly conservative as they are worst-case in the prior set, resulting in overly cautious tests.
Estimation schemes for sharper bounds are thus essential.
To that end, we provide heuristic bootstrapping schemes using the parametric structure of the testing threshold identified in theory. 
One is adapted from \citet{SV2023} by modifying their standard error term in accordance with our results.
In principle, they can be replaced by any alternative.
\par
Finally, our approach is connected to several other methods, like multiple testing \cite{Efr2012}, though the hypotheses we test simultaneously are connected (through the prior set). 
Due to the uniform bounds, our test also enables sequential testing \cite{Wal1992} and drift detection \cite{hinder2020towards,hinder2023one}, though the latter is generally performed on \emph{marginal distributions}; e.g., by testing for independence between a time process and the process of interest.
In addition, our tools can be used to perform anomaly detection \cite{samariya2023comprehensive}, which we illustrate on an example, cf. Section \ref{sec:experiments}.
Finally, our general framework can be instantiated also to conditional independence testing,
which in turn is known to be equivalent to conditional two-sample testing in some instances \cite{LCK2024}.
Further such investigations are left for future work.
\paragraph{Contributions and outline}
We provide a general framework for conditional testing and its finite-sample guarantees, and instantiate it for \ac{KRR} together with bootstrapping schemes.
Specifically: \begin{enumerate}[nosep]
    \item 
    We formalize the covariate rejection region and guarantees thereon, laying a basis for finite-sample conditional tests (Section~\ref{sec:testing setup}).
    \item We turn confidence bounds of learning algorithms into statistical tests comparing conditional expectations (Theorem~\ref{thm:abstract level two sample test}). Our instantiation for \ac{KRR} (Theorem~\ref{thm:kernel conditional two sample tests}) is the first to enjoy finite-sample guarantees, and also to allow non-\ac{iid} data.
    \item\label{contrib:functional test} We specialize the \ac{KRR} instantiation to a test for functionals of conditional distributions which allows conditional two-sample tests (Section \ref{sec:functional test}).
    \item We provide and implement heuristic bootstrapping schemes of the resulting test.
    \item We prove confidence bounds for \ac{KRR} with infinite-dimensional outputs under unprecedentedly weak assumptions (Theorem~\ref{thm:confidence bounds KRR}); specifically, Definition~\ref{def:uniform-block-diagonal kernel}. The interest of this result exceeds conditional testing.
\end{enumerate}
We expand on Contribution \ref{contrib:functional test} above in Appendix \ref{apdx:functional test} with its theoretical grounding and supporting experiments.
The bootstrapping schemes and their numerical investigation are presented in Appendix \ref{apdx:bootstrapping}.
Explicit algorithms for the test, and computing the effective test statistic and bootstrapped thresholds are in Appendix \ref{apdx:algorithms}.
Appendix~\ref{apdx:connections} further elaborates on how Theorem~\ref{thm:confidence bounds KRR} relates to existing bounds in the literature.
It relies on the more general Theorem~\ref{thm:confidence bounds vv LS} on vector-valued least squares in separable Hilbert spaces, which we show and present as a standalone result in Appendix~\ref{apdx:confidence bounds vv LS}.
Appendix~\ref{apdx:proof abstract level two sample test} is the proof of Theorem~\ref{thm:abstract level two sample test}, and Appendix~\ref{apdx:hyperparameters} has further information on our numerical experiments.

\section{Preliminaries and notation}\label{sec:preliminaries}

The set of positive integers is denoted by $\Np$, and for all $n\in\Np\cup\{\infty\}$, $\integers{n}$ is the interval of integers between $1$ and $n$, with $\integers\infty := \Np$.
If $u$ is a sequence and $m,n\in\Np$, with $m\leq n$, we write $u_{m:n}$ for the tuple with $n-m+1$ elements $(u_m, \dots, u_n)$. If $m=1$, we simply write $u_{:n}$.
If $\hilbert H, \outputSpace$ are Hilbert spaces, $\linearOperators(\hilbert H, \outputSpace)$ (resp., $\bLinearOperators(\hilbert H, \outputSpace)$) is the space of linear operators (resp., bounded linear operators) from $\hilbert H$ to $\outputSpace$. If $\outputSpace=\hilbert H$, we simply write $\linearOperators(\hilbert H)$ and $\bLinearOperators(\hilbert H)$.
Finally, for a self-adjoint $A \in \bLinearOperators(\hilbert H)$, we define $\langle g, h \rangle_A = \langle Ag, h\rangle$, and denote the induced seminorm by $\|\cdot\|_A$.

\paragraph{Markov kernels, conditional expectations}

We consider a complete probability space $(\Omega,\sigAlgebra,\Pbb)$ and 
input and output measurable spaces $(\inputSet,\sigAlgebra_\inputSet)$ and $(\outputSet,\sigAlgebra_\outputSet)$, respectively.
We assume that $\outputSet\subset\outputSpace$ as metric and measurable spaces, where $\outputSpace$ is a separable Hilbert space equipped with its Borel $\sigma$-algebra, and that $\outputSet$ is closed.
We define $\probaMeasures(\measurementSet)$ as the set of probability measures on a measurable space $\measurementSet$.
We identify measures on $\outputSet$ with their unique extension to $\outputSpace$ with support contained in $\outputSet$.
Then, a Markov kernel from $\inputSet$ to $\outputSet$ is a map $p:\sigAlgebra_\outputSet\times\inputSet\to[0,1]$ such that $p(\cdot,x)$ is a probability measure on $\outputSet$ for all $x\in\inputSet$ and $p(A,\cdot)$ is a measurable function for all $A\in\sigAlgebra_\outputSet$ \cite{Kal2017}.
If $P\in\probaMeasures(\outputSpace)$ is a probability measure such that the identity on $\outputSpace$ is (Bochner-)integrable w.r.t. $P$, we introduce its first moment $\expectation(P) = \int_{\outputSet} y \dd P(y)\in\outputSpace$. 
We say that a Markov kernel $p$ from $\inputSet$ to $\outputSet$ has a first moment if $p(\cdot,x)$ does so for all $x\in\inputSet$. Then, we use the shorthand $\expectation(p)(x) := \expectation(p(\cdot,x))$.
Finally, we define the prior set $\hypothesisSet$ as a set of Markov kernels from $\inputSet$ to $\outputSet$.
Section~\ref{sec:testing setup} allows any such choice of $\hypothesisSet$, and we specify in Section~\ref{sec:guarantees} assumptions on it under which we can provide guarantees.

\paragraph{Stochastic processes, subgaussianity}

A \emph{family of data-generating processes} is a family $D=\{D_p\mid p\in\hypothesisSet\}$, where $D_p$ is an $\inputSet\times\outputSet$-valued process indexed by $\Np$.
Such a $D_p$ is called a \emph{data-generating process}.
If, additionally, $D_p := [(X_{p,n},Y_{p,n})]_{n\in\Np}$ satisfies\begin{equation}\label{eq:process of transition pairs}
    \Pbb[Y_{p,n}\in \cdot \mid X_{p,:n}, Y_{p,:n-1}] = p(\cdot, X_{p,n}),\quad\forall n\in\Np
\end{equation}
\ac{as}, we say that $D_p$ is a \emph{process of transition pairs}, and that $D$ is \emph{family of processes of transition pairs} if \eqref{eq:process of transition pairs} holds for all $p\in\hypothesisSet$.
Finally, we say that $D_p$ is a process of \emph{independent} transition pairs if \eqref{eq:process of transition pairs} holds and $(X_{p,n},Y_{p,n})_{n\in\Np}$ is an independent process, and similarly for $D$ itself.
For all $n\in\Np$, we introduce $\dataSet_n = (\inputSet\times\outputSet)^n$ and $\dataSet = \bigcup_{n=1}^\infty \dataSet_n$.
Elements of $\dataSet$ are called \emph{data sets}, and for $D:=[(x_n,y_n)]_{n\in\integers N} \in\dataSet$ we define $\len(D) = N$.
Finally, a core assumption of our results is subgaussian noise.
We consider the strengthening introduced in \citet{MS2023}.
In contrast to the reference, however, we do not require that the variance proxy is trace-class, as our results only require a weakening of that condition.
\begin{definition}
    Let $R\in\bLinearOperators(\outputSpace)$ be self-adjoint, positive semi-definite.
    We say that a $\outputSpace$-valued process $\eta = (\eta_n)_{n\in\Np}$ adapted to a filtration $\filtration = (\filtration_{n})_{n\in\Np}$ is $R$-subgaussian conditionally on $\filtration$ if \begin{equation}\label{eq:subgaussian process}
        \E\left[\exp\left(\scalar{g,\eta_n}_\outputSpace\right)\mid\filtration_{n}\right] \leq \exp\left(\norm{g}_R^2\right),\quad\forall g\in\outputSpace,\quad\forall n\in\Np,\quad\text{\ac{as}}
    \end{equation}
    If \eqref{eq:subgaussian process} holds with $R=\rho\,\id_\outputSpace$ for some $\rho\in\Rnn$, we say that $\eta$ is $\rho$-subgaussian conditionally on $\filtration$.
    Similarly, we say that a Markov kernel $p\in\hypothesisSet$ from $\inputSet$ to $\outputSet\subset\outputSpace$ is $R$-subgaussian if \begin{equation}\label{eq:subgaussian Markov kernel}
        \int_{\outputSet}\exp\left(\scalar{g,y-\E(p)(x)}_\outputSpace\right)p(\dd y,x) \leq \exp\left(\norm{g}_R^2\right),\quad\forall g\in\outputSpace,\quad\forall x\in\inputSet,
    \end{equation}
    and that it is $\rho$-subgaussian if \eqref{eq:subgaussian Markov kernel} holds with $R = \rho\,\id_\outputSpace$.
\end{definition}
\paragraph{Kernel ridge regression}\label{sec:preliminaries:rkhs}
We only provide a minimal treatment of \acp{RKHS} and \ac{KRR} and refer the reader to dedicated references \cite{CDVT2006,PR2016} for details.
\begin{definition}
    A $\outputSpace$-valued \ac{RKHS} $\hilbert H$ on $\inputSet$ is a Hilbert space $(\hilbert H, \scalar{\cdot,\cdot}_\hilbert H)$ of functions from $\inputSet$ to $\outputSpace$ such that for all $x\in\inputSet$, the evaluation operator $S_x:f\in\hilbert H\mapsto f(x)\in\outputSpace$ is bounded.
    Then, we define\footnote{We emphasize that we deviate from the usual convention as, formally, $K(\cdot,x)$ is \emph{not} defined as the function $x^\prime\in\inputSet\mapsto K(x^\prime,x):= S_{x^\prime}S_x^\star\in\bLinearOperators(\hilbert G)$, but as an element of $\bLinearOperators(\outputSpace,\hilbert H)$.} $K(\cdot,x) := S_x^\star$ and $K(x, x^\prime) = S_{x\vphantom{^\prime}}S_{x^\prime}^\star$, for all $x,x^\prime\in\inputSet$.
    The map $K:\inputSet\times\inputSet\to\bLinearOperators(\outputSpace)$ is called the (operator-valued) (reproducing) kernel of $\hilbert H$.
\end{definition}
\begin{theorem}
    Let $\hilbert H$ be a $\outputSpace$-valued \ac{RKHS} with kernel $K$.
    Then, $K$ is Hermitian, positive semi-definite\footnote{Recall that a bivariate function $\phi:\inputSet\times\inputSet\to\bLinearOperators(\outputSpace)$ is Hermitian if $\phi(x,x^\prime) = \phi(x^\prime, x)^\star$. It is positive semi-definite if for all $n\in\Np$, $(x_i)_{i=1}^n\in\inputSet^n$, and $(g_i)_{i=1}^n\in\outputSpace^n$, $\sum_{i=1}^n\sum_{j=1}^n\scalar{g_i, \phi(x_i,x_j)g_j}_\outputSpace \geq 0$.}, and the reproducing property holds for all $x\in\inputSet,\,f\in\hilbert H$, and $g\in\outputSpace$: $ \scalar{f(x), g}_\outputSpace = \scalar{f, K(\cdot, x)g}_\hilbert H$.
\end{theorem}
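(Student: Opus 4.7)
The statement has three parts: the reproducing property, the Hermitian property of $K$, and its positive semi-definiteness. The plan is to derive all three directly from the definition $K(\cdot,x) := S_x^\star$ and $K(x,x') = S_x S_{x'}^\star$, using only basic properties of Hilbert adjoints. Since $S_x \in \bLinearOperators(\hilbert H, \outputSpace)$ by the RKHS definition, its adjoint $S_x^\star \in \bLinearOperators(\outputSpace, \hilbert H)$ exists and is bounded, so that $K(\cdot,x)g \in \hilbert H$ for every $g\in\outputSpace$ and $K(x,x') \in \bLinearOperators(\outputSpace)$; this makes both expressions well-defined.

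First I would establish the reproducing property, since the other two follow more transparently once this identity is in hand. For arbitrary $x\in\inputSet$, $f\in\hilbert H$, and $g\in\outputSpace$, the definition of the adjoint yields
\begin{equation*}
    \scalar{f, K(\cdot,x)g}_\hilbert H = \scalar{f, S_x^\star g}_\hilbert H = \scalar{S_x f, g}_\outputSpace = \scalar{f(x), g}_\outputSpace,
\end{equation*}
which is exactly the reproducing identity.

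Next, the Hermitian property is immediate from the involutive nature of the adjoint: for any $x,x'\in\inputSet$,
\begin{equation*}
    K(x,x')^\star = (S_x S_{x'}^\star)^\star = S_{x'} S_x^\star = K(x',x),
\end{equation*}
using that $(AB)^\star = B^\star A^\star$ for bounded operators on Hilbert spaces. For positive semi-definiteness, given $n\in\Np$, $(x_i)_{i=1}^n \in\inputSet^n$, and $(g_i)_{i=1}^n\in\outputSpace^n$, I would rewrite the quadratic form as the squared norm of a single element of $\hilbert H$:
\begin{equation*}
    \sum_{i=1}^n\sum_{j=1}^n \scalar{g_i, K(x_i,x_j)g_j}_\outputSpace = \sum_{i,j}\scalar{g_i, S_{x_i} S_{x_j}^\star g_j}_\outputSpace = \sum_{i,j}\scalar{S_{x_i}^\star g_i, S_{x_j}^\star g_j}_\hilbert H = \left\lVert \sum_{i=1}^n S_{x_i}^\star g_i\right\rVert_\hilbert H^2 \geq 0.
\end{equation*}

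There is no serious obstacle here: the whole proof is essentially a repackaging of the identity $\scalar{Tu,v} = \scalar{u, T^\star v}$. The one subtlety worth flagging is that the notational convention declared in the footnote of the definition—treating $K(\cdot,x)$ as an element of $\bLinearOperators(\outputSpace,\hilbert H)$ rather than as a bivariate function—is what makes the adjoint manipulations above type-check without any further justification, so I would open the proof by explicitly citing this convention and the boundedness of $S_x$ to confirm that each object appearing in the three displays lives in the space expected.
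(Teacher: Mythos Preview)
Your proof is correct and is exactly the standard argument. The paper itself does not give a proof of this theorem: it is stated in the preliminaries as a known result, with the reader referred to dedicated references on vector-valued \ac{RKHS} theory (e.g., \citet{CDVT2006,PR2016}) for details. Your derivation---reproducing property from the defining adjoint identity $K(\cdot,x)=S_x^\star$, Hermitianity from $(S_xS_{x'}^\star)^\star=S_{x'}S_x^\star$, and positive semi-definiteness by collapsing the double sum to $\lVert\sum_i S_{x_i}^\star g_i\rVert_\hilbert H^2$---is precisely the argument found in those references.
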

It is well known that, for every Hermitian, positive semi-definite function $K:\inputSet\times\inputSet\to\bLinearOperators(\outputSpace)$, there exists a unique $\outputSpace$-valued \ac{RKHS} of which $K$ is the unique reproducing kernel.
The kernel thus fully characterizes the \ac{RKHS}, and we write $(\hilbert H, \scalar{\cdot,\cdot}_\hilbert H) =: (\rkhs K,\scalar{\cdot,\cdot}_K)$.
We say that a kernel is trace-class if $K(x,x^\prime)$ is, for all $(x,x^\prime)\in\inputSet$.
We assume throughout that kernels are strongly measurable.
We also introduce the following notion, which is new to the best of our knowledge and generalizes the common choice of a diagonal kernel, $K = k\cdot \id_\outputSpace$.
\begin{definition}\label{def:uniform-block-diagonal kernel}
    We say that the kernel $K$ of a $\outputSpace$-valued \ac{RKHS} $\rkhs K$ is \emph{\ac{UBD} (with isometry $\isometry_\outputSpace$)} if there exists separable Hilbert spaces $\tilde\outputSpace$ and $\hilbert V$, an isometry $\isometry_\outputSpace\in\bLinearOperators(\tilde\outputSpace\otimes\hilbert V,\outputSpace)$, and a kernel $\elementaryKernel:\inputSet\times\inputSet\to\bLinearOperators(\tilde\outputSpace)$ (called an \emph{elementary block}) such that \begin{equation*}
        K = \isometry_\outputSpace (\elementaryKernel\otimes\id_{\hilbert V})\isometry_\outputSpace^{-1}.
    \end{equation*}
    \end{definition}
In the following, we often assume that $\elementaryKernel$ is trace-class; this is weaker than $K$ itself being trace-class.
Informally, uniform-block-diagonality is equivalent to the existence of a base of $\outputSpace$ where, for all $(x,x^\prime)\in\inputSet^2$, the operator $K(x,x^\prime)$ has the block-diagonal, operator-valued matrix representation \begin{equation*}
    K(x,x^\prime) \approx \begin{pmatrix}
        \elementaryKernel(x,x^\prime) & 0 & \hdots \\
        0 & \elementaryKernel(x,x^\prime) & \ddots \\
        \vdots & \ddots & \ddots \\
    \end{pmatrix}
\end{equation*}
Finally, given a kernel $K$, data set $D=[(x_n,y_n)]_{n\in\integers N}\in\dataSet$, and regularization coefficient $\lambda \in\Rp$, we define $f_{D,\lambda}$ as the unique solution of the \ac{KRR} problem, which has an explicit form,
\begin{equation}\label{eq:krr estimator}\begin{split}
    f_{D,\lambda} &= \arg\min_{f\in\rkhs K}\sum_{n=1}^N\norm{y_n-f(x_n)}^2_\outputSpace + \lambda\norm{f}_K^2
    = M_D^\star(M_DM_D^\star+\lambda\id_{\outputSpace^N})^{-1}y_{:N},
\end{split}\end{equation}
where $M_D = \left(h\in\rkhs K \mapsto (h(x_n))_{n\in\integers N}\in\outputSpace^N\right)\in\bLinearOperators(\rkhs K,\outputSpace^N)$ is the sampling operator.

\section{Conditional testing}\label{sec:testing setup}
We formalize conditional testing and introduce guarantees on the covariate rejection region.

\paragraph{Setup and outcome}
The general goal of conditional testing is to decide based on data whether a specific statement about a conditional distribution holds at any user-specified conditioning location $x\in\inputSet$, which we call the \emph{covariate (parameter)}.
Formally, a \emph{(conditional) hypothesis} is a map $H:\inputSet\times\hypothesisSet\to\{0,1\}$ such that, for every $p\in\hypothesisSet$, the \emph{fulfillment region} of $H$ for $p$, defined as $\fulfillmentRegion_H(p) :=\{x\in\inputSet\mid H(x,p)=1\}$, is measurable.
Hypothesis testing then involves two hypotheses $H_0$ and $H_1$, called the \emph{null} and \emph{alternative} hypothesis, respectively. We assume throughout that $H_1(x,p) = \neg H_0(x,p)$, where $\neg$ denotes the logical negation.
A \emph{conditional test} is then a measurable map $\testMap: \inputSet\times\dataSet\to\{0,1\}$, interpreted as $\testMap(x,D) = 1$ if, and only if, we reject the null hypothesis $H_0$ at the covariate $x\in\inputSet$ based on the data $D\in\dataSet$.

\paragraph{Types of guarantees}
The outcome of a conditional test $\testMap$ is the function $\testMap(\cdot,D)$, which is equivalently characterized by the covariate rejection region.
\begin{definition}
The \emph{covariate rejection region} of a conditional test $\testMap$ is the subset of $\inputSet$ on which the null hypothesis is rejected based on the data. It is defined for all $D\in\dataSet$ as  \begin{equation*}
\covariateRejectionRegion(D)=\testMap(\cdot, D)^{-1}(\{1\})\in\sigAlgebra_\inputSet.
\end{equation*}
\end{definition}
We emphasize that this definition is finer-grained than most related approaches \cite{ZPJS2012,SP2020,LCK2024,HL2024,TF2019,BM2021,CNB2024}, which correspond to taking $\max_\inputSet \testMap(\cdot,D)$ for the outcome of the test.
A conditional test has a ``perfect'' outcome when the covariate rejection region $\covariateRejectionRegion(D)$ exactly recovers the complement of the fulfillment region of $H_0$, which satisfies $\fulfillmentRegion_{0}(p)^C = \fulfillmentRegion_{1}(p)$ and where we introduced the shorthands $\fulfillmentRegion_i := \fulfillmentRegion_{H_i}$, $i\in\{0,1\}$.
It follows that a test can make two ``types'' of errors at a covariate $x\in\inputSet$, corresponding respectively to whether the covariate belongs to the sets \begin{equation*}
        \errorRegion_\I(p,D)=\covariateRejectionRegion(D)\cap\fulfillmentRegion_0(p),\quad
        \text{or}\quad\errorRegion_\II(p,D) = \covariateRejectionRegion(D)^C\cap\fulfillmentRegion_1(p).
\end{equation*}
We call the first kind a \emph{type \I\ error} and the second kind a \emph{type \II\ error}; the sets $\errorRegion_\I(p,D)$ and $\errorRegion_\II(p,D)$ are called the \emph{error regions} of the corresponding type.
This terminology is consistent with the case of non-conditional hypothesis testing \cite{CB2002}; indeed, $\errorRegion_\I(p,D)$ contains the covariates where we incorrectly reject the null hypothesis, and $\errorRegion_\II(p,D)$ contains those where we fail to reject it.
Then, a ``guarantee'' is a probabilistic bound on the accuracy of $\covariateRejectionRegion(D)$; that is, on whether one of these regions intersects a region of interest at times of interest.
%
%
\begin{definition}\label{def:error functions}
    Let $D = (D_p)_{p\in\hypothesisSet}$ be a family of data-generating processes and $i\in\{\I,\II\}$. 
    The error function of type $i$ for $D$ is the function $\covariateError_i:\hypothesisSet\times\sigAlgebra_{\inputSet}\times2^\Np\to[0,1]$ defined as 
    \begin{equation*}
        \covariateError_i(p,\subsetOfInterest,\integersOfInterest) = \Pbb[\exists n\in\integersOfInterest, \subsetOfInterest\cap\errorRegion_i(p,D_{p,:n})\neq \emptyset],\quad\forall(p,\subsetOfInterest,\integersOfInterest)\in\hypothesisSet\times\sigAlgebra_{\inputSet}\times2^\Np
    \end{equation*}
    For $(\subsetOfInterest,\integersOfInterest)\in\sigAlgebra_{\inputSet}\times2^\Np$, we say that the test $\testMap$ has type $i$ $(\subsetOfInterest,\integersOfInterest)$-guarantee $\alpha\in(0,1)$ if $\sup_{p\in\hypothesisSet}\covariateError_i(p,\subsetOfInterest,\integersOfInterest)\leq \alpha$.
    A type $\I$ $(\subsetOfInterest,\integersOfInterest)$-guarantee is called an $(\subsetOfInterest,\integersOfInterest)$-level.
    Finally, if $\subsetOfInterest=\{x\}$ or $\integersOfInterest=\{n\}$, we abuse notation in the above definitions and replace the subset by its only element.
\end{definition}
We emphasize the difference between having an $(x,\cdot)$-guarantee for all $x\in\inputSet$ and an $(\inputSet,\cdot)$-guarantee.
The former allows using the test at a \emph{single} arbitrary covariate, whereas the latter guarantees the performance when using the test \emph{jointly} on arbitrary covariates and enables trusting the rejection region as a whole.
Similarly, $(\cdot,n)$-guarantees for all $n\in\Np$ allow for a one-time use of the test at an arbitrary time $n\in\Np$, whereas a $(\cdot,\Np)$-guarantee allows for sequential testing.
Our tests have $(\inputSet,\Np)$- or $(\inputSet,n)$-guarantees, depending on which assumptions are made.
\par
While the formal definitions of type \I\ and type \II\ error functions are symmetric, their \emph{reasonable targets} are not. 
Type \I\ errors are those we seek to control; their frequency must be less than the prescribed level everywhere in $\subsetOfInterest$, including in regions with little data.
A false rejection of $H_0$ must not be caused by lack of information. 
As a result, control of type \II\ errors is only achievable in regions of the covariate space where samples provide sufficient information (either through local sample mass or extrapolation enabled by the prior set).
In other words, the type \II\ error function can only be low where the data actually enable distinguishing the alternative from the null.

\section{Testing of conditional expectations and functionals}\label{sec:guarantees}
We specialize the preceding developments to \emph{tests of conditional expectations} with finite-sample guarantees.
We are interested in testing the null hypothesis\footnote{Formally, $H_0(x,p_1,p_2) = 1$ if, and only if, $\expectation(p_1)(x) = \expectation(p_2)(x)$.} \begin{equation}\label{eq:conditional expectations hypothesis}
    H_0(x,p_1,p_2) : \expectation(p_1)(x) = \expectation(p_2)(x) \quad\text{v.s.}\quad H_1(x,p_1,p_2) : \expectation(p_1)(x) \neq \expectation(p_2)(x),
\end{equation}
where $(p_1,p_2)\in\hypothesisSet_1\times\hypothesisSet_2$ and $\hypothesisSet_i$ is a set of Markov kernels from $\inputSet$ to $\outputSet$ with first moments, $i\in\{1,2\}$.
First, we provide an abstract result to go from \emph{confidence bounds} of a learning algorithm to a test of conditional expectations (Theorem \ref{thm:abstract level two sample test}).
We then specialize it to \ac{KRR} in the well-specified case, obtaining two tests (Theorem~\ref{thm:kernel conditional two sample tests}) with similar thresholds but differing guarantees and assumptions; they are based on new confidence bounds for \ac{KRR} which we present first (Theorem \ref{thm:confidence bounds KRR}).
We finally show how to specialize the test to functionals other than the expectation, allowing (among others) a \emph{conditional two-sample test}.
This last point is further developed in Appendix \ref{apdx:functional test}.
\begin{remark}
    The hypothesis \eqref{eq:conditional expectations hypothesis} can be equivalently reformulated into the setup of Section~\ref{sec:testing setup}, but we forego doing it explicitly for brevity.
    We emphasize, however, that we abuse notation and redefine the test $\testMap$, covariate rejection region $\covariateRejectionRegion$, error regions $\errorRegion_i$, and error functions $\covariateError_i$, $i\in\{\I,\II\}$, to take as inputs (where applicable) two data sets $D_1$ and $D_2$, Markov kernels $p_1$ and $p_2$, and sets of integers of interest $\integersOfInterest_1$ and $\integersOfInterest_2$. Specifically, they are now defined as
            $\covariateRejectionRegion(D_1,D_2) = \testMap(\cdot,D_1,D_2)^{-1}(\{1\})$, $\errorRegion_\I(p_1,p_2,D_1,D_2) = \covariateRejectionRegion(D_1,D_2)\cap \fulfillmentRegion_{0}(p_1,p_2)$, $\errorRegion_\II(p_1,p_2,D_1,D_2) = \covariateRejectionRegion(D_1,D_2)^C\cap \fulfillmentRegion_{1}(p_1,p_2)$, and \begin{equation*}
            \covariateError_i(p_1,p_2,\subsetOfInterest,\integersOfInterest_1,\integersOfInterest_2) = \Pbb\left[\exists (n_1,n_2)\in\integersOfInterest_1\times\integersOfInterest_2,\,\subsetOfInterest\cap\errorRegion_i(p_1,p_2,D_{p_1,:n_1}^\upar 1,D_{p_2,:n_2}^\upar 2)\neq\emptyset\right],
    \end{equation*}
    respectively, where $i\in\{\I,\II\}$ and $D^\upar j = (D^\upar j_p)_{p\in\hypothesisSet_j}$, $j\in\{1,2\}$, are the considered families of data-generating processes.
\end{remark}

\subsection{From confidence bounds to hypothesis testing}\label{sec:abstract result}
We now provide an abstract result that connects finite-sample confidence bounds of a learning algorithm to type \I\ guarantees for a conditional two-sample test.
For this, recall that a learning method $\learningMethod$ is a map from data sets $D\in\dataSet$ to functions $f_D:\inputSet\to\outputSet$.
\begin{theorem}\label{thm:abstract level two sample test}
    For $i\in\{1,2\}$, let $\learningMethod_i:D\in\dataSet\mapsto f_D^\upar i$ be a measurable\footnote{See \citet[Definition 6.2]{SC2008}.} learning method, 
    and $D^\upar i$ be a family of data-generating processes.
    For $\subsetOfInterest\in\sigAlgebra_\inputSet$, $\integersOfInterest_i\subseteq\Np$, and $\delta_i\in(0,1)$, let $B_i:=B_i^{(\delta_i,\subsetOfInterest,\integersOfInterest_i)}:\dataSet\times\inputSet\to\Rp$ be a measurable function such that for all $p\in\hypothesisSet_i$
    \begin{equation}\label{eq:confidence function}\begin{split}
        \Pbb\left[
            \forall(x,n)\in\subsetOfInterest\times\integersOfInterest_i,~
            \lrNorm{f_{D^\upar i_{p,:n}}^\upar i(x) - \expectation(p)(x)} \leq  B_i\left(D^\upar i_{p,:n},x\right)
        \right] \geq 1- \delta_i,\quad\forall p\in\hypothesisSet_i.
    \end{split}\end{equation}
    Let $\testMap:=\testMap_{(B_1,B_2)}$ be the test defined as \begin{equation}\label{eq:conditional two sample test}\testMap: (x,D_1,D_2)\in\inputSet\times\dataSet\times\dataSet\mapsto \begin{cases}
        1,&\text{if }
        \norm{f_{D_1}^\upar 1(x) - f_{D_2}^\upar 2(x)} > \sum_{i=1}^2 B_i\left(D_i,x\right),\\
        0,&\text{otherwise}.
        \end{cases}
    \end{equation}
    Then, $\testMap$ is a test of $H_0$ with $(\subsetOfInterest,\integersOfInterest_1,\integersOfInterest_2)$-level $\delta_1+\delta_2$ when used on the families $D^\upar 1$ and $D^\upar 2$.
\end{theorem}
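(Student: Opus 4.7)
The plan is a straightforward union-bound/triangle-inequality argument built around the interpretation of the confidence functions $B_i$ as simultaneous, uniform bounds over $\subsetOfInterest\times\integersOfInterest_i$. Since the conclusion is a statement about the type~$\I$ error, I fix arbitrary $(p_1,p_2)\in\hypothesisSet_1\times\hypothesisSet_2$ and aim to control $\covariateError_\I(p_1,p_2,\subsetOfInterest,\integersOfInterest_1,\integersOfInterest_2)$ by $\delta_1+\delta_2$, uniformly in $(p_1,p_2)$.

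First, I introduce the two ``good events''
\[
A_i := \left\{\forall(x,n)\in\subsetOfInterest\times\integersOfInterest_i,~\lrNorm{f^\upar i_{D^\upar i_{p_i,:n}}(x) - \expectation(p_i)(x)} \leq B_i(D^\upar i_{p_i,:n},x)\right\},\quad i\in\{1,2\},
\]
which live on the common probability space $(\Omega,\sigAlgebra,\Pbb)$ on which all the data-generating families are defined. By hypothesis \eqref{eq:confidence function}, $\Pbb[A_i^C]\leq \delta_i$, so the elementary union bound yields $\Pbb[A_1\cap A_2]\geq 1-\delta_1-\delta_2$. Note that no independence between $D^\upar 1$ and $D^\upar 2$ is required for this step; only measurability of the relevant maps (which is ensured by the hypothesis on $\learningMethod_i$, $B_i$, and on the kernels) matters, so the event $\{\covariateError_\I\ldots\}$ is $\sigAlgebra$-measurable.

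Next, I show that on $A_1\cap A_2$, no type~$\I$ error of the prescribed form occurs. Pick any $(x,n_1,n_2)\in\subsetOfInterest\times\integersOfInterest_1\times\integersOfInterest_2$ with $H_0(x,p_1,p_2)$ holding, i.e. $\expectation(p_1)(x)=\expectation(p_2)(x)$. The triangle inequality in $\outputSpace$ gives
\[
\lrNorm{f^\upar 1_{D^\upar 1_{p_1,:n_1}}(x) - f^\upar 2_{D^\upar 2_{p_2,:n_2}}(x)} \leq \sum_{i=1}^2 \lrNorm{f^\upar i_{D^\upar i_{p_i,:n_i}}(x) - \expectation(p_i)(x)} \leq B_1(D^\upar 1_{p_1,:n_1},x)+B_2(D^\upar 2_{p_2,:n_2},x),
\]
where the last inequality uses $A_1\cap A_2$. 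By the definition \eqref{eq:conditional two sample test} of $\testMap$, this means $\testMap(x,D^\upar 1_{p_1,:n_1},D^\upar 2_{p_2,:n_2})=0$, so $x\notin\covariateRejectionRegion(D^\upar 1_{p_1,:n_1},D^\upar 2_{p_2,:n_2})$. Hence $\subsetOfInterest\cap\errorRegion_\I(p_1,p_2,D^\upar 1_{p_1,:n_1},D^\upar 2_{p_2,:n_2})=\emptyset$ for every $(n_1,n_2)\in\integersOfInterest_1\times\integersOfInterest_2$.

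Consequently, the event $\{\exists(n_1,n_2)\in\integersOfInterest_1\times\integersOfInterest_2,\,\subsetOfInterest\cap\errorRegion_\I(p_1,p_2,D^\upar 1_{p_1,:n_1},D^\upar 2_{p_2,:n_2})\neq\emptyset\}$ is contained in $(A_1\cap A_2)^C$, giving $\covariateError_\I(p_1,p_2,\subsetOfInterest,\integersOfInterest_1,\integersOfInterest_2)\leq \delta_1+\delta_2$. Taking the supremum over $(p_1,p_2)\in\hypothesisSet_1\times\hypothesisSet_2$ yields the claimed level. The proof is essentially mechanical; the only delicate aspect worth double-checking is that the uniformity in $(x,n)\in\subsetOfInterest\times\integersOfInterest_i$ built into \eqref{eq:confidence function} is exactly what lets a single union-bound step cover simultaneously \emph{all} covariates \emph{and} all times for which the type~$\I$ $(\subsetOfInterest,\integersOfInterest_1,\integersOfInterest_2)$-guarantee must hold — so the strength of the statement is inherited almost verbatim from the uniformity of the input confidence bounds.
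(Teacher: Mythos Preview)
Your proof is correct and follows essentially the same approach as the paper: define the two high-probability ``good events'' where the confidence bounds hold, apply the triangle inequality together with $\expectation(p_1)(x)=\expectation(p_2)(x)$ to show that no type~\I\ error can occur on their intersection, and conclude via a union bound. The paper's version is written as a direct chain of probability inequalities rather than via explicit good events, but the argument is identical in substance.
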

The proof is in Appendix \ref{apdx:proof abstract level two sample test}.
In more intuitive terms, this theorem guarantees that if one has two learning methods that come with associated confidence bounds \eqref{eq:confidence function} for a specific data-generating process (given by the functions $B_i$) uniformly in $\hypothesisSet$, then one can leverage these intervals for a conditional two-sample test with (covariate-dependent) statistic $\norm{f_{D_1}^\upar 1(x) - f_{D_2}^\upar 2(x)}$.
This result is very natural, and its value resides in clarifying the simple relationship between estimation and testing.

\subsection{A kernel test of conditional expectations}\label{sec:kernel conditional two sample test}
We now turn our attention to a specific learning method for which we show that the assumptions of Theorem~\ref{thm:abstract level two sample test} hold under sufficient regularity of the elements of $\hypothesisSet$.
Specifically, we consider \ac{KRR} with kernel $K:\inputSet\times\inputSet\to\bLinearOperators(\outputSpace)$, as introduced in Section~\ref{sec:preliminaries:rkhs}.
The estimate $f_{D,\lambda}$ for a data set $D\in\dataSet$ and regularization coefficient $\lambda>0$ is given by \eqref{eq:krr estimator}.

\subsubsection{Confidence bounds for vector-valued KRR}

Section~\ref{sec:abstract result} has established that it suffices to provide confidence bounds for \ac{KRR} with $\outputSpace$-valued outputs.
We provide such bounds in the following result.
%
%
\begin{theorem}\label{thm:confidence bounds KRR}
    Let $\learningMethod:D\in\dataSet\mapsto f_D$ be \ac{KRR} with regularization coefficient $\lambda > 0$ and kernel $K:\inputSet\times\inputSet\to\bLinearOperators(\outputSpace)$.
    Let $p$ be a Markov kernel from $\inputSet$ to $\outputSpace$ such that $\expectation(p)\in\rkhs K$, $S\in\Rp$ be such that $\norm{\expectation[p]}_K\leq S$, and $R\in\bLinearOperators(\outputSpace)$ be self-adjoint and positive semi-definite.
    Let $D = [(X_n,Y_n)]_n$ be a process of transition pairs of $p$.
    Introduce the usual notations for evaluations of the kernel on sequences: \begin{equation}\label{eq:definition kernel matrices}\begin{split}
       K(\cdot, X_{:n}) &= \left((g_m)_{m\in\integers n}\in\outputSpace^n\mapsto \sum_{m=1}^n K(\cdot,X_m)g_m\in\rkhs K\right)\in\bLinearOperators(\outputSpace^n,\rkhs K),\\
       K(x,X_{:n}) &= K(\cdot,x)^\star K(\cdot,X_{:n}),\quad K(X_{:n},x) = K(x,X_{:n})^\star,\\
       K(X_{:n},X_{:n}) &= K(\cdot, X_{:n})^\star K(\cdot, X_{:n}),
       \end{split}\end{equation}
   for all $x\in\inputSet$ and $n\in\Np$, as well as the shorthands\begin{equation}\label{eq:definition variance}\begin{split}
        \Sigma_{{D_{:n}},\lambda}(x) &= K(x,x) - K(x,X_{:n})(K(X_{:n},X_{:n}) + \lambda \id_{\outputSpace^N})^{-1}K(X_{:n},x),\\
        \text{and}~\sigma_{D_{:n},\lambda}(x) &= \sqrt{\lrNorm{\Sigma_{D_{:n},\lambda}(x)}_{\bLinearOperators(\rkhs K)}}.
    \end{split}
    \end{equation}
    Assume that $p$ is $R$-subgaussian, and assume one of the following and define $\beta_\lambda$ and $\integersOfInterest$ accordingly: \begin{enumerate}
        \item \label{thm:confidence bounds KRR:online sampling} the kernel $K$ is \ac{UBD} with trace-class elementary block $\elementaryKernel$ and isometry $\isometry_\outputSpace\in\bLinearOperators(\tilde\outputSpace\otimes\hilbert V,\outputSpace)$, and there exist $\rho\in\Rp$ and $R_\hilbert V\in\bLinearOperators(\hilbert V)$ of trace class such that $R = \rho^2\isometry_\outputSpace(\id_{\tilde\outputSpace}\otimes R_\hilbert V)\isometry_\outputSpace^{-1}$. Then, define $\integersOfInterest = \Np$ and, for all $(n,\delta)\in\Np\times(0,1)$, \begin{equation}\label{eq:beta online sampling}
            \beta_{\lambda}(D_{:n},\delta) = S + \frac{\rho}{\sqrt\lambda}\sqrt{2\tr(R_\hilbert V)\ln\left[\frac1\delta\left\{\det\left(\id_{\tilde\outputSpace^n} + \frac1\lambda\elementaryKernel(X_{:n},X_{:n})\right)\right\}^{1/2}\right]};
        \end{equation}
        \item \label{thm:confidence bounds KRR:deterministic sampling} $R$ is of trace class, and $D$ is a process of \emph{independent} transition pairs of $p$.
        Then, define $\integersOfInterest = \{N\}$ for some $N\in\Np$ and, for all $(n,\delta)\in\Np\times (0,1)$, \begin{equation}\label{eq:beta deterministic sampling}
            \beta_{\lambda}(D_{:n},\lambda) = S+ \frac{1}{\sqrt\lambda}\sqrt{\tr(T_{D_{:n},\lambda}) + 2\sqrt{\ln\left(\frac1\delta\right)}\norm{T_{D_{:n},\lambda}}_2 + 2\ln\left(\frac1\delta\right)\norm{T_{D_{:n},\lambda}}_{\bLinearOperators(\hilbert H)}},
        \end{equation}
        where we introduced $T_{D_{:n},\lambda} = (V_{D_{:n}} + \lambda\id_{\rkhs K})^{-1/2} K(\cdot,X_{:n}) (R\otimes \id_{\R^n}) K(\cdot,X_{:n})^\star (V_{D_{:n}} + \lambda\id_{\rkhs K})^{-1/2}$, with $V_{D_{:n}} = K(\cdot,X_{:n})K(\cdot,X_{:n})^\star$.
    \end{enumerate}
    In both cases, for all $\delta\in(0,1)$, it holds that \begin{equation*}
        \Pbb\big[\forall n\in\integersOfInterest,\forall x\in\inputSet,\norm{f_{D_{:n}}(x) - \E[p](x)}_\outputSpace \leq \beta_{\lambda}(D_{:n},\delta)\cdot\sigma_{D_{:n},\lambda}(x)\big]\geq 1-\delta.
    \end{equation*}
\end{theorem}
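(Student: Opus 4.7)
My plan is to use the classical bias--noise decomposition of KRR and reduce the theorem to a single scalar concentration bound. Using the resolvent identity $(M_D^\star M_D+\lambda\id)^{-1}M_D^\star = M_D^\star(M_DM_D^\star+\lambda\id)^{-1}$ together with the identifications $M_D^\star = K(\cdot,X_{:n})$ and $V_{D_{:n}} = M_D^\star M_D$, I would write
\begin{equation*}
    f_{D_{:n}} - \E[p] = -\lambda(V_{D_{:n}}+\lambda\id_{\rkhs K})^{-1}\E[p] \;+\; (V_{D_{:n}}+\lambda\id_{\rkhs K})^{-1}K(\cdot,X_{:n})\eta_{:n},
\end{equation*}
with $\eta_n := Y_n - \E[p](X_n)$. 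Applying $S_x$ and the triangle inequality reduces the problem to two terms in $\outputSpace$. A dual pairing against a test vector $g\in\outputSpace$ followed by Cauchy--Schwarz in the $(V_{D_{:n}}+\lambda\id)^{-1/2}$-weighted $K$-inner product factors $\sqrt\lambda\norm{(V_{D_{:n}}+\lambda\id)^{-1/2}S_x^\star g}_K$ out of both terms; combined with the Woodbury identity $\lambda S_x(V_{D_{:n}}+\lambda\id)^{-1}S_x^\star = \Sigma_{D_{:n},\lambda}(x)$, this yields $\norm g\cdot\sigma_{D_{:n},\lambda}(x)$. For the bias, the remaining factor is $\sqrt\lambda\norm{(V_{D_{:n}}+\lambda\id)^{-1/2}\E[p]}_K \le \norm{\E[p]}_K \le S$, producing the $S\cdot\sigma_{D_{:n},\lambda}(x)$ contribution. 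For the noise, the remaining factor is $(1/\sqrt\lambda)\norm{(V_{D_{:n}}+\lambda\id)^{-1/2}K(\cdot,X_{:n})\eta_{:n}}_K$, which, crucially, no longer depends on $x$; supremizing over $\norm g \le 1$ yields uniformity in $\inputSet$ for free.

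The residual scalar $\norm{(V_{D_{:n}}+\lambda\id)^{-1/2}K(\cdot,X_{:n})\eta_{:n}}_K$ is where the two cases diverge, and each invokes the standalone vector-valued least-squares result (Theorem~\ref{thm:confidence bounds vv LS} in Appendix~\ref{apdx:confidence bounds vv LS}). In Case~\ref{thm:confidence bounds KRR:online sampling}, I would adapt the Abbasi-Yadkori method of mixtures by constructing the exponential process $M_n(g) := \exp\bigl(\scalar{g,\sum_{m\le n}K(\cdot,X_m)\eta_m}_K - \text{adapted quadratic}\bigr)$, verifying the supermartingale property with respect to $\filtration_n := \sigma(X_{:n},Y_{:n-1})$ via the strengthened subgaussianity~\eqref{eq:subgaussian process}, and integrating $g$ against an operator-valued Gaussian prior on $\rkhs K$ compatible with the UBD factorization $R = \rho^2\isometry_\outputSpace(\id_{\tilde\outputSpace}\otimes R_\hilbert V)\isometry_\outputSpace^{-1}$. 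The tensor structure decouples the $\tilde\outputSpace$ and $\hilbert V$ directions: trace-class of $R_\hilbert V$ makes the $\hilbert V$ integral finite and produces the $\tr(R_\hilbert V)$ prefactor, while trace-class of $\elementaryKernel$ yields the log-determinant of $\id+\elementaryKernel(X_{:n},X_{:n})/\lambda$ from a Gaussian integral in $\tilde\outputSpace^n$, and Ville's maximal inequality provides uniformity in $n\in\Np$, reproducing \eqref{eq:beta online sampling}. In Case~\ref{thm:confidence bounds KRR:deterministic sampling}, conditional on the (now fixed) $X_{:n}$, the vector $(V_{D_{:n}}+\lambda\id)^{-1/2}K(\cdot,X_{:n})\eta_{:n}$ is a sum of independent subgaussian vectors in $\rkhs K$ whose covariance is exactly $T_{D_{:n},\lambda}$; a Hanson--Wright-type inequality for squared norms of such sums yields the mean-spread-tail form of~\eqref{eq:beta deterministic sampling}, with trace-class of $R$ ensuring $\tr(T_{D_{:n},\lambda})<\infty$.

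I expect the main obstacle to be Case~\ref{thm:confidence bounds KRR:online sampling}: generalizing the scalar self-normalized martingale to \emph{infinite-dimensional} outputs with \emph{non-trace-class} $K$ requires the UBD structure to do actual work. Concretely, the Gaussian mixing measure must be lifted via $\isometry_\outputSpace$ to the tensor-product space $\rkhs\elementaryKernel\otimes\hilbert V$, and the resulting integral decomposed so that only the smaller, trace-class quantities $\elementaryKernel$ and $R_\hilbert V$ appear in the final bound. The existing result of \citet{CG2021} requires $K$ itself to be trace-class and fails in conditional mean embedding settings with non-trace-class output kernels; Definition~\ref{def:uniform-block-diagonal kernel} is tailored precisely to expose a trace-class component of $K$ against which the mixture argument can be run. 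Matching the form of the prior to the form of $R$ through the shared isometry is the delicate structural coincidence that makes both Gaussian integrals simultaneously finite and produces the precise constants in \eqref{eq:beta online sampling}.
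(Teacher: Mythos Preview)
Your proposal is correct and follows the paper's approach almost exactly: the bias--noise decomposition, the factorization of $\sigma_{D_{:n},\lambda}(x)$ via Cauchy--Schwarz in the $(V_{D_{:n}}+\lambda\id)^{-1}$-weighted inner product, and the reduction to the general vector-valued least-squares bound (Theorem~\ref{thm:confidence bounds vv LS}) are precisely what the paper does, and your identification of the Hanson--Wright route for Case~\ref{thm:confidence bounds KRR:deterministic sampling} and the method-of-mixtures/Ville route for Case~\ref{thm:confidence bounds KRR:online sampling} matches the paper's Lemma~\ref{lemma:concentration quadratic form MS2023} and Corollary~\ref{clry:confidence bounds vv LS:online sampling:corollary for KRR} respectively.

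The one technical point where your sketch diverges from the paper is the phrase ``both Gaussian integrals'': in the paper's double-mixture argument for Case~\ref{thm:confidence bounds KRR:online sampling}, only the $\rkhs\elementaryKernel$-mixture is Gaussian; the $\hilbert V$-mixture is a \emph{discrete} probability measure $\mu_Q$ supported on the unit sphere (constructed from the eigendecomposition of $R_\hilbert V$, Lemma~\ref{lemma:confidence bounds vv LS:online sampling:existence Q probability measure}), and the $\tr(R_\hilbert V)$ prefactor emerges via Jensen's inequality applied to that outer integral (Lemma~\ref{lemma:confidence bounds vv LS:online sampling:GN integral lower bound}). A Gaussian on $\hilbert V$ would leave an intractable $\norm{v}^2$-dependence in both the determinant and the exponent after the inner Gaussian integration, so the unit-sphere restriction is what actually makes the computation close; this is the ``delicate structural coincidence'' you correctly anticipate but do not quite pin down.
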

For the readers familiar with \ac{GP} regression \cite{RW2006,KHSS2018,kanagawa2025gaussian}, this result guarantees that the graph of the ground truth lies in a tube centered at the \ac{GP} posterior mean with width the posterior variance scaled by $\beta_{\lambda}$, with high probability.
The bound is \emph{frequentist}, however; the standard Bayesian assumption of a \ac{GP} prior is replaced by the one that the conditional expectation lies in the \ac{RKHS}.
This result follows from Theorem~\ref{thm:confidence bounds vv LS}, which provides confidence bounds for vector-valued least-squares (not necessarily in \iac{RKHS}) under general assumptions.
Theorem~\ref{thm:confidence bounds KRR} and its generalization Theorem~\ref{thm:confidence bounds vv LS} are --- together with the general setup of Section \ref{sec:testing setup} --- the main theoretical contributions of this work.
A detailed discussion of the connections between Theorems \ref{thm:confidence bounds KRR} and \ref{thm:confidence bounds vv LS} and similar bounds is available in Appendix \ref{apdx:connections}.
\subsubsection{Application to testing}
We can immediately apply the previous bounds to obtain kernel-based tests.
\begin{theorem}\label{thm:kernel conditional two sample tests}
    In the setup of Theorem~\ref{thm:abstract level two sample test} and for all $i\in\{1,2\}$, assume that $\learningMethod_i$ is \ac{KRR} with regularization coefficient $\lambda_i>0$ and kernel $K:\inputSet\times\inputSet\to\bLinearOperators(\outputSpace)$.
    Also assume that there exists $S_i>0$ and $R_i\in\bLinearOperators(\outputSpace)$, self-adjoint, positive semi-definite, such that any $p\in\hypothesisSet_i$ is $R_i$-subgaussian and $\expectation(p)\in\rkhs K$ with $\norm{\expectation(p)}_K \leq S_i$.
    Let $D^\upar i$ be a family of processes of transition pairs, and introduce the shorthands \eqref{eq:definition variance}.
    Let $\alpha_i\in(0,1)$.
    Assume one of the following and define $\beta_i$, $B_i$, and $\integersOfInterest_i$ accordingly:
    \begin{enumerate}[label=(\roman*)]
        \item \label{thm:asmptn:conditional independence uniform block diagonal} 
        The assumptions of Case~\ref{thm:confidence bounds KRR:online sampling} in Theorem~\ref{thm:confidence bounds KRR} hold for $R_i$ and $K$. 
        Then, define $\integersOfInterest_i = \Np$, and take $B_i(D,x) = \beta_{i}(D,\alpha_i)\cdot\sigma_{D,\lambda_i}(x)$, for all $(D,x)\in\dataSet\times\inputSet$, where 
        $\beta_i := \beta_{\lambda_i}$ is defined in \eqref{eq:beta online sampling} with the corresponding choice of parameters.
        \item \label{thm:asmptn:independence} 
        The assumptions of Case~\ref{thm:confidence bounds KRR:deterministic sampling} in Theorem~\ref{thm:confidence bounds KRR} hold for $R_i$ and $D^\upar i_p$ for all $p\in\hypothesisSet$.
        Then, define $\integersOfInterest_i = \{N_i\}$ for some $N_i\in\Np$, and take $B_i(D,x) = \beta_{i}(D,\alpha_i)\cdot\sigma_{D,\lambda_i}(x)$, for all $(D,x)\in\dataSet\times\inputSet$, where 
        $\beta_i := \beta_{\lambda_i}$ is defined in \eqref{eq:beta deterministic sampling} with the corresponding choice of parameters.
    \end{enumerate}
    Then, with this choice for $B_i$ in \eqref{eq:confidence function} for all $i\in\{1,2\}$, \eqref{eq:conditional two sample test} is a test of $H_0$ with $(\inputSet,\integersOfInterest_1,\integersOfInterest_2)$-level $\alpha_1 + \alpha_2$ when used on the families $D^\upar 1$ and $D^\upar 2$.
\end{theorem}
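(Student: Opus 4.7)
The plan is to observe that Theorem~\ref{thm:kernel conditional two sample tests} is essentially a direct composition of Theorem~\ref{thm:abstract level two sample test} with Theorem~\ref{thm:confidence bounds KRR}: the latter supplies, uniformly over $p\in\hypothesisSet_i$, a confidence function $B_i$ satisfying \eqref{eq:confidence function}, and the former converts a pair of such bounds into a two-sample test with the claimed level. So the proof proceeds by verifying the hypotheses of Theorem~\ref{thm:confidence bounds KRR} for each fixed $p\in\hypothesisSet_i$ and then invoking Theorem~\ref{thm:abstract level two sample test}.

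First, I would fix $i\in\{1,2\}$ and an arbitrary $p\in\hypothesisSet_i$. The assumptions on the prior set give $\E(p)\in\rkhs K$ with $\norm{\E(p)}_K\leq S_i$ and $R_i$-subgaussianity of $p$, and $D^\upar i$ being a family of processes of transition pairs gives that $D^\upar i_p$ is a process of transition pairs. Depending on whether \ref{thm:asmptn:conditional independence uniform block diagonal} or \ref{thm:asmptn:independence} is assumed, the kernel/noise structural assumptions of Case~\ref{thm:confidence bounds KRR:online sampling} or the independence and trace-class assumption of Case~\ref{thm:confidence bounds KRR:deterministic sampling} of Theorem~\ref{thm:confidence bounds KRR} hold verbatim. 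Applying that theorem with confidence parameter $\delta_i=\alpha_i$ yields, with probability at least $1-\alpha_i$,
\begin{equation*}
\forall n\in\integersOfInterest_i,\ \forall x\in\inputSet,\quad \lrNorm{f^\upar i_{D^\upar i_{p,:n}}(x) - \E(p)(x)}_\outputSpace \leq \beta_i(D^\upar i_{p,:n},\alpha_i)\,\sigma_{D^\upar i_{p,:n},\lambda_i}(x),
\end{equation*}
with $\integersOfInterest_i$ and $\beta_i$ as specified in the statement. Since $p$ was arbitrary in $\hypothesisSet_i$, this is exactly \eqref{eq:confidence function} with $\subsetOfInterest = \inputSet$, $\delta_i=\alpha_i$, and $B_i(D,x) = \beta_i(D,\alpha_i)\,\sigma_{D,\lambda_i}(x)$.

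The second step is a brief check that $B_i$ so defined is measurable in $(D,x)$, as required by Theorem~\ref{thm:abstract level two sample test}. This follows from measurability of the kernel (assumed throughout after the definition of $\rkhs K$), since $\beta_i$ is a continuous function of the Gram matrix $\elementaryKernel(X_{:n},X_{:n})$ or of the operator $T_{D_{:n},\lambda_i}$ (both built from kernel evaluations), and $\sigma_{D,\lambda_i}(x)$ is likewise built from kernel evaluations via \eqref{eq:definition variance}. With measurability in hand, Theorem~\ref{thm:abstract level two sample test} applied to $(B_1,B_2)$ on the families $D^\upar 1,D^\upar 2$ immediately yields the $(\inputSet,\integersOfInterest_1,\integersOfInterest_2)$-level $\alpha_1+\alpha_2$ for the test \eqref{eq:conditional two sample test}.

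I do not anticipate a genuine obstacle: the heavy lifting --- the anytime or fixed-time confidence bound for KRR under weak assumptions, and the union-bound argument combining two one-sided bounds into a two-sample level --- is already done in Theorems~\ref{thm:confidence bounds KRR} and \ref{thm:abstract level two sample test} respectively. The only mildly delicate point is ensuring, in Case~\ref{thm:asmptn:conditional independence uniform block diagonal}, that the time-uniform part of the bound (over $n\in\Np$) is preserved when the two confidence events are intersected via the union bound; this is automatic because Theorem~\ref{thm:abstract level two sample test} is stated directly with arbitrary $\integersOfInterest_i$ and $\subsetOfInterest$. The result then follows with no further computation.
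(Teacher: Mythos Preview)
Your proposal is correct and matches the paper's approach: the theorem is an immediate composition of Theorem~\ref{thm:confidence bounds KRR} (providing, for each $p\in\hypothesisSet_i$, the confidence function $B_i$ satisfying \eqref{eq:confidence function}) with Theorem~\ref{thm:abstract level two sample test}, and the paper treats it as such without spelling out a separate proof. Your added remarks on measurability and on the preservation of time-uniformity under the union bound are fine and make explicit details the paper leaves implicit.
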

This result formalizes the natural idea of comparing the norm difference between outcomes of \ac{KRR} ran on both data sets to the sum of the (appropriately scaled) posterior variances of the associated Gaussian processes.
The test is not fully distribution-free \cite{GBR2012} since it requires subgaussian Markov kernels, but allows data generated online.
It also allows \emph{pointwise} testing with guarantees --- regardless of the input data distribution --- thanks to the \ac{RKHS} membership assumption.
In practice, we leverage the bootstrapping schemes described in Appendix \ref{apdx:bootstrapping} to avoid tuning $\beta_i$.

\begin{figure}
    \centering
    \includegraphics{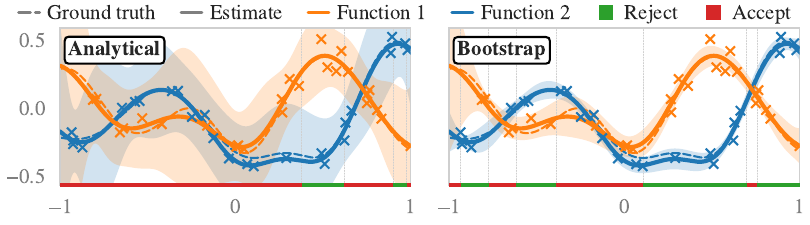}
    \caption{Analytical and bootstrapped tests on a toy example.
    The tests reject $H_0$ (green thick line) when the confidence intervals (shaded regions) do not overlap, and accept it (red thick line) otherwise.}
    \label{fig:example_1d}
\end{figure}

\subsection{From conditional expectations to functionals}\label{sec:functional test}

We now consider the announced case where we are interested in testing for equality of a functional $\functional$ of the conditional distributions; that is, we are interested in a test of the null hypothesis \begin{equation}\label{eq:functional hypothesis}
    H_0(x,p_1,p_2): \functional(p_1(\cdot,x)) = \functional(p_2(\cdot,x))\quad\text{v.s.}\quad H_1(x,p_1,p_2): \functional(p_1(\cdot,x)) \neq \functional(p_2(\cdot,x)).
\end{equation}
Examples involve $\functional$ being moments of the distributions, or even the identity.
Our core idea is to make \eqref{eq:functional hypothesis} amenable to the abstract test of Theorem \ref{thm:abstract level two sample test} by reformulating it as a test of conditional expectations.
Specifically, we make the following assumption on $\functional$.
For consistency, we assume in this section that $p_1$ and $p_2$ are Markov kernels from $(\inputSet,\sigAlgebra_\inputSet)$ to a measurable set $(\measurementSet,\sigAlgebra_\measurementSet)$ which we call the \emph{measurement set}, instead of mapping to $\outputSet$ directly.
\begin{assumption}\label{assumption:functional assumption}
    There exists a separable Hilbert space $\outputSpace$ and a map $\featureMap_\functional: \measurementSet\to\outputSpace$ called a \emph{representation map} of $\functional$ such that $\outputSet:=\featureMap_\functional(\measurementSet)\subset\outputSpace$ is closed, $\featureMap_\functional$ is Bochner-integrable w.r.t.\ any probability measure on $\measurementSet$, and \begin{equation*}
        \forall (P,Q)\in\probaMeasures(\measurementSet)^2,\left(\functional(P) = \functional(Q) \quad\iff\quad \int_\measurementSet \featureMap_\functional(z)\dd P(z) = \int_\measurementSet \featureMap_\functional(z)\dd Q(z)\right).
    \end{equation*}
\end{assumption}
Under Assumption \ref{assumption:functional assumption}, \eqref{eq:functional hypothesis} reduces to \eqref{eq:conditional expectations hypothesis} where the Markov kernels involved are the pushforward kernels of $p_1$ and $p_2$ through $\featureMap_\functional$, which we denote with $q_1$ and $q_2$.
Crucially, if a process of transition pairs $D^\upar i := ((X_n, Z_n))_{n\in\Np}$ of $p_i$ is available, the modified process $D_{\featureMap_\functional}^\upar i := ((X_n, \featureMap_\functional(Z_n)))_{n\in\Np}$ is of transition pairs of $q_i$.
In other words, any learning method with guarantees for $\outputSpace$-valued outputs can be used to test \eqref{eq:functional hypothesis} via the data transformation $Y_n = \featureMap_\functional(Z_n)$.
\par
Assumption \ref{assumption:functional assumption} is satisfied for many functionals of common interest, such as the ones extracting the moments of distributions or the identity --- the latter makes \eqref{eq:functional hypothesis} a \emph{conditional two-sample test}.
The test of Theorem \ref{thm:kernel conditional two sample tests} remains tractable even with infinite-dimensional $\outputSpace$: its complexity is exactly that of \ac{KRR}.
Furthermore, its assumptions such as subgaussianity are now on the pushforward kernels $q_i$ instead of the kernels $p_i$, $i\in\{1,2\}$.
We detail this and provide numerical examples in Appendix \ref{apdx:functional test}.

\section{Numerical results} \label{sec:experiments}
\begin{figure}[t]
    \centering
    \includegraphics{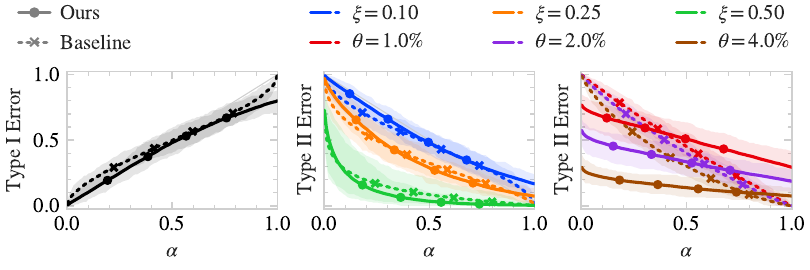}
    \caption{Empirical type \I\ or \II\ errors. \textbf{Left:} Both tests uphold and achieve the level under $H_0$. \textbf{Middle:} 
    Larger difference $\xi$ between conditional expectations decrease the type \II\ error. \textbf{Right:} Our local test can confidently detect a difference in rarely (probability $\theta$) sampled regions.}
    \label{fig:empirical errors}
\end{figure}
We begin with an example illustrating the different components.
We then evaluate performance (type \I\ and \II\ errors) in controlled settings compared to the baseline of \citet{HL2024}.
Next, we illustrate benefits of our test and its pointwise answers compared to global conditional tests, thanks to the covariate rejection region and the fact that we obtain lower type \II\ error when the tested functions differ only in rarely-sampled covariate regions.
Finally, we showcase an application on change detection for a linear dynamical system.
We remind the reader that Appendices \ref{apdx:functional test} and \ref{apdx:bootstrapping} contain further numerical studies respectively comparing more general functionals (such as the two-sample one) and investigating our bootstrapping schemes.
Finally, we emphasize that the purpose of this numerical study is \emph{not} to establish dominance over other methods, but to investigate numerically how the framework works in well-understood scenarios.
Additional information on experiments, hyperparameters, and the link to the code are in Appendix~\ref{apdx:hyperparameters}.

\paragraph{Metrics and setup}
We refer below to ``type \I\ or \II\ errors'' and ``positive rates'' without referring to the corresponding triple $(\subsetOfInterest,\integersOfInterest_1,\integersOfInterest_2)$.
They are meant as $(\subsetOfInterest,n,n)$-errors of type \I\ or \II\, and as the positive rate of the test triggering anywhere in the subset of interest $\subsetOfInterest$ with data sets of lengths $n\in\Np$.
We approximate triggering anywhere in $\subsetOfInterest$ by evaluating the test at the locations present in the data set.
A trigger is a type \II\ error if it occurs in regions where $H_0$ is not enforced.
We say we ``pick'' a function when we randomly choose it from the unit sphere of the \ac{RKHS} $\rkhs K$ on $\inputSet$.
Except in the last experiment, $K$ is the scalar Gaussian kernel with bandwidth $\gamma^2 = 0.25$ \cite{MFSS2017}.
Unless stated, data sets have $100$ uniformly-sampled points, and all results use the ``naive'' bootstrap (Algorithm \ref{alg:bootstrapping}).

\paragraph{Illustrative example} 
With $\inputSet = [-1,1]$, we pick two functions taking the role of conditional means to compare and generate two data sets of $25$ noisy measurements.
Figure~\ref{fig:example_1d} shows the outcome of \ac{KRR} and of the test with analytical (left) and bootstrapped (right) confidence intervals ($\alpha=0.05$).
The test rejects the null hypothesis exactly where the confidence intervals do not overlap.
The bootstrapped intervals are much tighter, yielding higher power.
Though we do not guarantee it, we find empirically that the true functions lie in the bootstrapped shaded region.

\paragraph{Empirical type $\I$ and $\II$ errors} With $\inputSet = [-1,1]^2$, we pick $f_1$ and $f_2$, with $\norm{f_2}_K = \xi$.
We perform the boostrapped test between noisy data sets twice: first, with both sets generated from $f_1$ (evaluating the type \I\ error), and then with one set generated from  $f_1$ and the other from $f_1+f_2$ (evaluating the type \II\ error).
Figure \ref{fig:empirical errors} (left, middle) shows the empirical type \I\ and \II\ errors as functions of $\alpha$ and $\xi$, as well as the baseline \cite{HL2024} parameterized by density estimators using the same \ac{KRR} estimate as ours, together with the ground truth noise distribution.
We uphold the type \I\ level everywhere, have similar type \I\ and \II\ errors as the baseline \emph{without} informing our test of the noise distribution,
and achieve decreasing type \II\ errors with increasing function difference.
\begin{figure}[t]
    \centering
    \includegraphics{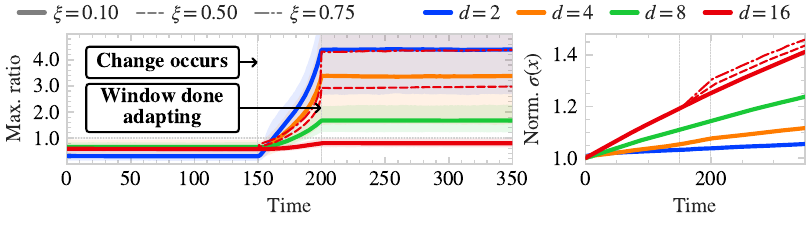}
    \caption{\textbf{Left:} Ratio between test statistic and threshold for the process monitoring example for various dimensions ($d$) and perturbation magnitudes ($\xi$).
    Triggers correspond to exceeding $1$.
    Solid lines are means; shaded regions are $5\%$ and $95\%$ quantiles.
    \textbf{Right:} Ratio of $\sigma_{D,\lambda}(x)$ in \eqref{eq:definition variance} on the reference data $D$ between $x$ in the window and the initial state at $t=0$, averaged over the window. 
    }
    \label{fig:monitoring}
\end{figure}

\paragraph{Local sensitivity} We now set $\inputSet = [-3,3]^2$, pick $f_1$, and  set $f_2(\cdot) = k(\cdot, 0)$.
We generate data from $f_1$ and $f_1 + f_2$, which differ on $\inputSet_\mathrm{diff} \coloneqq \{x \in \inputSet \mid f_2(x) \ge 10^{-2}\}$ and (approximately) coincide on $\inputSet_\mathrm{same} \coloneqq \inputSet \setminus \inputSet_\mathrm{diff}$.
Data is sampled via a mixture of uniform distributions on $\inputSet_\mathrm{same}$ and on $\inputSet_\mathrm{diff}$.
Figure \ref{fig:empirical errors} (right) shows the type \II\ error as a function of $\theta$, the weight of $\inputSet_\mathrm{diff}$ in the mixture.
Densities are not estimated but set to their ground truth for the baseline \cite{HL2024}.
Still, our test achieves higher power for low values of $\theta$.
This directly results from the locality of our tests (whereas the baseline computes one aggregated statistic), which is enabled by the local guarantees of learning-theoretic bounds.

\paragraph{Process monitoring}
We illustrate how the test allows dependent sampling by testing on the trajectories of a linear dynamical system on $\inputSet = \R^d$; the goal is to detect a change in the dynamics.
We consider a fixed reference data set of the transition pairs of $5$ trajectories of length $400$ with a fixed initial state.
We compare it to a sliding window of length $50$ of the transition pairs of the current trajectory by performing the test at every point in the window.
We perturb the dynamics after $200$ steps with a magnitude controlled monotonically by a scalar parameter $\xi$.
The results are shown in Fig.~\ref{fig:monitoring}.
We successfully detect the change reliably, even before the window is completely filled with data from the perturbed dynamics.
This supports that the test performs well despite correlations in sampling locations.
We hypothesize based on Fig.\ \ref{fig:monitoring} (right) and the interpretation of $\sigma_{D,\lambda}(x)$ as \iac{GP} posterior variance \cite{KHSS2018} that the degraded performance for a fixed $\xi$ as $d$ increases is mainly due to the perturbed dynamics driving the system in regions underexplored before the perturbation.

\section{Conclusion}
We rigorously introduce a framework for testing conditional distributions with local guarantees.
The framework is broadly applicable and provides a general recipe for turning learning-theoretic bounds into statistical tests.
We instantiate it for conditional two-sample testing with \ac{KRR}, enabling the comparison of not only expectations but other functionals such as  arbitrary moments or full conditional distributions.
The guarantees are based on a generalization we show of popular confidence bounds for \ac{KRR} and regularized least squares to accommodate \ac{UBD} kernels. 
To our knowledge, this is the first generalization to support such a broad class of possibly non-trace-class kernels.
We present a complete pipeline from conceptual foundations and theoretical analysis to an algorithmic implementation, and validate the approach through numerical experiments.
Our framework offers a principled foundation for developing powerful, flexible, and theoretically grounded conditional tests, laying a basis for a wide range of future applications or theoretical investigations.
\FloatBarrier
\section*{Acknowledgment}
Friedrich Solowjow is supported by the German federal state of North Rhine-Westphalia through the KI-Starter grant.
Christian Fiedler acknowledges support from DFG Project FO 767/10-2 (eBer-24-32734) "Implicit Bias in Adversarial Training".

\bibliography{references}
\bibliographystyle{unsrtnat}

\clearpage
\appendix
\onecolumn

\section{Elaboration on functional tests}\label{apdx:functional test}
The goal of this section is to expand on the observation of Section \ref{sec:functional test} that Theorem \ref{thm:abstract level two sample test} provides a test for the generalized hypothesis $H_0$ defined in \eqref{eq:functional hypothesis} for any functional $\functional$ satisfying Assumption \ref{assumption:functional assumption}.
Specifically, we show and explain the following: \begin{enumerate}
    \item There are many functionals $\functional$ that satisfy Assumption~\ref{assumption:functional assumption}.
    \item There is a general catalog of representation maps $\featureMap_\functional$ of functionals capturing specific properties of interest of the Markov kernels --- such as moments or full distributions. The catalog consists of kernel functions, and the properties are the ones captured by the corresponding \acp{KME}. This is summarized in Observation \ref{observation:functional representation map can be kernel}.
    \item Using \ac{KRR} with a diagonal kernel $K = k\cdot\id_\outputSpace$ as the underlying learning algorithm enables computing of the thresholds and statistics even when the space $\outputSpace$ is infinite-dimensional.
    This last point is the reason why it is essential that Theorem \ref{thm:confidence bounds KRR} applies to infinite-dimensional outputs with non-trace-class kernels.
\end{enumerate}
Together, these explanations show that we can specialize the test of conditional expectations into a test of more complex functionals of Markov kernels --- including a conditional two-sample test --- while preserving computational feasibility.
Indeed, the resulting computational cost is equivalent to that of \ac{KRR} (with the chosen bootstrapping scheme, if thresholds are boostrapped).
We conclude this section with a numerical study highlighting the effectiveness of the test for different functionals.

We now study whether there are many functionals of interest that satisfy Assumption~\ref{assumption:functional assumption}, and how to pick or find the corresponding map $\featureMap_\functional$.
We begin with an example exhibiting a suitable map $\featureMap_\functional$ for the functional $\functional$ extracting the conditional expectation and variance, and see how to leverage \acp{KME} of distributions to capture a large class of functionals with explicit maps $\featureMap_\functional$, as it is required in the test procedure.
Then, we explain how using \ac{KRR} with a diagonal kernel together with this idea yields the popular framework of \acp{CME} \cite{MFSS2017}, enabling tractable computations even when $\outputSpace$ is infinite-dimensional.
We conclude this section with a numerical study highlighting how different choices of $\functional$ and $\featureMap_\functional$ affect the test outcome.

\subsection{Example: conditional expectation and variance}
We begin with an example exhibiting a suitable representation map $\featureMap_\functional$ of the functional $\functional$ extracting the conditional expectation and variance.
Assume that $\measurementSet\subset\R$, and that $p_1(\cdot,x)$ and $p_2(\cdot,x)$ have well-defined first and second moments for all $x\in\inputSet$.
This is for instance the case if $\measurementSet$ is bounded.
Introduce for $i\in\{1,2\}$ the first two conditional moments: \begin{align*}
    \mu^\upar i_1(x) = \E_{Z\sim p_i(\cdot,x)}[Z],\quad\text{and}\quad
    \mu^\upar i_2(x) = \E_{Z\sim p_i(\cdot,x)}[Z^2],
\end{align*}
for all $x\in\inputSet$.
Now define the functional $\functional$ that maps a measure to the vector of its first two moments.
For $i\in\{1,2\}$, we have \begin{align*}
    \functional(p_i(\cdot,x)) &:= \begin{pmatrix}
            \mu^\upar i_1(x)\\
            \mu^\upar i_2(x)
        \end{pmatrix}\\
        &= \begin{pmatrix}
            \E_{Z\sim p_i(\cdot,x)}[Z]\\
            \E_{Z\sim p_i(\cdot,x)}[Z^2]
        \end{pmatrix}\\
        &= \begin{pmatrix}
            \int_\measurementSet z p_i(\dd z,x)\\
            \int_\measurementSet z^2 p_i(\dd z,x)
        \end{pmatrix}\\
        &= \int_\measurementSet\featureMap_\functional(z) p_i(\dd z,x),
\end{align*}
where we introduced $\featureMap_\functional: z\in\measurementSet\mapsto \begin{pmatrix}
    z & z^2
\end{pmatrix}^\top$.
This shows that the functional $\functional$ satisfies Assumption \ref{assumption:functional assumption} with $\outputSpace = \R^2$.
\par
Let us now spell out the method summarized in Section \ref{sec:functional test} to obtain a test of equality of the first two moments.
Assuming that one has access to processes of transition pairs $D^\upar i = ((X_n^\upar i, Z_n^\upar i))_{n\in\Np}$, replace them with the processes \begin{equation*}
    D^\upar i_{\featureMap_\functional} = ((X_n^\upar i, Y^\upar i_n))_{n\in\Np} := ((X_n^\upar i, \featureMap_\functional(Z_n^\upar i)))_{n\in\Np} = \left(\left(X_n^\upar i, \begin{pmatrix}
        Z^\upar i_n\\
        {Z^\upar i_n}^2
    \end{pmatrix}\right)\right)_{n\in\Np},
\end{equation*}
and use a learning algorithm to learn the map from $(X_n^\upar i)_n$ to $(Y_n^\upar i)_n$ for each $i\in\{1,2\}$ (formally, the learning algorithm should seek to approximate the conditional expectation function of $Y_n^\upar i$ conditioned on $X^\upar i_n$, which we assume is independent of $n$).
As a consequence of the data augmentation step $y = \featureMap_\functional(z)$, notice that the output of this learning algorithm should be $2$-dimensional.
Theorem \ref{thm:abstract level two sample test} then guarantees that comparing the distance between the predictions of the two models to the sum of confidence bounds on the models' accuracy provides a test with prescribed level of the hypothesis that the first and second moments of $p_1$ and $p_2$ coincide.

\subsection{Generalization via kernel mean embeddings of distributions}

We now generalize the ideas of the above example to \emph{find} appropriate representation maps $\featureMap_\functional$ of functionals of interest, as the testing procedure requires knowing these maps.
The generalization is based on the observation that, for a rich class of functions $\featureMap:\measurementSet\to\outputSpace$, the integrals involved in Assumption \ref{assumption:functional assumption} are well-studied objects and capture specific properties of the underlying distributions.
Furthermore, we argue that \emph{any} functional $\functional$ satisfying Assumption \ref{assumption:functional assumption} has a feature map belonging to the aforementioned class (Proposition \ref{prop:functional representation map can be kernel}), justifying restricting our interest to this class.
\par
Let us begin with this second point and assume that we have a map $\featureMap:\measurementSet\to\outputSpace$ available, where $\outputSpace$ is a Hilbert space, and let $\functional$ be a functional of which $\featureMap$ is a representation map.
Another name for such a $\featureMap$ is a \emph{feature map} on $\measurementSet$, as it can be thought of as extracting information from a point $z\in\measurementSet$ and representing it in $\outputSpace$.
Readers familiar with \ac{RKHS} theory may now recognize that the pair $(\featureMap,\outputSpace)$ is a feature map/feature space pair, and it defines a unique \ac{RKHS} on $\measurementSet$ via the usual canonical isomorphism between a feature space and the associated \ac{RKHS} \cite[Theorem\,4.21]{SC2008}.
In more practical terms, the function $\featureMap$ defines a kernel \begin{equation*}
    \kappa (z,z^\prime) = \scalar{\featureMap(z),\featureMap(z^\prime)}_\outputSpace,
\end{equation*}
and its \ac{RKHS} $\rkhs \kappa$ is isometrically isomorphic to the closed span of the family $\{\featureMap(z)\mid z\in\measurementSet\}$ in $\outputSpace$.
It follows immediately that the map \begin{equation*}
    \featureMap_\kappa: z\in\measurementSet\mapsto \kappa(\cdot,z)\in\rkhs \kappa
\end{equation*}
also is a representation map of the \emph{same} functional $\functional$.
Summarizing, it is always possible to pick $\outputSpace$ to be \iac{RKHS} in Assumption \ref{assumption:functional assumption}, and $\featureMap$ the associated canonical feature map.
We formalize this in the following result.
\begin{proposition}\label{prop:functional representation map can be kernel}
    Let $\functional$ be a functional satisfying Assumption \ref{assumption:functional assumption}, and denote by $\featureMap_0:\measurementSet\to\outputSpace_0$ a representation map of $\functional$.
    Let \begin{equation*}
        \kappa:(z,z^\prime)\in\measurementSet\mapsto \scalar{\featureMap_0(z),\featureMap_0(z^\prime)}_{\outputSpace_0}.
    \end{equation*}
    Then, $\kappa$ is a kernel, and the map \begin{equation}\label{eq:canonical feature map}
        \featureMap_\kappa:z\in\measurementSet\mapsto \kappa(\cdot,z)\in\rkhs\kappa
    \end{equation}
    also satisfies Assumption \ref{assumption:functional assumption} for $\functional$.
\end{proposition}

\paragraph{Consequences}
Given a functional $\functional$ satisfying Assumption \ref{assumption:functional assumption}, let us take a kernel function $\kappa$ such that $\featureMap_\kappa$ is a representation map of $\functional$.
The assumption guarantees that, for any $P,Q\in\probaMeasures(\measurementSet)$, we have $\functional(P) = \functional(Q)$ if, and only if, \begin{equation}\label{eq:equality of KMEs}
    \int_\measurementSet\kappa(\cdot,z)\dd P(z) = \int_\measurementSet\kappa(\cdot,z)\dd Q(z),
\end{equation}
where the Bochner integrals are assumed to exist.
We can immediately recognize that \eqref{eq:equality of KMEs} is an equality between the \acp{KME} of the measures $P$ and $Q$ for the kernel $\kappa$.
Introducing the \ac{KME} map \begin{equation*}
    \kme_\kappa:M\in\probaMeasures(\measurementSet)\mapsto \int_\measurementSet\kappa(\cdot,z)\dd M(z),
\end{equation*}
the condition \eqref{eq:equality of KMEs} simply rewrites as $\kme_\kappa(P) = \kme_\kappa(Q)$.
This observation is essential in practice, as the question of what information about a distribution is captured by its \ac{KME} is a well-studied area of research \cite{MFSS2017}.
For instance, the inhomogeneous polynomial kernel of order $m\in\Np$ captures the first $m$ moments of a distribution; the exponential kernel captures the moment-generating function; and the Gaussian kernel captures the full distribution as it is characteristic \cite[Sections 3.1.1 and 3.3.1]{MFSS2017}.
In other words --- and this is the concluding observation of this section:
\begin{observation}[Picking $\featureMap$ in Assumption \ref{assumption:functional assumption}]\label{observation:functional representation map can be kernel}
    When the map $\featureMap$ is chosen as the partial evaluation of a kernel function $\kappa$ as in \eqref{eq:canonical feature map}, the corresponding hypothesis \eqref{eq:functional hypothesis} asserts the equality of the corresponding \acp{KME} of the conditional distributions.
    Therefore, the properties being tested are directly the ones these \acp{KME} capture, and they depend on the choice of the kernel function $\kappa$.
\end{observation}

\subsection{The special case of \texorpdfstring{\ac{KRR}}{KRR} with a diagonal kernel: \texorpdfstring{\acp{CME}}{CMEs} and their implementation}

The challenge when following Observation \ref{observation:functional representation map can be kernel} and picking the representation map to be a kernel canonical feature map is that the data is now composed of functions; specifically, the data sets have the form $((X_n^\upar i, \kappa(\cdot, Z_n^\upar i)))_n$, and the model output should be an element of $\rkhs \kappa$.
This may be impossible to handle for some learning methods; e.g., feedforward neural networks, which need an output of fixed, finite dimension.
Furthermore, even when it is possible to train and evaluate the algorithm with such data, computing the test statistic or threshold in \eqref{eq:conditional two sample test} may still reveal intractable.
\par
Fortunately, both of these concerns are void for \ac{KRR} with a diagonal kernel $K = k\cdot\id_{\rkhs\kappa}$ (with $k$ a scalar kernel): it accepts functional data seamlessly, and the computations are tractable.
As a matter of fact, \ac{KRR} with a diagonal kernel and such data recovers exactly the popular framework of \acp{CME} \cite{GLB2012}.
\par
Before proceeding, we emphasize that we now have \emph{two distinct kernel functions}: the kernel $K$, which is used to perform \ac{KRR}, and the kernel $\kappa$, which is chosen depending on the functional we wish to test for in \eqref{eq:functional hypothesis} accordingly to the developments of the previous section.
The two are connected by the relation $K = k\cdot\id_{\rkhs\kappa}$, where $k$ is a freely-chosen scalar kernel.

\paragraph{Computation of statistic and thresholds}
We are interested in evaluating the statistic and threshold involved in \eqref{eq:conditional two sample test}.
The computations to achieve this are commonly known \cite[Section 4.1.2]{MFSS2017}, and we report them here for completeness.
\par
A first observation is that the quantities involved in the threshold --- specifically, those defined in \cref{eq:definition kernel matrices,eq:definition variance,eq:beta deterministic sampling,eq:beta online sampling} --- only involve the kernel $K$.
This kernel indirectly depends on $\kappa$ since $K$ takes values in $\bLinearOperators(\rkhs\kappa)$, but we assume that we are able to compute the quantities involved for $K$.
In particular, under the \ac{UBD} assumption, this assumption directly translates to the feasibility of these computations for $\elementaryKernel$, which is trivially satisfied if $\tilde\outputSpace$ is finite-dimensional.
This is the case in the standard case where $K = k\cdot\id_{\rkhs\kappa}$, since $\tilde\outputSpace$ is simply $\R$.
\par
Next is the computation of the statistic, $
    \norm{f_{D_1,\lambda_1}(x) - f_{D_2,\lambda_2}(x)}_{\rkhs\kappa}$, $x\in\inputSet$.
Importantly, for every $x\in\inputSet$, ignoring the subscripts $_1$ and $_2$ indexing over the two processes at hand for readability,
\begin{equation*}
    f_{D,\lambda}(x) 
        =  K(x,X_{:N})(K(X_{:N}, X_{:N}) + \lambda\id_{\rkhs\kappa^N})^{-1}\begin{pmatrix}
            \kappa(\cdot, Z_1)\\
            \vdots\\
            \kappa(\cdot,Z_N)
        \end{pmatrix}.
\end{equation*}
We now leverage the fact that $K = k\cdot\id_{\rkhs\kappa}$.
The identity operator on $\rkhs\kappa$ factors out and we obtain \begin{equation*}
    f_{D,\lambda}(x) = \left(z\in\measurementSet\mapsto k(x,X_{:N})(k(X_{:N}, X_{:N}) + \lambda\id_{\R^N})^{-1}\begin{pmatrix}
            \kappa(z, Z_1)\\
            \vdots\\
            \kappa(z,Z_N)
        \end{pmatrix}\right).
\end{equation*}
Introducing now the coefficients \begin{equation*}
    \alpha^\upar i_n(x) = (k(X_{:N}^\upar i, X_{:N}^\upar i) + \lambda_i\id_{\R^N})^{-1}k(X_{:N}^\upar i, x),
\end{equation*}
for all $i\in\{1,2\}$ and with $N = \len(D_i)$, we obtain \begin{align*}
    &\norm{f_{D_1,\lambda_1}(x) - f_{D_2,\lambda_2}(x)}_{\rkhs\kappa}^2 \\
    &\quad= \sum_{n,m=1}^{\len(D_1)} \alpha_n^\upar 1(x)\alpha_m^\upar 1(x)\kappa(Z_n^\upar 1, Z_m^\upar 1) + \sum_{n,m=1}^{\len(D_2)} \alpha_n^\upar 2(x)\alpha_m^\upar 2(x)\kappa(Z_n^\upar 2, Z_m^\upar 2) \\
    &\qquad\qquad-2\sum_{n=1}^{\len(D_1)}\sum_{m=1}^{\len(D_2)} \alpha_n^\upar 1(x)\alpha_m^\upar 2(x)\kappa(Z_n^\upar 1, Z_m^\upar 2).
\end{align*}
Here, we used the identity $\kappa(z,z^\prime) = \scalar{\kappa(\cdot,z),\kappa(\cdot,z^\prime)}_{\rkhs\kappa}$, for all $z,z^\prime\in\measurementSet$.
Concluding, if the underlying learning algorithm is \ac{KRR} with kernel $K = k\cdot\id_{\rkhs\kappa}$, it is possible to evaluate the statistic and threshold of the test of Theorem \ref{thm:kernel conditional two sample tests}.
We provide in Appendix \ref{apdx:algorithms} algorithms that summarize these computations for implementation purposes.

\subsubsection{Numerical study}
We illustrate the preceding ideas by showcasing how the choice of kernel $\kappa$ influences what is being tested, and the power of the test for a fixed amount of data.
The metrics and vocabulary used in the setup are identical to those introduced in Section \ref{sec:experiments}, which we recommend to read first.
\par
We consider the Gaussian kernel $k$ on $\inputSet=[-1,1]^2$, pick a function in the \ac{RKHS} of $k$, and generate measurements with two different additive noises: one is $\Normal(0, s^2)$, and the other is a mixture of $\Normal(-\mu, s^2)$ and $\Normal(\mu, s^2)$, for different values of $\mu$ but fixed $s^2$.
This choice guarantees that the noise distributions have both zero means, but differ for higher moments.
We embed the measurements in $\outputSpace = \rkhs \kappa$, where $\kappa$ is either the inhomogeneous linear or the Gaussian kernel.
The corresponding conditional expectations in $\outputSpace$ differ for the Gaussian kernel (since it is characteristic\,\cite{MFSS2017}), but coincide for the linear kernel.
As a result, a test with such data should trigger with $\kappa$ the Gaussian kernel, and not trigger with $\kappa$ the linear one.
We run vector-valued \ac{KRR} with kernel $K=k\cdot\id_{\rkhs\kappa}$, bootstrap test thresholds, and run the test on the locations present in the data set.
Figure \ref{fig:gaussian_vs_linear} shows the resulting empirical positive rate for different $\alpha$-levels, confirming that the Gaussian kernel confidently sees a difference between the distributions, whereas the linear kernel only compares means and therefore does not reject more often than by chance.

However, a rich kernel like the Gaussian one can be less efficient at comparing lower moments of distributions.
We show this by comparing empirical error rates of the test for inhomogeneous polynomial kernels of different degrees and Gaussian kernels of different bandwidths for $\kappa$.
We use the same setup as before, but this time report type \I\ and \II\ errors instead of just the positive rate.
For the type \I\ error, we use the same function from $\inputSet$ to $\measurementSet$ and noise distribution to generate both data sets.
For the type \II\ error, we use different functions from $\inputSet$ to $\measurementSet$, but the same additive Gaussian noise distribution, so that the distributions on $\measurementSet$ differ by their means but higher moments coincide.
Figure \ref{fig:gaussian_vs_polynomial} shows that rich kernels (like the Gaussian one with small bandwidth) --- which see fine-grained distributional differences --- are less efficient at recognizing differences in the mean than less rich kernels (e.g., linear kernel) tailored to that particular moment of the distribution.

\begin{figure}
    \centering
    \includegraphics{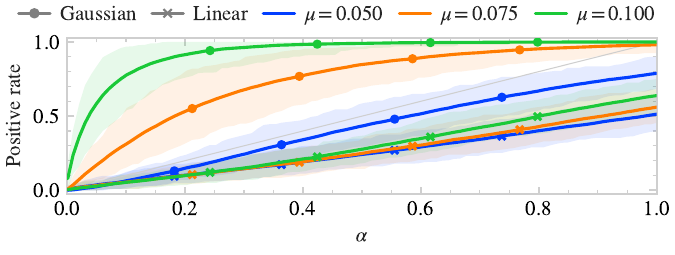}
    \caption{Comparing the Gaussian and inhomogeneous linear kernels on different distributions having the same conditional mean on the measurement set $\measurementSet$.
    A trigger is a true positive for the Gaussian kernel, and a false positive for the linear one. The results are averaged over 100 runs of the experiment, with the shaded regions reporting the 2.5\% and 97.5\% quantiles.}
    \label{fig:gaussian_vs_linear}
\end{figure}

\begin{figure}
    \centering
    \includegraphics{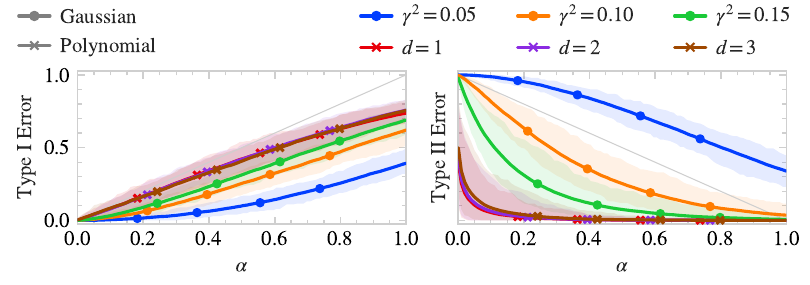}
    \caption{Empirical type \I\ and \II\ errors for different output kernels. For the type \II\ error, conditional expectations in $\measurementSet$ differ but higher moments coincide. We see that Gaussian kernels are more conservative and achieve higher type \II\ error. The results are averaged over 100 runs of the experiment, with the shaded regions reporting the 2.5\% and 97.5\% quantiles.}
    \label{fig:gaussian_vs_polynomial}
\end{figure}

\FloatBarrier
\clearpage
\section{Bootstrapping}\label{apdx:bootstrapping}
The general outcome of Theorem~\ref{thm:kernel conditional two sample tests} is that, under the appropriate assumptions on both $\hypothesisSet_i$ and the sampling process or kernel, a suitable choice for $B_i(D,x)$ has the form $B_i(D,x) = \beta_{i}\cdot\sigma_{D,\lambda_i}(x)$, where $\beta_{i}>0$.
There are three main reasons, however, for which the practical use of the specific expression of $\beta_{i}$ may be limited.
First, it significantly depends on parameters that are hard to infer, such as the \ac{RKHS} norm upper bound $S_i$ and the sub-gaussianity constant or operator $R_i$ \cite{FST2021,FMKT2024}.
Second, even when such parameters are known, evaluating the constant numerically may still be challenging due to the presence of (possibly) infinite-dimensional objects in the expression.
Finally, these bounds may be very conservative, as they are essentially worst-case in $\hypothesisSet_i$.
This last point is well-known in practice and is further illustrated by our numerical experiments below.
\par
These considerations motivate bootstrapping values of $\beta_1$ and $\beta_2$ suitable for the problem at hand.
Let $D_1$ and $D_2$ denote the observed data sets of transition pairs from $p_1$ and $p_2$, with respective sample sizes $n_1=\len(D_1)$ and $n_2=\len(D_2)$. 
We assume a sufficiently dense finite subset $\bootstrapRegion=\{x_j\mid j\in\integers J\}\subset\subsetOfInterest$, obtained for instance by gridding or random sampling, on which we want the type \I\ error calibration to hold. 
We neglect bootstrapping over time, meaning that the resulting values are only meaningful for $(\subsetOfInterest,n_1,n_2)$-guarantees.
We further fix $B_i = \beta_i\cdot\sigma_{D_i,\lambda_i}$ so that the test $\testMap$ is parameterized by the vector $\beta=(\beta_1,\beta_2)\in\Rp^2$.
Our goal is finding numerically a minimally conservative value for $\beta$ as measured by the empirical type \II\ error on test scenarios while preserving a type \I\ error of at most $\alpha$.

In the following, we detail two possible approaches that calibrate $\beta_1$ and $\beta_2$ independently of each other.
The first one essentially bootstraps each value by applying the test on resamplings of the same data set.
The second one bootstraps the confidence bounds of the underlying \ac{KRR}.
Both methods work by generating $M$ bootstrap replicates, obtaining statistics $(\bar\beta_{i,m})_{m\in\integers M}$ for each $i\in\{1,2\}$, and taking \begin{equation*}
\beta_1 = \text{$(1-\alpha t)$-quantile of } \{\bar\beta_{1,m}\},
\qquad
\beta_2 = \text{$(1-\alpha(1-t))$-quantile of } \{\bar\beta_{2,m}\},
\end{equation*}
with a user-specified $t\in(0,1)$. When needed, grid-search over $t$ is inexpensive. We use $t=\frac12$ in all simulations. 

The following two schemes now differ only in how the families $(\bar\beta_{i,m})_{m\in\integers M}$ are generated. Table~\ref{tab:bootstrap-complexity} summarizes the difference in computational complexity.

\paragraph{Naive resampling} A straightforward approach is to resample the data directly. For each $i\in\{1,2\}$, we draw with replacement $M$ pairs of data sets $((\Delta_m,\Delta_m'))_{m\in\integers M}$ from $D_i$, where each data set in the pair has the same size as $D_i$. For each $m$, we then compute $\bar\beta_{i,m}$ as the minimal $\bar\beta_m\in\Rp$ such that \begin{equation*}
\lrNorm{f_{\Delta_m,\lambda_i}(x) - f_{\Delta_m',\lambda_i}(x)}
\le \bar\beta_m\,\big(\sigma_{\Delta_m,\lambda_i}(x)+\sigma_{\Delta_m',\lambda_i}(x)\big)
\end{equation*}
for all $x\in\bootstrapRegion$. See Algorithm~\ref{alg:bootstrapping} for more details.

\paragraph{Wild bootstrap} A more scalable alternative is the wild bootstrap, which replaces repeated resampling and re-fitting by random perturbations of the residuals of the nominal KRR estimate. Following \citet{SV2023}, we adopt anti-symmetric multipliers, which in the scalar-output case were shown to mitigate regularization bias and improve calibration accuracy. Such an analysis is unavailable in our more general setting, but we apply the same construction by straightforward extension to multi-dimensional outputs.
However, we use the standard error term $\sigma_{D,\lambda}(x)$ defined in \eqref{eq:definition variance} for studentization to remain consistent with the theory-backed form of the confidence bounds identified in Theorem \ref{thm:confidence bounds KRR}.
For each $i\in\{1,2\}$, draw independent mean-zero multipliers $(q^{(m)}\in\R^{n_i})_{m\in\integers M}$ with covariance $I_{n_i} - 11^\top/n_i$ and collect the perturbed residual data set \begin{equation*}
\Delta_m = \left(\left(X_j, q_j^\upar m \cdot \left(Y_j - f_{D_i,\lambda_i}(x_j)\right)\right)\right)_{(X_j, Y_j)\in D_i},
\end{equation*}
from which $\bar\beta_{i,m}$ is obtained as the minimal $\bar\beta_m\in\Rp$ satisfying \begin{equation*}
\lrNorm{f_{\Delta_m,\lambda_i}(x)}
\le \bar\beta_m\,\sigma_{\Delta_m,\lambda_i}(x)
\end{equation*}
for all $x\in\bootstrapRegion$. This method requires only a single KRR fit since the Gram matrix for any $\Delta_m$ is the same as for $D_i$. See Algorithm~\ref{alg:bootstrapping-wild} for more details.

\begin{table}[t]
\centering
\caption{Asymptotic time complexity for computing the test threshold with different calibration methods, in excess of fitting the models $f_{D_i,\lambda_i}$, for $i\in\{1,2\}$, and evaluating the test statistic $\|f_{D_1,\lambda_1}(x)-f_{D_2,\lambda_2}(x)\|$ over all $x \in \subsetOfInterest$. Here $n_i$ denotes the data set size, $m=|\bootstrapRegion|$ the grid size, and $M$ the number of bootstrap replicates.}
\vspace{0.5em}
\begin{tabular*}{\textwidth}{@{\extracolsep{\fill}}lccc}
\toprule
\textbf{Method}
& \textbf{Analytical bound}
& \textbf{Naive bootstrap}
& \textbf{Wild bootstrap} \\
\midrule
\textbf{Runtime}
& $\mathcal{O}\big(1\big)$
& $\mathcal{O}\big(Mn_i^3 + Mmn_i^2\big)$
& $\mathcal{O}\big(Mmn_i^2\big)$ \\
\bottomrule
\end{tabular*}
\label{tab:bootstrap-complexity}
\end{table}

\paragraph{Validation}
\begin{figure}[t]
    \centering
    
    \begin{subfigure}{\linewidth}
        \centering
        \includegraphics[width=\linewidth]{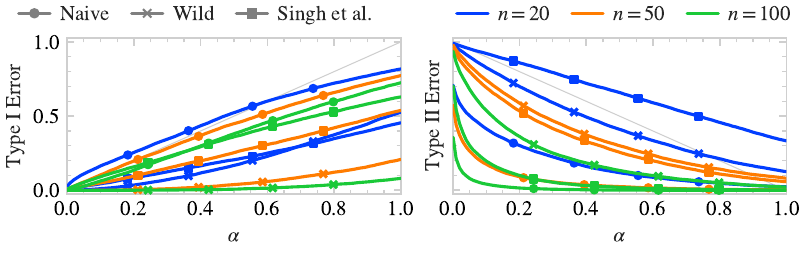}
        \caption{Influence of data set size $n=n_1=n_2$.}
        \label{fig:dataset-size}
    \end{subfigure}
    
    \vspace{1em} 
    
    \begin{subfigure}{\linewidth}
        \centering
        \includegraphics[width=\linewidth]{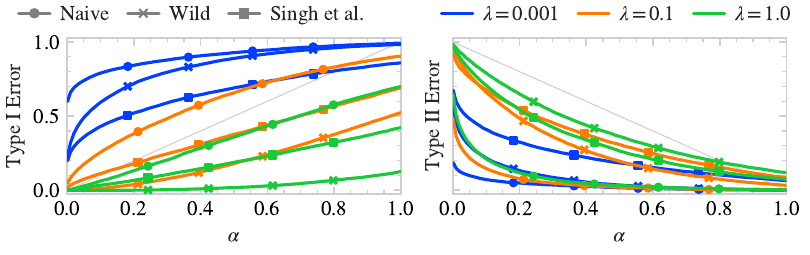}
        \caption{Influence of regularization parameter $\lambda=\lambda_1=\lambda_2$.}
        \label{fig:regularization}
    \end{subfigure}
    
    \caption{Type \I\ and \II\ errors for the test with different bootstrapping schemes, at varying data set sizes $n$ and regularization levels $\lambda$. Type \I\ error becomes faithful to the level $\alpha$ when $n$ and $\lambda$ are sufficiently large. Power increases for larger $n$ and smaller $\lambda$. The results are averaged over 100 runs of each experiment. Confidence intervals are omitted for readability. ``Singh et al.''\ refers to the method proposed in \cite{SV2023}.}
    \label{fig:bootstrap-comparison}
\end{figure}
\begin{figure}[t]
    \centering
    \includegraphics{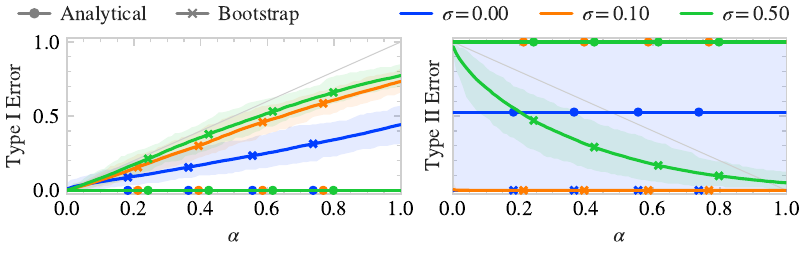}
    \caption{Type \I\ and \II\ errors for the test with the bootstrapped and analytical thresholds. Analytical thresholds yield a very conservative test (level is less than $\alpha$, but the type \I\ and \II\ errors are independent of $\alpha$. In contrast, bootstrapping preserves the guarantee and increases power. The results are averaged over 100 runs of the experiment, with the shaded regions reporting the 2.5\% and 97.5\% quantiles.}
    \label{fig:bootstrap vs analytical}
\end{figure}

We assess calibration and power for three bootstrapping schemes: (i) our naive resampling bootstrap; (ii) our wild bootstrap; and (iii) the wild bootstrap of \citet{SV2023}, which differs from our wild bootstrap only in the studentization term, using \begin{equation*}
\sigma_{D_i,\lambda_i} = \left\| \left(\left(k(X_{:n_i}^\upar i, X_{:n_i}^\upar i) + \lambda_i I_{n_i}\right)^{-1} k(X_{:n_i}^\upar i, x))\right) \mathrm{diag}\left( \varepsilon_1, ..., \varepsilon_{n_i} \right) \right\|_{\outputSpace^{n_i}},
\end{equation*}
where $D_i=((X_j^\upar i, Y_j^\upar i))_{j=1}^{n_i}$ is the data set and $\varepsilon_j = Y_j^\upar i - f_{D_i,\lambda_i}(X_j^\upar i)$ are the residuals as in our wild bootstrap.

We compare the performance of these methods in varying conditions on synthetic data sets generated in a way that either satisfies the null (for the type \I\ error), or violates it (for the type \II\ error). For that, we set $\inputSet=[-1,1]^2$, take $k$ as the Gaussian kernel with bandwidth $\gamma^2=0.25$, and generate data as measurements of functions in $\rkhs k$ with additive Gaussian noise. For the type \I\ error, we use equal mean functions with Gaussian noise, and for the type \II\ error use different mean functions with the same noise distribution. For each configuration, we bootstrap the confidence thresholds and run the test.

\paragraph{Varying data set size} Figure~\ref{fig:dataset-size} reports results for data set sizes $n = n_1 = n_2 \in \{20,50,100\}$. 
Type \I\ error increases with $\alpha$ and improves with $n$. 
They all uphold the level, except for the naive method with $n=20$. 
For fixed regularization $\lambda = 0.5$, all methods become more conservative as $n$ grows. 
Accordingly, type \II\ error improves for increasing $\alpha$ and $n$.

\paragraph{Varying regularization} In Figure~\ref{fig:regularization}, we now vary the regularization $\lambda = \lambda_1 = \lambda_2 \in \{10^{-3}, 10^{-1}, 1\}$. At a fixed data set size $n=500$, larger $\lambda$ yields more conservative calibration, while smaller $\lambda$ result in overly aggressive bootstraps and the type \I\ error exceeding the prescribed level $\alpha$.

Put together, these results indicate that sufficient data and a suitable choice of $\lambda$ are critical for good calibration. 
Across both experiments, naive resampling is more aggressive (higher type \I\ and lower type \II\ error), our wild bootstrap is more conservative (lower type \I\ and higher type \II\ error), and the variant by \citet{SV2023} lies in between.

\paragraph{Power against analytical calibration} To confirm the increase in power gained by bootstrapping relative to analytical thresholds, we now fix $n$ and $\lambda$, and vary the noise variance $\sigma$. Because the outputs of functions in $\rkhs k$ are one-dimensional, the sub-Gaussian noise operator reduces to a scalar, which we set to the ground-truth variance. For the bootstrapping scheme, we only compare against our naive resampling method. Figure~\ref{fig:bootstrap vs analytical} shows that the analytical thresholds are markedly conservative: type \I\ error remains well below $\alpha$ and both errors are essentially independent of $\alpha$. In contrast, the ~type \I\ error of the bootstrap tracks the level $\alpha$ more faithfully and achieves significantly lower type \II\ error. Overall, this indicates that bootstrap calibration yields strictly higher power at fixed $\alpha$ without sacrificing control of the type \I\ error.

For more details on the experiments in this section, see Appendix~\ref{apdx:hyperparameters}.

\FloatBarrier
\clearpage
\section{Summary of algorithms}\label{apdx:algorithms}

Algorithm \ref{alg:test} summarizes our implementation of the test of Theorem \ref{thm:kernel conditional two sample tests}.
The bootstrapping of the multiplicative constants $\beta_i$, $i\in\{1,2\}$ it relies on is detailed in Appendix \ref{apdx:bootstrapping}, and summarized in Algorithms \cref{alg:bootstrapping,alg:bootstrapping-wild}.
We emphasize that, in principle, any other bootstrapping algorithm for \ac{KRR} can be used.
All routines assume the form $K = k\cdot \id_\kappa$ (c.f Appendix \ref{apdx:functional test}).
Algorithm \ref{alg:cmmd} shows how to compute the test statistic, called the \ac{CMMD}.

\begin{algorithm}[htbp]
    \caption{Kernel conditional two-sample test}\label{alg:test}
    \begin{algorithmic}[1]
        \Require data sets $D_1, D_2$; tested covariate region $\mathcal S \subseteq \inputSet$; bootstrap covariate region $\bootstrapRegion \subseteq \inputSet$
        \Ensure Covariate rejection subregion of $\mathcal S$, with type \I\ error at most $\alpha$
        \State $\tilde\beta_1 \gets \mathrm{Bootstrap}(D_1, \bootstrapRegion)$ \Comment{use either Algorithm~\ref{alg:bootstrapping} or \ref{alg:bootstrapping-wild}}
        \State $\tilde\beta_2 \gets \mathrm{Bootstrap}(D_2, \bootstrapRegion)$
        \State $\Return\ \{x \in \mathcal S \mid \mathrm{CMMD}(D_1, D_2, k, \kappa, \lambda, x) > \tilde\beta_1 \cdot \sigma_{D_1, \lambda}(x) + \tilde \beta_2 \cdot \sigma_{D_2, \lambda}(x)\}$
    \end{algorithmic}
\end{algorithm}

\begin{algorithm}[htbp]
    \caption{Naive bootstrap}\label{alg:bootstrapping}
    \begin{algorithmic}[1]
        \Require data set $D$; grid $\bootstrapRegion$
        \Ensure Chooses $\tilde\beta$ s.t. test does not reject on $\bootstrapRegion$ for an $(1-\alpha)$-fraction of bootstrapped data sets
        \For{$j \gets 1$ \textbf{to} $M$}
            \State $D_1, D_2 \sim D$ \Comment{resample $D_1, D_2$ from $D$ uniformly with replacement}
            \State $\mathcal B_j \gets \max_{x \in \bootstrapRegion} \left( \frac{\mathrm{CMMD}(D_1, D_2, k, \kappa, \lambda, x)}{\sigma_{D_1,\lambda}(s) + \sigma_{D_2,\lambda}(s)} \right)$ \Comment{smallest $\tilde\beta$ s.t. test does not reject on $D_1, D_2$}
        \EndFor
        \State \Return $(1-\alpha)$-quantile of $\{\mathcal B_j \mid j \in\integers M\}$
    \end{algorithmic}
\end{algorithm}

\begin{algorithm}[htbp]
    \caption{Wild bootstrap}\label{alg:bootstrapping-wild}
    \begin{algorithmic}[1]
        \Require data set $D$; grid $\bootstrapRegion$
        \Ensure Chooses $\tilde\beta$ s.t. test does not reject on $\bootstrapRegion$ for an $(1-\alpha)$-fraction of bootstrapped data sets
        \State $\mathbf K \gets [k(x,x')]_{(x,\cdot)\in D_i}^{(x',\cdot)\in D_i}$

        \State \makebox[\dimexpr\linewidth-1.5em\relax][s]{%
          \makebox[0pt][l]{$\mathbf L$}\hphantom{$\mathbf K$} $\gets$ $[\kappa(z,z')]_{(\cdot,z)\in D_i}^{(\cdot,z')\in D_i}$ \hfill%
        }
        
        \State \makebox[\dimexpr\linewidth-1.5em\relax][s]{%
          \makebox[0pt][l]{$\mathbf k_x$}\hphantom{$\mathbf K$} $\gets$ $[k(x,x')]_{(x',\cdot)\in D_i}^\top$%
          \hfill $\forall x\in\bootstrapRegion\hspace{15em}$%
        }

        \State \makebox[\dimexpr\linewidth-1.5em\relax][s]{%
          \makebox[0pt][l]{$\mathbf A$}\hphantom{$\mathbf K$} $\gets$ $\big(\mathbf K+\lambda I_{\lvert D\rvert}\big)^{-1}\mathbf K$ \hfill%
        }

        \State \makebox[\dimexpr\linewidth-1.5em\relax][s]{%
          \makebox[0pt][l]{$\mathbf v_x$}\hphantom{$\mathbf K$} $\gets$ $\big(\mathbf K+\lambda I_{\lvert D\rvert}\big)^{-1}\mathbf k_x$%
          \hfill $\forall x\in\bootstrapRegion\hspace{15em}$%
        }

        \vspace{0.4em}
        
        \For{$j \gets 1$ \textbf{to} $M$}
            \State $\mathbf q \;\sim\, \mathcal N(\mathbf 0, I_{|D|} - \frac{1}{|D|} \mathbf 1 \mathbf 1^\top)$ \Comment{wild multipliers}

            \State $\xi_x \gets \mathbf q \odot v_x \hspace{11.25em} \forall x\in\bootstrapRegion$
            \Comment{$\odot$ denotes Hadamard product}
        
            \State $\mathcal B_j \gets \max_{x \in \bootstrapRegion} \left( \frac{1}{\sigma_{D,\lambda}(x)} \left( \xi^\top (I_{|D|} - \mathbf A)^\top \mathbf L (I_{|D|} - \mathbf A) \xi \right)^{1/2} \right)$
        \EndFor
        \State \Return $(1-\alpha)$-quantile of $\{\mathcal B_j \mid j \in\integers M\}$
    \end{algorithmic}
\end{algorithm}

\begin{algorithm}[h!]
    \caption{CMMD: Conditional maximum mean discrepancy}\label{alg:cmmd}
    \begin{algorithmic}[1]
        \Require covariate $x \in \inputSet$; data sets $D_1; D_2$
        \Ensure Computes $\|f_{D_1, \lambda}(x) - f_{D_2, \lambda}(x)\|_{\mathcal H_\kappa}$
        
        \State \makebox[\dimexpr\linewidth-1.5em\relax][s]{%
          \makebox[0pt][l]{$\mathbf K_i$}\hphantom{$\mathbf C_{ij}$} $\gets$ $[k(x,x')]_{(x,\cdot)\in D_i}^{(x',\cdot)\in D_i}$%
          \hfill $\forall\; \phantom{,j}i\in\{1,2\}\hspace{12em}$%
        }
        
        \State \makebox[\dimexpr\linewidth-1.5em\relax][s]{%
          \makebox[0pt][l]{$\mathbf k_i$}\hphantom{$\mathbf C_{ij}$} $\gets$ $[k(x,x')]_{(x',\cdot)\in D_i}^\top$%
          \hfill $\forall\; \phantom{,j}i\in\{1,2\}\hspace{12em}$%
        }

        \State \makebox[\dimexpr\linewidth-1.5em\relax][s]{%
          \makebox[0pt][l]{$\mathbf a_i$}\hphantom{$\mathbf C_{ij}$} $\gets$ $\big(\mathbf K_i+\lambda I_{\lvert D_i\rvert}\big)^{-1}\mathbf k_i$%
          \hfill $\forall\; \phantom{,j}i\in\{1,2\}\hspace{12em}$%
        }
        
        \State \makebox[\dimexpr\linewidth-1.5em\relax][s]{%
          \makebox[0pt][l]{$\mathbf C_{ij}$}\hphantom{$\mathbf C_{ij}$}$\gets$ $[\kappa(z,z')]_{(\cdot, z)\in D_i}^{(\cdot,z')\in D_j}$%
          \hfill $\forall\; i,j\in\{1,2\}\hspace{12em}$%
        }
        
        \State \Return $\left(\mathbf a_1^\top \mathbf C_{11} \mathbf a_1 - 2 \cdot \mathbf a_1^\top \mathbf C_{12} \mathbf a_2 + \mathbf a_2^\top \mathbf C_{22} \mathbf a_2\right)^{1/2}$
    \end{algorithmic}
\end{algorithm}

\FloatBarrier
\clearpage
\section{Connections between Theorem~\ref{thm:confidence bounds KRR} and existing bounds} \label{apdx:connections}
Case~\ref{thm:confidence bounds KRR:online sampling} in Theorem~\ref{thm:confidence bounds KRR} generalizes Corollary 3.4 in \citet{Abb2013} to outputs in separable Hilbert spaces without introducing conservatism, as it recovers exactly the same bound as in this reference under the assumptions there.
It also generalizes Theorem 1 in \citet{CG2021} by removing the regularity assumptions connecting the input set $\inputSet$ and the kernel $K$ (in the reference, $\inputSet$ is compact and $K$ is bounded).
It further generalizes this same result in yet another way by removing the assumption that $K$ should be trace-class, replacing it by the (significantly milder) assumption that $\elementaryKernel$ should be so and hereby allowing the special case of a diagonal kernel $K=k\cdot \id_{\outputSpace}$.
To the best of our knowledge, this is the first time-uniform uncertainty bound for \ac{KRR} allowing this essential case.
\par
The assumption that the kernel $K$ is \ac{UBD} in Case~\ref{thm:confidence bounds KRR:online sampling} above is the central methodological contribution of the proof.
It enables us to generalize the proof method of \citet{Abb2013} by leveraging a \emph{tensor factorization} of the \ac{RKHS} $\rkhs K$ (Proposition~\ref{prop:rkhs of uniform-block-diagonal kernel}).
Specifically, Proposition~\ref{prop:rkhs of uniform-block-diagonal kernel} allows avoiding the usual problems encountered with non-trace-class kernels --- more precisely, that trace-dependent quantities explode --- by replacing them with the corresponding quantities for the elementary block $\elementaryKernel$.
Intuitively, all of the ``estimation machinery'' is performed in the \ac{RKHS} of $\elementaryKernel$, which is a trace-class kernel and for which classical methods work well, and we simply need to ensure that the noise is ``compatible'' with this structure.
Here, this compatibility takes the form of the operator $R_\hilbert V$ being trace-class.
Interestingly, a special case of Proposition~\ref{prop:rkhs of uniform-block-diagonal kernel} has been used in \citet{LMMG2022} (Theorem 1) to identify $\rkhs K$ with a space of Hilbert-Schmidt operators in the case $K=k\cdot \id_{\outputSpace}$, hereby avoiding problems with the trace of $K$ by working with that of $k$, similarly to us.
While \citet{LMMG2022} is focused on learning rates for conditional mean embeddings, the subsequent work \cite{li2024towards} investigates (in some settings minimax optimal) learning rates for vector-valued KRR, utilizing essentially the same strategy of identifying KRR using a kernel of the form $K=k\cdot \id_{\outputSpace}$ with regularized estimation of Hilbert-Schmidt operators.
Conceptually, both \cite{LMMG2022,li2024towards} and our work use a tensor space construction to avoid problems with quantities related to trace-class operators, however, the technical details significantly differ.
First, \cite{LMMG2022,li2024towards} uses an identification with Hilbert-Schmidt operators, whereas we work with more general tensor product structures, that allow us formulate our results using \ac{UBD} kernels.
Second, \cite{LMMG2022,li2024towards} aim at learning rates in a typical supervised learning setup, in particular, with random covariates, whereas we are interested in uniform (in the covariates) confidence bounds, and hence the proof techniques are rather different.
Finally, the regularized least-squares algorithm analyzed by \cite{LMMG2022,li2024towards} is not affected by whether the used kernel is trace-class or not, only the analysis needs to take this into account.
In contrast, we need the tensor structure as formalized by our \ac{UBD} assumption to even have a well-defined expression for the confidence bound due to appearance of the Fredholm determinant.
We would like to remark that it appears that many arguments of \citet{LMMG2022,li2024towards} also hold under the more general assumption that $K$ is \ac{UBD}.
An interesting avenue for future work is thus to investigate to what extent this generalization carries over, and whether the derived learning rates, e.g., \cite[Theorem\,2]{LMMG2022}, hold under \ac{UBD}.
Summarizing, we believe the \ac{UBD} assumption and the resulting structure of the \ac{RKHS} identified in Proposition~\ref{prop:rkhs of uniform-block-diagonal kernel} is a relevant new idea to handle a large class of non-trace-class kernels and can be used in other areas of kernel methods such as learning theory with loss functions compatible with the structure.
\par
We emphasize that the subgaussianity assumption of Case~\ref{thm:confidence bounds KRR:online sampling} is in general a strengthening of the assumption that $p$ is $\bar\rho$-subgaussian for $\bar\rho\in\Rp$, but also a weakening of the one that $p$ is $R^\prime$-subgaussian for $R^\prime\in\bLinearOperators(\outputSpace)$ self-adjoint, positive semi-definite, and of trace class.
This last case was first introduced in \citet{MS2023}, to which we refer for an interpretation.
The structure we require on $R$ clearly identifies how much control we need on each of the components of the noise: those that are handled by the \ac{RKHS} of the trace-class kernel $\elementaryKernel$ may be simply $\rho$-subgaussian, while the those that are not must have a trace-class variance operator.
In fact, if the kernel $K$ is itself of trace class, we may take $\elementaryKernel = K$ in the theorem and the assumption simplifies to $p$ being $\rho$-subgaussian.
\par
Finally, Case \ref{thm:confidence bounds KRR:deterministic sampling}  does not require \emph{any} structural assumptions on $K$ (such as trace-class or \ac{UBD}), but assumes \emph{independent} transition pairs (excluding online sampling) and yields a time-local bound (since $\integersOfInterest = \{N\}$).
It is thus of independent interest to Case~\ref{thm:confidence bounds KRR:online sampling}, as they apply in different settings.
It is also worth noting that the proof of Case \ref{thm:confidence bounds KRR:deterministic sampling} is considerably simpler than that of Case \ref{thm:confidence bounds KRR:online sampling}, as it follows directly from results on concentration of quadratic forms of subgaussian vectors with independent coefficients \cite{MS2023}. 

\FloatBarrier
\clearpage
\section{Generalized confidence bounds in vector-valued least squares}\label{apdx:confidence bounds vv LS}
We provide in this section the proof of Theorem~\ref{thm:confidence bounds KRR}.
Specifically, we show a more general result which does not assume that the space $\rkhs K$ in which the estimator lies is \iac{RKHS}; only that it is a generic Hilbert space $\hilbert H$.
The result we show is given later in this section in Theorem~\ref{thm:confidence bounds vv LS} and pertains to confidence bounds in vector-valued least squares estimation under general assumptions on the sampling operators; \ac{KRR} is then obtained for a specific choice of the measurement operators and Hilbert space where estimation is performed.
\par
This section is organized as follows.
We first provide in Section \ref{apdx:confidence bounds vv LS:setup and goal} the general setup of vector-valued least squares estimation, and identify that the main challenge for confidence bounds is bounding a specific term involving the noise.
We then provide in Sections~\ref{apdx:confidence bounds vv LS:deterministic sampling} and~\ref{apdx:confidence bounds vv LS:online sampling} methods to bound this term under different assumptions that we formalize.
The main intermediate results of those sections are Lemma~\ref{lemma:concentration quadratic form MS2023}, Theorem~\ref{thm:confidence bounds vv LS:online sampling:general case}, and Corollary~\ref{clry:confidence bounds vv LS:online sampling:bound noise term}, and are essential in the proof of Theorem~\ref{thm:confidence bounds vv LS}, which we only state and prove in Section~\ref{apdx:confidence bounds vv LS:main result} as we provide a gentle introduction to its assumptions first.
We finally leverage it in Section~\ref{apdx:confidence bounds vv LS:proof KRR} to show Theorem~\ref{thm:confidence bounds KRR}.

\subsection{Setup and goal}
\label{apdx:confidence bounds vv LS:setup and goal}

Let $\hilbert H$ and $\outputSpace$ be separable Hilbert spaces, and $D=[(L_n,y_n)]_{n\in\Np}$ be an $\bLinearOperators(\hilbert H,\outputSpace)\times\outputSpace$-valued stochastic process.
The operators $L_n$ are called the (random) \emph{evaluation operators} and the vectors $y_n$ the evaluations (or measurements), $n\in\Np$.
We assume that there exists $h^\star\in\hilbert H$ and a centered $\outputSpace$-valued process $\eta = (\eta_n)_{n\in\Np}$ such that $y_n = L_n h^\star + \eta_n$.
Let now $\lambda>0$ and define the regularized least-squares problem with data $D$ at time $N\in\Np$ as 
\begin{equation}\label{eq:rls}
    \arg\min_{h\in\hilbert H}\sum_{n=1}^N\norm{y_n - L_n h}^2_\outputSpace + \lambda \norm{h}_\hilbert H^2.
\end{equation}
This optimization problem has a well-known solution, which we recall below together with the proof for completeness.
\begin{lemma}\label{lemma:solution rls}
    For all $N\in\Np$, the optimization problem~\eqref{eq:rls} has a unique solution $h_{N,\lambda}$, and \begin{equation}\label{eq:rls expression}
        h_{N,\lambda} = (M^\star_NM_N+\lambda\id_{\hilbert H})^{-1}M^\star_N y_{:N},
    \end{equation}
    where we introduced the operator $M_N$ and its adjoint as\begin{equation}\label{eq:analysis and synthesis}
    M_N: h\in\hilbert H\mapsto (L_nh)_{n\in\integers N}\in\outputSpace^N,\quad\text{and}\quad M_N^\star:(g_1,\dots,g_N)\in\outputSpace^N\mapsto  \sum_{n=1}^N L_n^\star g_n.
\end{equation}
\end{lemma}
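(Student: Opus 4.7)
The plan is to reduce the vector-valued regularized least-squares problem to a familiar quadratic optimization in the single Hilbert space $\hilbert H$ by repackaging the measurements through the operator $M_N$. The first step will be to observe that, because each $L_n \in \bLinearOperators(\hilbert H, \outputSpace)$ is bounded, $M_N$ is a bounded operator from $\hilbert H$ to $\outputSpace^N$ (equipped with its natural direct sum inner product), with adjoint given by the expression in \eqref{eq:analysis and synthesis}. With this notation, the objective in \eqref{eq:rls} rewrites as
\begin{equation*}
J(h) = \norm{y_{:N} - M_N h}_{\outputSpace^N}^2 + \lambda \norm{h}_{\hilbert H}^2.
\end{equation*}

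The second step is to note that $J$ is strictly convex and coercive: expanding the first norm square gives a continuous quadratic form whose Hessian is $2(M_N^\star M_N + \lambda \id_{\hilbert H})$, and since $M_N^\star M_N \succeq 0$ and $\lambda > 0$, this Hessian is bounded below by $2\lambda \id_{\hilbert H}$. Standard convex analysis in Hilbert space (e.g., strong convexity plus weak lower semi-continuity, or direct application of the Lax-Milgram lemma to the associated bilinear form) then yields existence and uniqueness of a minimizer $h_{N,\lambda}$.

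The third step is the first-order optimality condition. Computing the Fréchet derivative of $J$ at $h$ in direction $v \in \hilbert H$ gives
\begin{equation*}
DJ(h)[v] = 2\langle M_N^\star M_N h + \lambda h - M_N^\star y_{:N},\, v\rangle_{\hilbert H},
\end{equation*}
so setting $DJ(h_{N,\lambda}) = 0$ yields the normal equations $(M_N^\star M_N + \lambda \id_{\hilbert H}) h_{N,\lambda} = M_N^\star y_{:N}$. Because $M_N^\star M_N$ is self-adjoint and positive semi-definite and $\lambda > 0$, the operator $M_N^\star M_N + \lambda \id_{\hilbert H}$ is bounded, self-adjoint, and bounded below by $\lambda \id_{\hilbert H}$; in particular, it is bijective with bounded inverse, yielding the announced closed form \eqref{eq:rls expression}.

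I expect no substantial obstacle here, since the argument is entirely standard once boundedness of $M_N$ is noted; the only minor care needed is the operator-theoretic justification that $M_N^\star M_N + \lambda \id_{\hilbert H}$ is invertible on the possibly infinite-dimensional space $\hilbert H$, which follows immediately from its strict positivity. The proof is therefore short and serves mainly to fix notation ($M_N$, $M_N^\star$) that will be reused in the subsequent concentration arguments.
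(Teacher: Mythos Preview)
Your argument is correct. The paper takes a slightly different but equally standard route: instead of invoking strict convexity and then solving the first-order condition, it expands the objective and completes the square in the weighted norm $\norm{\cdot}_{M_N^\star M_N + \lambda\id_{\hilbert H}}$, obtaining
\[
J(h) = \lrNorm{h - (M_N^\star M_N + \lambda\id_{\hilbert H})^{-1} M_N^\star y_{:N}}_{M_N^\star M_N + \lambda\id_{\hilbert H}}^2 + \text{terms independent of } h,
\]
from which both uniqueness and the closed form can be read off at once. Your approach via the Fr\'echet derivative and normal equations is entirely equivalent; the only practical difference is that completing the square yields existence and the explicit minimizer in a single step, whereas you argue existence separately (via coercivity/strong convexity) before characterizing the minimizer. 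Both are textbook arguments and serve the same notational purpose for what follows.
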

\begin{proof}
Rewriting the objective function and completing the square leads to
\begin{align*}
    & \sum_{n=1}^N \|y_n - L_nh\|_\outputSpace^2 + \lambda\|h\|_\hilbert H^2 
        = \sum_{n=1}^N \langle y_n - L_nh, y_n - L_nh \rangle_\outputSpace + \lambda \langle h,h\rangle_\hs{H} \\
    & \hspace{0.5cm} = \sum_{n=1}^N \|y_n\|_\outputSpace^2 
        - 2\sum_{n=1}^N \langle y_n, L_nh\rangle_\outputSpace + \sum_{n=1}^N \langle L_n h, L_n h \rangle_\outputSpace 
        + \lambda \langle h, h \rangle_\hs{H} \\
    & \hspace{0.5cm} = \left\langle \left(\sum_{n=1}^N  L_n^\ast L_n + \id_{\hilbert H}\right) h, h \right\rangle_\hs{H} 
        - 2\left\langle \sum_{n=1}^N  L_n^\ast y_n, h\right\rangle_\hs{H}
        + \sum_{n=1}^N \|y_n\|_\outputSpace^2 \\
    & \hspace{0.5cm} = \|h \|_{\sum_{n=1}^N  L_n^\ast L_n + \lambda \id_{\hilbert H}}^2 
        - 2\left\langle \left(\sum_{n=1}^N  L_n^\ast L_n + \lambda \id_{\hilbert H}\right)^{-1}\sum_{n=1}^N  L_n^\ast y_n, h \right\rangle_{\sum_{n=1}^N  L_n^\ast L_n + \lambda \id_{\hilbert H}} \\
        & \hspace{1cm} + \left\| \left(\sum_{n=1}^N  L_n^\ast L_n + \lambda \id_{\hilbert H}\right)^{-1}\sum_{n=1}^N  L_n^\ast y_n \right\|_{\sum_{n=1}^N  L_n^\ast L_n + \id_{\hilbert H}}^2 \\
        & \hspace{1cm} - \left\| \left(\sum_{n=1}^N  L_n^\ast L_n + \lambda \id_{\hilbert H}\right)^{-1}\sum_{n=1}^N  L_n^\ast y_n \right\|_{\sum_{n=1}^N  L_n^\ast L_n + \lambda \id_{\hilbert H}}^2 \\
        & \hspace{1cm} +  \sum_{n=1}^N \|y_n\|_\outputSpace^2 \\
    & \hspace{0.5cm} = \left\|h - \left(\sum_{n=1}^N  L_n^\ast L_n + \lambda \id_{\hilbert H}\right)^{-1}\sum_{n=1}^N  L_n^\ast y_n \right\|_{\sum_{n=1}^N  L_n^\ast L_n + \lambda \id_{\hilbert H}}^2  + \text{Terms without $h$}
\end{align*}
Since $\sum_{n=1}^N  L_n^\ast L_n + \lambda \id_{\hilbert H}=M_N^\ast M_N + \lambda\id_\hilbert H$ is positive definite, 
and $\sum_{n=1}^N  L_n^\ast y_n = M_N^\star y_{:N}$, this shows that the unique solution to the optimization problem \eqref{eq:rls} is indeed given by \eqref{eq:rls expression}.
\end{proof}
Our general goal is to obtain a finite-sample confidence region for the estimator $h_{N,\lambda}$ of $h^\star$.
While we only state the precise result in Section~\ref{apdx:confidence bounds vv LS:main result}, we want --- in short --- a high-probability bound on the quantity $\norm{\bar L h_{N,\lambda} - \bar L h^\star}_{\bar\outputSpace}$. Here $\bar\outputSpace$ is a Hilbert space and $\bar L\in\bLinearOperators(\hilbert H,\bar\outputSpace)$ is a bounded operator capturing an appropriate property we are interested in; they are degrees of freedom of the problem.
We begin with the following observation.
This result, which is well-known in the case of KRR (see for example \citet[Chapter~3]{Abb2013} or \citet{chowdhury2017kernelized}), results from elementary algebraic manipulations, the triangle inequality, and properties of the operator norm.
\begin{lemma}\label{lemma:starting point confidence bound}
    For any $N\in\Np$ and $\lambda\in\Rp$, it holds that \begin{equation}
        h_{N,\lambda} = h^\star - \lambda(M_N^\star M_N + \lambda\id_{\hilbert H})^{-1} h^\star + (M_N^\star M_N + \lambda\id_{\hilbert H})^{-1}M_N^\star\eta_{:N}.
    \end{equation}
    Consequently, for any Hilbert space $\bar \outputSpace$, bounded operator $\bar L\in\bLinearOperators(\hilbert H,\bar\outputSpace)$, and $N\in\Np$, the following holds: \begin{align}\label{eq:starting point confidence bound}
        \norm{\bar Lh_{N,\lambda} - \bar L h^\star}_{\bar\outputSpace} &\leq \norm{\bar L (V_N + \lambda\id_{\hilbert H})^{-1/2}}_{\bLinearOperators(\hilbert H, \bar\outputSpace)}\left(\norm{\hilbertMDS_N}_{(V_N + \lambda\id_{\hilbert H})^{-1}} + \lambda\norm{h^\star}_{(V_N + \lambda\id_{\hilbert H})^{-1}}\right),
    \end{align}
    where we introduced \begin{equation*}
        V_N = M_N^\star M_N,\quad\text{and}\quad\hilbertMDS_N = M_N^\star\eta_{:N}.
    \end{equation*}
\end{lemma}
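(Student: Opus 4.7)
The plan is to derive the identity for $h_{N,\lambda}$ by substituting the data model into the closed form from Lemma \ref{lemma:solution rls} and then apply a standard resolvent trick, before deducing the inequality via the triangle inequality and operator-norm bounds with a carefully chosen ``symmetric'' factorization of $(V_N + \lambda \id_\hilbert H)^{-1}$.

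First, I would start from $y_{:N} = M_N h^\star + \eta_{:N}$ (which follows from the assumed model $y_n = L_n h^\star + \eta_n$ and the definition of $M_N$ in \eqref{eq:analysis and synthesis}), plug this into the expression $h_{N,\lambda} = (M_N^\star M_N + \lambda \id_\hilbert H)^{-1} M_N^\star y_{:N}$ given by Lemma \ref{lemma:solution rls}, and split the result into two contributions: one depending on $h^\star$ and one depending on $\eta_{:N}$. The only nontrivial step is rewriting the $h^\star$-dependent term via the resolvent identity $(V_N + \lambda \id_\hilbert H)^{-1} V_N = \id_\hilbert H - \lambda (V_N + \lambda \id_\hilbert H)^{-1}$, which is a direct consequence of $V_N = (V_N + \lambda \id_\hilbert H) - \lambda \id_\hilbert H$. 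This immediately yields the stated decomposition of $h_{N,\lambda}$, and the noise-dependent term is by definition $(V_N + \lambda \id_\hilbert H)^{-1}\hilbertMDS_N$.

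For the second claim, I would subtract $h^\star$, apply $\bar L$, use the triangle inequality, and then factorize $(V_N + \lambda \id_\hilbert H)^{-1} = (V_N + \lambda \id_\hilbert H)^{-1/2}(V_N + \lambda \id_\hilbert H)^{-1/2}$ in each of the two resulting terms. The factorization is the key move: the first factor gets absorbed into the operator-norm $\norm{\bar L (V_N + \lambda \id_\hilbert H)^{-1/2}}_{\bLinearOperators(\hilbert H, \bar\outputSpace)}$, while the second factor applied to $\hilbertMDS_N$ or $h^\star$ produces $\norm{(V_N + \lambda \id_\hilbert H)^{-1/2}\cdot}_\hilbert H$, which by definition equals the weighted norm $\norm{\cdot}_{(V_N + \lambda \id_\hilbert H)^{-1}}$. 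Combining these with the triangle inequality closes the bound.

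There is no real obstacle: the only technical subtlety is ensuring that the operator $(V_N + \lambda \id_\hilbert H)^{-1/2}$ is well-defined and self-adjoint, which holds since $V_N$ is bounded, self-adjoint, and positive semi-definite so that $V_N + \lambda \id_\hilbert H \geq \lambda \id_\hilbert H > 0$ in the operator order, and the continuous functional calculus applies. The intellectual content of the lemma is the \emph{choice} of factorization, which cleanly decouples the ``geometry of the estimator'' factor $\norm{\bar L (V_N + \lambda \id_\hilbert H)^{-1/2}}$ from the two intrinsic quantities $\norm{\hilbertMDS_N}_{(V_N + \lambda \id_\hilbert H)^{-1}}$ (a noise term to be handled by subgaussian concentration in Sections \ref{apdx:confidence bounds vv LS:deterministic sampling} and \ref{apdx:confidence bounds vv LS:online sampling}) and $\lambda\norm{h^\star}_{(V_N + \lambda \id_\hilbert H)^{-1}}$ (a deterministic regularization bias controlled by $\sqrt\lambda\,\norm{h^\star}_\hilbert H$).
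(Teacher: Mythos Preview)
Your proposal is correct and matches the paper's approach: the paper does not give a detailed proof but notes that the result ``results from elementary algebraic manipulations, the triangle inequality, and properties of the operator norm,'' which is precisely the route you describe (substitute the data model into the closed form of Lemma~\ref{lemma:solution rls}, apply the resolvent identity, then factor through $(V_N+\lambda\id_\hilbert H)^{-1/2}$ and use the triangle inequality and operator-norm bound). Your remarks on well-definedness of the square root and on the role of the factorization are accurate and appropriate.
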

The general strategy to bound $\norm{\bar Lh_{N,\lambda} - \bar L h^\star}_{\bar\outputSpace}$ is to bound the two terms involved in the \ac{RHS} of \eqref{eq:starting point confidence bound} independently.
The term involving $h^\star$ is bounded as follows: \begin{equation}\label{eq:starting point confidence bound:hstar}
    \norm{h^\star}_{(M_N^\star M_N + \lambda\id_{\hilbert H})^{-1}} \leq \lambda^{-1/2}\norm{h^\star}_\hilbert H,
\end{equation}
and we will assume a known upper-bound on $\norm{h^\star}_\hilbert H$.
Indeed, $\lambda \id_\hilbert H \preceq M_N^\star M_N + \lambda\id_\hilbert H$ since $M_N^\star M_N$ is positive semi-definite.
Similarly, the operator norm rewrites as \begin{equation}\label{eq:starting point confidence bound:operator norm}
    \begin{split}
        &\norm{\bar L (M_N^\star M_N + \lambda\id_{\hilbert H})^{-1/2}}_{\bLinearOperators(\hilbert H, \bar\outputSpace)} \\
        &= \sqrt{\lrNorm{\bar L (M_N^\star M_N + \lambda\id_{\hilbert H})^{-1/2}\left(\bar L (M_N^\star M_N + \lambda\id_{\hilbert H})^{-1/2}\right)^\star}_{\bLinearOperators(\hilbert H)}}\\
        &= \sqrt{\frac1\lambda\lrNorm{\bar L\left(\id_{\hilbert H} - (M_N^\star M_N + \lambda\id_{\hilbert H})^{-1}M_N^\star M_N\right)\bar L^\star}_{\bLinearOperators(\hilbert H)}}.
    \end{split}
\end{equation}
Introducing the shorthands \begin{align*}
    \Sigma_{D_{:N},\lambda}(\bar L) &= \bar L\left(\id_{\hilbert H} - (M_N^\star M_N + \lambda\id_{\hilbert H})^{-1}M_N^\star M_N\right)\bar L^\star,\quad\text{and}\\
    \sigma_{D_{:N},\lambda}(\bar L) &= \sqrt{\norm{\Sigma_{D_{:n},\lambda}(\bar L)}_{\bLinearOperators(\hilbert H)}},
\end{align*}
we have \begin{equation}\label{eq:starting point confidence bound:variance}
    \norm{\bar L (M_N^\star M_N + \lambda\id_{\hilbert H})^{-1/2}}_{\bLinearOperators(\hilbert H, \bar\outputSpace)} = \frac{1}{\sqrt\lambda}\sigma_{D_{:N},\lambda}(\bar L).
\end{equation}
We are thus left with bounding the noise-dependent term $\norm{\hilbertMDS_N}_{(M_N^\star M_N + \lambda\id_{\hilbert H})^{-1}}$.
Importantly, since this term is independent of $\bar L$, such a method naturally yields a high-probability bound of $\norm{\bar L h_{N,\lambda} - \bar L h^\star}_{\bar\outputSpace}$ where the high-probability set where the bound holds is independent of $\bar L$.

\subsection{Noise bound for deterministic sampling}
\label{apdx:confidence bounds vv LS:deterministic sampling}
The first assumption under which we are able to provide a bound on $\norm{\hilbertMDS_N}_{(M_N^\star M_N + \lambda\id_{\hilbert H})^{-1}}$ is that of \emph{deterministic sampling}, meaning that the sampling operators are not random and the noise is independent.
\begin{assumption}[Deterministic sampling]\label{assumption:deterministic sampling}
    The process $L=(L_n)_{n\in\Np}$ is not random, and $\eta$ is an independent process.
\end{assumption}
The bound is based on Corollary 2.6 in \citet{MS2023}, which pertains to the concentration of quadratic forms of $R$-subgaussian random variables.
We recall it here with our notations for completeness.
\begin{lemma}\label{lemma:concentration quadratic form MS2023}
    Assume that $\eta$ is an independent process and that $\eta_{n}$ is $R$-subgaussian for all $n\in\Np$, with $R\in\bLinearOperators(\outputSpace)$, positive semi-definite and trace-class.
    Let $A\in\bLinearOperators(\outputSpace^N,\hilbert H)$.
    Then, $\eta_{:N}$ is $R\otimes\id_{\R^N}$-subgaussian, and for all $\delta\in(0,1)$, it holds with probability at least $1-\delta$ that \begin{equation}\label{eq:concentration quadratic form MS2023}
        \norm{A\eta_{:N}}^2 \leq \tr(T) + 2\sqrt{\ln\frac1\delta}\norm{T}_2 + 2\ln\frac1\delta\norm{T}_{\bLinearOperators(\hilbert H)},
    \end{equation}
    where $T = A\cdot(R\otimes\id_{\R^N})\cdot A^\star\in\bLinearOperators(\hilbert H)$ is of trace class and $\norm{B}_2 = \sqrt{\tr(TT^\star)}$ is its Hilbert-Schmidt norm.
\end{lemma}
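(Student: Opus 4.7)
The plan is to split the lemma into its two assertions, since the second (the quadratic-form bound) is stated in the paper as a direct application of a known Hanson–Wright-type inequality from \citet{MS2023}. The only genuinely new work is establishing the joint subgaussianity of $\eta_{:N}$; once that is in hand, the concentration bound follows by invoking the cited result with the operator $A$ and variance proxy $R\otimes\id_{\R^N}$.

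For the first assertion, I would exploit independence directly. Fix $g=(g_1,\ldots,g_N)\in\outputSpace^N$ and identify $\outputSpace^N\cong\outputSpace\otimes\R^N$ via $g\leftrightarrow\sum_n g_n\otimes e_n$, so that $\langle g,(R\otimes\id_{\R^N})g\rangle = \sum_n\langle g_n,Rg_n\rangle_\outputSpace=\sum_n\norm{g_n}_R^2$. Since $\eta_{:N}=(\eta_1,\ldots,\eta_N)$ has mutually independent components and each $\eta_n$ is $R$-subgaussian,
\begin{equation*}
\E\left[\exp\langle g,\eta_{:N}\rangle_{\outputSpace^N}\right]
= \prod_{n=1}^N \E\left[\exp\langle g_n,\eta_n\rangle_\outputSpace\right]
\le \prod_{n=1}^N \exp\left(\norm{g_n}_R^2\right)
= \exp\left(\norm{g}_{R\otimes\id_{\R^N}}^2\right),
\end{equation*}
which is exactly the subgaussianity condition with variance proxy $R\otimes\id_{\R^N}$. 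Trace-classness of $R\otimes\id_{\R^N}$ transfers from that of $R$ because the second factor is finite-dimensional, so $\tr(R\otimes\id_{\R^N})=N\tr(R)<\infty$.

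For the second assertion, I would apply Corollary~2.6 of \citet{MS2023} to the random vector $\xi:=\eta_{:N}\in\outputSpace^N$ and the bounded operator $A:\outputSpace^N\to\hilbert H$. Since $\xi$ is $(R\otimes\id_{\R^N})$-subgaussian with trace-class variance proxy, the cited corollary yields a high-probability bound on $\norm{A\xi}_{\hilbert H}^2$ in terms of the trace, Hilbert–Schmidt norm, and operator norm of $T=A(R\otimes\id_{\R^N})A^\star\in\bLinearOperators(\hilbert H)$, which is precisely \eqref{eq:concentration quadratic form MS2023}. Trace-classness of $T$ is inherited from the trace-classness of its inner factor and boundedness of $A$.

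The main obstacle, if any, is purely bookkeeping: making sure the tensor-product identification $\outputSpace^N\cong\outputSpace\otimes\R^N$ is used consistently, so that the variance proxy in the subgaussianity statement matches exactly the operator $R\otimes\id_{\R^N}$ fed into the Hanson–Wright bound. There is no substantive analytic difficulty here, because the hard probabilistic work has been done in the cited reference; my contribution is only the independence-plus-tensor-product calculation and a verification that the hypotheses of Corollary~2.6 of \citet{MS2023} are met in our setting.
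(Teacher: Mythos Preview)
Your proposal is correct and matches the paper's treatment: the paper does not give a separate proof but simply recalls the lemma as Corollary~2.6 of \citet{MS2023} in its own notation, and your argument supplies precisely the two missing details (the independence computation showing $\eta_{:N}$ is $(R\otimes\id_{\R^N})$-subgaussian, and the verification that the hypotheses of that corollary are met). There is nothing to add.
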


\subsection{Noise bound for online sampling with coefficients-transforming sampling operators}
\label{apdx:confidence bounds vv LS:online sampling}

The previous case makes the central assumption that $(L_n)$ is a deterministic process.
While it is possible to extend it to the case where $(L_n)_n$ is a uniformly bounded random process independent of $\eta$ (leveraging Lemma\,4.1 in \citet{MS2023}), this still excludes \emph{online sampling}, formalized as follows:
\begin{assumption}[Online sampling]\label{assumption:online sampling}
    We have access to a filtration $\filtration = (\filtration_N)_{N\in\Np}$ for which $L$ is predictable and $\eta$ is adapted.
\end{assumption}
Under this assumption, the choice of $L_{N+1}$ is random and depends on $L_{:N}$ and on $y_{:N}$.
The main difficulty then resides in the fact that $\hilbertMDS_N$ and $(M_N^\star M_N+\lambda\id_{\hilbert H})^{-1}$ are now correlated random variables.
\par
We address this correlation by showing a \emph{self-normalized} concentration inequality for \begin{equation*}\norm{\hilbertMDS_N}_{(M_N^\star M_N+\lambda\id_\hilbert H)^{-1}}.
\end{equation*}
The inequality is also time-uniform, meaning that the high-probability set on which the bound holds does not depend on the time $N\in\Np$.
It is a generalization of Theorem 3.4 (and Corollary 3.6) in \citet{Abb2013} from $\outputSpace=\R$ to a general separable Hilbert space $\outputSpace$.
These results are based on the method of mixtures \cite{delapena2009selfnormalized}.
While we mostly follow the proof strategy from \citet{Abb2013}, we introduce a general structural assumption on the sampling operators (Definition~\ref{def:co-factorizable} and Proposition~\ref{prop:rkhs of uniform-block-diagonal kernel}) to handle the case where the sampling operators are not trace-class (in the \ac{RKHS} case, the case where the kernel is not trace-class).
Furthermore, in contrast to \citet{Abb2013} we do not employ an explicit stopping-time construction to obtain time uniformity, but rather use Ville's inequality as in \citet{whitehouse2023sublinear}.

\subsubsection{Challenge and goal}\label{apdx:confidence bounds vv LS:online sampling:goal}
For the sake of discussion, let us assume that $\hilbert H = \rkhs K$, where $K$ is an operator-valued kernel.
While extending Corollary 3.6 of \citet{Abb2013} to the case where $K$ is trace-class is relatively straightforward by following the proof technique, the method breaks if $K$ is not trace-class.
In fact, it is not even clear \emph{what} the corresponding bound on $\norm{\hilbertMDS_N}_{(\lambda\id_{\hilbert H} + V_N)^{-1}}$ is in that case, as using the expression given in Corollary 3.5 in the reference and replacing the scalar kernel with the operator-valued one yields a bound involving the quantity \begin{equation}\label{eq:problematic determinant}
    \det\left[\id_{\outputSpace^N} + \frac1\lambda M_NM_N^\star\right],
\end{equation}
which is meaningless since $M_NM_N^\star$ is not of trace class.
However, in the special case where $K$ is \ac{UBD} with trace-class block decomposition $\elementaryKernel$, we have a meaningful candidate to replace the determinant above: that involving $\elementaryKernel$.
Our goal then becomes upper bounding $\norm{\hilbertMDS_N}_{(V_N + \lambda\id_\hilbert H)^{-1}}$ by a quantity involving $\det(\id_{\tilde\outputSpace^N} + \lambda^{-1}\tilde M_N \tilde M_N^\star)$, where $\tilde M_N\in\bLinearOperators(\rkhs \elementaryKernel,\tilde\outputSpace^N)$ is defined as $\tilde M_N\tilde h = (\tilde h(x_n))_{n\in\integers N}\in\tilde\outputSpace^N$ for all $\tilde h\in\rkhs \elementaryKernel$.
Finally, while the discussion above assumes $\hilbert H = \rkhs K$ and the regularizer $V = \lambda\id_{\rkhs K}$, we aim at providing a noise bound for general spaces and regularizers.

\paragraph{Proof summary}
Our proof is based on the method of mixtures of \citet{Abb2013}.
A close look at the proof of Theorem 3.4 in the reference reveals that the problematic determinant \eqref{eq:problematic determinant} arises when integrating the function $G_N$ (in the notations of the reference) against a Gaussian measure on $\rkhs K$.
We circumvent this integration by leveraging a \emph{factorization of $\rkhs K$ as a tensor space}.
While such a factorization $\rkhs K\cong H_1\otimes H_2$ trivially exists for choices of separable $H_1$ and $H_2$ based on dimensional arguments (i.e., $\dim(\rkhs K) = \dim(H_1)\cdot\dim(H_2)$), we choose one in which all of the sampling operators $L_n$, $n\in\Np$, admit themselves the simple form $L_n\cong \tilde L_n\otimes \id_{H_2}$, and for which $\tilde L_n$ has nice properties (specifically, $\tilde L_n^\star\tilde L_n$ is trace-class).
In the case of a diagonal kernel --- that is, $K=k\cdot\id_\outputSpace$ and $L_n = K(\cdot,x_n)$, this construction is known and natural: the \ac{RKHS} $\rkhs K$ is the (closure of the) span of vectors of the form $f\cdot g$, where $f\in\rkhs k$ and $g\in\outputSpace$; see for instance \citet[Exercise\,6.3]{PR2016}.
Then, for any $h= \sum_{m=1}^\infty f_m\cdot g_m\in\rkhs {k\cdot\id_\outputSpace}$, introducing $h_\otimes = \sum_{m=1}^\infty f_m\otimes g_m\in\rkhs\elementaryKernel\otimes\outputSpace$, \begin{equation*}
    L_n h = \sum_{m=1}^\infty f_m(x_n)\cdot g_m \cong \sum_{m=1}^\infty (\tilde L_n\otimes\id_\outputSpace) (f_m\otimes g_m) = (\tilde L_n \otimes \id_\outputSpace)h_\otimes,
\end{equation*}
In the more general case where $K$ is only \ac{UBD} instead of diagonal, we prove that such a factorization also exists (Proposition~\ref{prop:rkhs of uniform-block-diagonal kernel}).
In any case, this factorization allows us to write $\rkhs K \cong \rkhs \elementaryKernel\otimes \hilbert V$.
This enables evaluating (a modification of) the function $G_N$ on pure tensors $\tilde h\otimes v$, where $\tilde h\in\rkhs \elementaryKernel$ and $v\in\hilbert V$.
We then leverage a \emph{double mixture} argument: one for the variable $\tilde h$, and one for the variable $v$.
The one on $\tilde h$ is the same as in \citet{Abb2013}: a Gaussian measure on $\rkhs\elementaryKernel$, which outputs the desired determinant involving $\elementaryKernel$.
The one on $v$ is not Gaussian and enables recovering the norm we wish to bound, $\norm{\hilbertMDS_N}_{(V_N + \lambda\id_{\rkhs K})^{-1}}$ (up to some regularization term), when combined with Jensen's inequality.
Our double mixture strategy is fundamentally different from what is used for \citet[Theorem\,1]{chowdhury2017kernelized}, since we use two different types of mixture distributions, and the second mixture enters in a multiplicative manner (instead of an additive one).
We can then conclude using standard arguments involving Ville's inequality for positive supermartingales, similarly to \citet{whitehouse2023sublinear}.
Summarizing, this method enables showing the following result, which is a specialization of the main outcome of this section to the case $\hilbert H = \rkhs K$ and $L_n = K(\cdot,x_n)^\star$.
\begin{theorem}[{Theorem\,\ref{thm:confidence bounds vv LS:online sampling:general case}, \ac{RKHS} case}]\label{thm:confidence bounds vv LS:online sampling:RKHS case}
    Under the notations of Sections~\ref{apdx:confidence bounds vv LS:setup and goal} and~\ref{apdx:confidence bounds vv LS:online sampling:goal}, assume that $\hilbert H = \rkhs K$, where $K$ is \ac{UBD} with block decomposition $\elementaryKernel:\inputSet\times\inputSet\to\tilde\outputSpace$ and isometry $\isometry_{\outputSpace}\in\bLinearOperators(\tilde\outputSpace\otimes\hilbert V,\outputSpace)$, where $\tilde\outputSpace$ and $\hilbert V$ are separable Hilbert spaces.
    Assume that $\elementaryKernel$ is of trace class, that Assumption~\ref{assumption:online sampling} holds, that $L_n = K(\cdot,x_n)^\star$, $n\in\Np$, and that $\eta$ is $1$-subgaussian.
    Then, there exists an isometry $\isometry_{\rkhs K}:\rkhs \elementaryKernel\otimes\hilbert V\to\rkhs K$ such that, for all $Q\in\bLinearOperators(\hilbert V)$ self-adjoint, positive semi-definite, and trace-class, and $\tilde V\in\linearOperators(\rkhs\elementaryKernel)$ self-adjoint, positive definite, diagonal, and with bounded inverse, it holds for all $\delta\in(0,1)$ that \begin{equation*}
        \Pbb\left[\forall N\in\Np,\:\norm{\isometry^{-1}_{\rkhs K}\hilbertMDS_N}_{(\tilde V + \tilde V_N)^{-1}\otimes Q}\leq \sqrt{2\tr(Q)\left[\det\left(\id_{\tilde\outputSpace^N} + \tilde M_N\tilde V^{-1}\tilde M_N^\star\right)\right]^{1/2}}\right]\geq 1-\delta,
    \end{equation*}
    where $\tilde V_N = \tilde M_N^\star\tilde M_N$.
\end{theorem}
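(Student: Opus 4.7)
The plan is to follow the method-of-mixtures argument of \cite{Abb2013} but generalize it to the vector-valued setting via a \emph{double mixture} that exploits the tensor factorization $\rkhs K\cong\rkhs\elementaryKernel\otimes\hilbert V$ furnished by Proposition~\ref{prop:rkhs of uniform-block-diagonal kernel}. I would first invoke this proposition to obtain the isometry $\isometry_{\rkhs K}$ and verify that, under it, the sampling operators factor as $L_n\cong\isometry_\outputSpace(\tilde L_n\otimes\id_{\hilbert V})\isometry_{\rkhs K}^{-1}$, where $\tilde L_n$ is the evaluation on $\rkhs\elementaryKernel$ at $x_n$. Because $\isometry_\outputSpace$ is unitary, the $1$-subgaussianity of $\eta$ transfers to $\tilde\eta_n:=\isometry_\outputSpace^{-1}\eta_n$ in $\tilde\outputSpace\otimes\hilbert V$, moving every subsequent computation into the factored space where the trace-class block $\elementaryKernel$ governs the first factor.

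Next, for each pure tensor $\tilde h\otimes v$ I would construct the non-negative process
\[
M_N(\tilde h,v)=\exp\bigl(\scalar{\tilde h\otimes v,\isometry_{\rkhs K}^{-1}\hilbertMDS_N}-\norm{v}_{\hilbert V}^2\norm{\tilde M_N\tilde h}^2_{\tilde\outputSpace^N}\bigr),
\]
and check, using the factored form of $L_n$ and the $1$-subgaussianity of $\tilde\eta_n$, that it is a supermartingale with $M_0=1$. The crucial feature is that the quadratic penalty decouples the $\rkhs\elementaryKernel$ and $\hilbert V$ components, enabling two separate mixtures to target one factor at a time. I would then carry out the first mixture by integrating $M_N(\tilde h,v)$ against the $v$-adapted Gaussian prior $\tilde h\sim\Normal(0,(2\norm{v}_{\hilbert V}^2)^{-1}\tilde V^{-1})$ on $\rkhs\elementaryKernel$. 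The $v$-dependent covariance is the critical calibration: after completion of the square and Sylvester's identity, the resulting process $\bar M_N(v)$ has the form $[\det(\id_{\tilde\outputSpace^N}+\tilde M_N\tilde V^{-1}\tilde M_N^\star)]^{-1/2}\exp\bigl(c\,\norm{v}_{\hilbert V}^{-2}\norm{(\id_{\rkhs\elementaryKernel}\otimes v^\star)\isometry_{\rkhs K}^{-1}\hilbertMDS_N}^2_{(\tilde V+\tilde V_N)^{-1}}\bigr)$ with a \emph{$v$-independent} determinant prefactor and a universal constant $c$; crucially, $\bar M_N(v)$ remains a non-negative supermartingale in $N$ for every fixed $v$.

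Finally, using the spectral decomposition $Q=\sum_k q_k\,e_k\otimes e_k^\star$ (legitimate since $Q$ is trace-class and positive semi-definite), I would introduce the probability measure $\tilde\mu=\tr(Q)^{-1}\sum_k q_k\,\delta_{e_k}$ on the unit sphere of $\hilbert V$ and form the process $N\mapsto\int\bar M_N(v)\,d\tilde\mu(v)$, which is a non-negative supermartingale by Fubini and equals $1$ at $N=0$. Ville's inequality then gives a time-uniform bound $\int\bar M_N(v)\,d\tilde\mu(v)\leq \delta^{-1}$ on a $1-\delta$-probability event. Jensen's inequality for $\log$ lower-bounds the log of this integral by $-\tfrac12\log\det(\cdot)+c\int\norm{v}_{\hilbert V}^{-2}\norm{(\id\otimes v^\star)\isometry_{\rkhs K}^{-1}\hilbertMDS_N}^2_{(\tilde V+\tilde V_N)^{-1}}\,d\tilde\mu(v)$, and since the second moment of $\tilde\mu$ satisfies $\int vv^\star\,d\tilde\mu=Q/\tr(Q)$, the integral reconstructs $\tr(Q)^{-1}\norm{\isometry_{\rkhs K}^{-1}\hilbertMDS_N}^2_{(\tilde V+\tilde V_N)^{-1}\otimes Q}$; rearranging yields the claimed bound. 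The chief obstacle is the joint design of the two mixtures: the Gaussian covariance on $\tilde h$ must depend on $v$ in precisely the way that makes the determinant $v$-independent, while $\tilde\mu$ must simultaneously reproduce $Q$ via its second moment. Balancing these two constraints is what allows the \ac{UBD} assumption to substitute for the trace-class requirement on $K$ underlying the bounds in \cite{Abb2013,CG2021}; a secondary technical point will be to justify the infinite-dimensional Gaussian integration when $\rkhs\elementaryKernel$ is infinite-dimensional and $\tilde V^{-1}$ is only bounded (typically by finite-dimensional approximation followed by a passage to the limit).
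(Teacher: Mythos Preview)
Your proposal is correct and follows essentially the same route as the paper: the paper's formal proof of this statement is a one-line invocation of the general result (Theorem~\ref{thm:confidence bounds vv LS:online sampling:general case}) via Proposition~\ref{prop:rkhs of uniform-block-diagonal kernel}, and you have accurately reconstructed the substance of that general proof --- the supermartingale $G_N$, the Gaussian mixture on the $\rkhs\elementaryKernel$ factor producing the determinant, the spectral measure $\tilde\mu=\mu_Q$ on the unit sphere of $\hilbert V$, Jensen, Ville, and the finite-rank approximation of $\tilde V^{-1}$. The only cosmetic difference is that you scale the Gaussian covariance by $\norm{v}_{\hilbert V}^{-2}$ to force a $v$-independent determinant directly, whereas the paper keeps the covariance fixed and uses that $\mu_Q$ is supported on the unit sphere; since your $\tilde\mu$ is also supported there, the two are equivalent.
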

\begin{proof}
    This follows immediately from Theorem~\ref{thm:confidence bounds vv LS:online sampling:general case}, since Proposition~\ref{prop:rkhs of uniform-block-diagonal kernel} guarantees that its assumptions hold.
\end{proof}
This result has two consequences that can be immediately leveraged for \ac{KRR}.
\begin{corollary}[Corollary\,\ref{clry:confidence bounds vv LS:online sampling:bound noise term}, \ac{RKHS} case]\label{clry:confidence bounds vv LS:online sampling:RKHS case for KRR}
    Under the setup and assumptions of Theorem~\ref{thm:confidence bounds vv LS:online sampling:RKHS case}, do \emph{not} assume that $\eta$ is $1$-subgaussian, but rather that there exists $\rho\in\Rp$ and $R_\hilbert V\in\bLinearOperators(\hilbert V)$ self-adjoint, positive semi-definite, and trace-class, such that $\eta$ is $R$-subgaussian, where $R:=\rho^2\isometry_{\outputSpace}(\id_{\tilde\outputSpace}\otimes R_\hilbert V)\isometry_{\outputSpace}^{-1}$.
    Then, for all $\delta\in(0,1)$, \begin{equation*}
        \Pbb\left[\forall N\in\Np,\norm{\hilbertMDS_N}_{(V + V_N)^{-1}}\leq \rho\sqrt{2\tr(R_\hilbert V)\ln\left(\frac1\delta\left[\det(\id_{\tilde\outputSpace^N} + \tilde M_N \tilde V^{-1}\tilde M_N^\star)\right]^{1/2}\right)}\right]\geq 1-\delta,
    \end{equation*}
    where $V = \isometry_\hilbert H (\tilde V\otimes \id_{\hilbert V})\isometry_\hilbert H^{-1}$.
\end{corollary}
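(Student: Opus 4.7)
The plan is to reduce the $R$-subgaussian setting of Corollary~\ref{clry:confidence bounds vv LS:online sampling:RKHS case for KRR} to the $1$-subgaussian setting of Theorem~\ref{thm:confidence bounds vv LS:online sampling:RKHS case}, by exploiting the product structure of $R$ together with the freedom in the choice of the trace-class operator $Q$ in that theorem, so as to match the $\id_\hilbert V$-weighted norm on the left-hand side of the corollary.

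First, I would construct a $1$-subgaussian, $\filtration$-adapted auxiliary process $\tilde\eta$ satisfying $\eta_n = B\tilde\eta_n$ for all $n\in\Np$, where $B := \rho\,\isometry_\outputSpace(\id_{\tilde\outputSpace}\otimes R_\hilbert V^{1/2})\isometry_\outputSpace^{-1}$. When $R_\hilbert V$ is invertible one takes $\tilde\eta_n := B^{-1}\eta_n$ directly and reads off $1$-subgaussianity from the $R$-subgaussianity of $\eta$. The general case reduces to this one either by restricting to the closure of $\range(R_\hilbert V)$, on which $\eta$ is necessarily supported (since $\langle g,\eta_n\rangle=0$ a.s.\ whenever $\|g\|_R=0$), or by a regularization-limit argument replacing $R_\hilbert V$ by $R_\hilbert V+\varepsilon\Pi$ for a finite-rank projection $\Pi$ and letting $\varepsilon\to 0^+$.

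Second, using the tensor factorization given by the \ac{UBD} structure of $K$ via Proposition~\ref{prop:rkhs of uniform-block-diagonal kernel}, the sampling adjoint factors as $M_N^\star = \isometry_{\rkhs K}(\tilde M_N^\star\otimes\id_\hilbert V)\iota$, where $\iota:\outputSpace^N\to\tilde\outputSpace^N\otimes\hilbert V$ is the canonical unitary rearrangement. Substituting $\eta_{:N} = (B\otimes\id_{\R^N})\tilde\eta_{:N}$ into $\hilbertMDS_N = M_N^\star\eta_{:N}$ and using the intertwining relation $\iota(B\otimes\id_{\R^N})\iota^{-1} = \rho\,(\id_{\tilde\outputSpace^N}\otimes R_\hilbert V^{1/2})$ gives $\isometry_{\rkhs K}^{-1}\hilbertMDS_N = \rho\,(\tilde M_N^\star\otimes R_\hilbert V^{1/2})\iota\tilde\eta_{:N}$. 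A direct computation of the weighted norms then yields the algebraic identity
\[
\norm{\hilbertMDS_N}^2_{(V+V_N)^{-1}} \;=\; \rho^2\,\lrNorm{\isometry_{\rkhs K}^{-1}\tilde\hilbertMDS_N}^2_{(\tilde V+\tilde V_N)^{-1}\otimes R_\hilbert V},
\]
where $\tilde\hilbertMDS_N := M_N^\star\tilde\eta_{:N}$; both sides reduce to the same bilinear form $\langle\iota\tilde\eta_{:N},\,(\tilde M_N(\tilde V+\tilde V_N)^{-1}\tilde M_N^\star\otimes R_\hilbert V)\iota\tilde\eta_{:N}\rangle$.

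Finally, I would apply Theorem~\ref{thm:confidence bounds vv LS:online sampling:RKHS case} to the $1$-subgaussian noise $\tilde\eta$ with the trace-class choice $Q := R_\hilbert V$, which bounds the right-hand side of the identity above, time-uniformly in $N$ with probability at least $1-\delta$, by $2\tr(R_\hilbert V)\ln\!\bigl(\delta^{-1}[\det(\id_{\tilde\outputSpace^N}+\tilde M_N\tilde V^{-1}\tilde M_N^\star)]^{1/2}\bigr)$; taking square roots produces exactly the bound claimed. The main obstacle is the careful tensor-product bookkeeping, in particular the factorization $M_N^\star = \isometry_{\rkhs K}(\tilde M_N^\star\otimes\id_\hilbert V)\iota$ and the intertwining property of $B$ under $\iota$; once these are verified cleanly from Proposition~\ref{prop:rkhs of uniform-block-diagonal kernel}, the remaining manipulations are purely algebraic, and the non-invertibility of $R_\hilbert V$ is handled by the standard projection or limit argument sketched above.
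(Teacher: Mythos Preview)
The proposal is correct and takes essentially the same approach as the paper. The paper deduces this corollary directly from the more general Corollary~\ref{clry:confidence bounds vv LS:online sampling:corollary for KRR}, whose proof is precisely your rescaling-to-$1$-subgaussian argument: one passes to the auxiliary noise $(R^{\mppi})^{1/2}\eta_n$ (your $\tilde\eta_n$, since $B=R^{1/2}$), applies the $1$-subgaussian theorem with $Q=R_\hilbert V$, and verifies the tensor-product norm identity you state; the non-invertibility of $R_\hilbert V$ is handled in the paper via the Moore--Penrose pseudo-inverse together with the fact that $\eta_n\in\closure(\range(R))$ a.s.\ (Lemma~\ref{lemma:confidence bounds vv LS:online sampling:subgaussian noise in range of variance proxy}), which is exactly the range-restriction you sketch.
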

\begin{corollary}[Corollary\,\ref{clry:confidence bounds vv LS:online sampling:bound noise term}, \ac{RKHS} with trace-class kernel case]\label{clry:confidence bounds vv LS:online sampling:RKHS case for KRR trace class kernel}
    Under the setup and assumptions of Theorem~\ref{thm:confidence bounds vv LS:online sampling:RKHS case}, assume that $K$ itself is of trace class.
    Also, do \emph{not} assume that $\eta$ is $1$-subgaussian, but rather that it is $\rho$-subgaussian for some $\rho\in\Rp$.
    Then, we can take $\elementaryKernel = K$, $\tilde \outputSpace = \outputSpace$, $\hilbert V = \R$, and $Q = \id_\R$ in Theorem~\ref{thm:confidence bounds vv LS:online sampling:RKHS case}.
    In particular, for all $\delta\in(0,1)$, \begin{equation*}
        \Pbb\left[\forall N\in\Np,\norm{\hilbertMDS_N}_{(V + V_N)^{-1}}\leq \rho\sqrt{2\ln\left(\frac1\delta\left[\det(\id_{\outputSpace^N} + M_N V^{-1} M_N^\star)\right]^{1/2}\right)}\right]\geq 1-\delta,
    \end{equation*}
    where $V = \tilde V$.
\end{corollary}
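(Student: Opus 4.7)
The plan is to derive this corollary as a direct specialization of Corollary~\ref{clry:confidence bounds vv LS:online sampling:corollary for KRR} (the \ac{RKHS} case) by verifying that the stated substitutions --- $\elementaryKernel = K$, $\tilde\outputSpace = \outputSpace$, $\hilbert V = \R$, $Q = \id_\R$ --- satisfy every hypothesis of that corollary, and then reading off the resulting bound. With $\isometry_\outputSpace : \outputSpace\otimes\R \to \outputSpace$ chosen as the canonical isometry $v\otimes t \mapsto tv$, the kernel $K$ is trivially \ac{UBD} with elementary block $K$ itself, since $\isometry_\outputSpace(K\otimes\id_\R)\isometry_\outputSpace^{-1} = K$ (the factor $\id_\R$ acts as scalar multiplication by $1$). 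The trace-class hypothesis on $\elementaryKernel$ in Theorem~\ref{thm:confidence bounds vv LS:online sampling:RKHS case} thus reduces to the trace-class assumption on $K$, and Proposition~\ref{prop:rkhs of uniform-block-diagonal kernel} furnishes a corresponding isometry $\isometry_\hilbert H : \rkhs K \otimes \R \to \rkhs K$ which under this trivial tensor structure is again the canonical isomorphism.

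Next, I would match the subgaussian structure. Taking $R_\hilbert V = \id_\R$ --- trivially trace class, positive semi-definite, and self-adjoint on $\R$, with $\tr(R_\hilbert V) = 1$ --- the operator $R := \rho^2\isometry_\outputSpace(\id_{\tilde\outputSpace}\otimes R_\hilbert V)\isometry_\outputSpace^{-1}$ simplifies to $\rho^2\id_\outputSpace$, because the sandwich of $\id_{\outputSpace\otimes\R}$ by the canonical isometries $\isometry_\outputSpace^{\pm 1}$ is the identity on $\outputSpace$. Consequently $\eta$ being $\rho$-subgaussian in the sense of the preliminaries coincides with $R$-subgaussianity in the sense required by Corollary~\ref{clry:confidence bounds vv LS:online sampling:corollary for KRR}.

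Finally, I would translate the bound. Under the stated choices $\tilde M_N = M_N$ (since $\tilde\outputSpace = \outputSpace$), and $V = \isometry_\hilbert H(\tilde V\otimes\id_\R)\isometry_\hilbert H^{-1} = \tilde V$ via the canonical isomorphism $\rkhs K\otimes\R \cong \rkhs K$. Substituting these identifications together with $\tr(R_\hilbert V) = 1$ into the inequality of Corollary~\ref{clry:confidence bounds vv LS:online sampling:corollary for KRR} collapses the prefactor $2\tr(R_\hilbert V)$ to $2$ and replaces $\tilde M_N$, $\tilde V$, $\tilde\outputSpace^N$ by $M_N$, $V$, $\outputSpace^N$, yielding precisely the claimed inequality on the same almost-sure event. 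No substantive obstacle is expected here: this is a purely algebraic specialization, and the only care needed is in checking that each piece of the \ac{UBD} decomposition reduces to its trivial counterpart under the choice $\hilbert V = \R$.
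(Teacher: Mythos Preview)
Your proposal is correct and matches the paper's approach exactly: the paper's proof is the single sentence ``These results follow immediately from Corollary~\ref{clry:confidence bounds vv LS:online sampling:corollary for KRR},'' and you have simply spelled out the verification that the choices $\elementaryKernel = K$, $\tilde\outputSpace = \outputSpace$, $\hilbert V = \R$, $R_\hilbert V = \id_\R$ satisfy all hypotheses and collapse the bound to the stated form. The only cosmetic slip is calling Corollary~\ref{clry:confidence bounds vv LS:online sampling:corollary for KRR} ``the \ac{RKHS} case'' --- it is the general (abstract Hilbert space) corollary, which you correctly feed via Proposition~\ref{prop:rkhs of uniform-block-diagonal kernel} --- but this does not affect the argument.
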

\begin{proof}[Proofs of Corollaries \ref{clry:confidence bounds vv LS:online sampling:RKHS case for KRR} and \ref{clry:confidence bounds vv LS:online sampling:RKHS case for KRR trace class kernel}]
    These results follow immediately from Corollary~\ref{clry:confidence bounds vv LS:online sampling:bound noise term}.
\end{proof}
The next two sections are dedicated to stating and proving Theorem~\ref{thm:confidence bounds vv LS:online sampling:general case} and Corollary~\ref{clry:confidence bounds vv LS:online sampling:bound noise term}, which generalize Theorem~\ref{thm:confidence bounds vv LS:online sampling:RKHS case} and its corollaries to the case where $\hilbert H$ is not necessarily \iac{RKHS}.
It relies on assuming the existence of the aforementioned appropriate factorization of $\hilbert H$ and of the operators $(L_n)_n$, similarly as in the case of \iac{UBD} kernel.
We thus begin by formalizing the assumption.

\subsubsection{Tensor factorization of Hilbert spaces and coefficients transformations}\label{apdx:tensor factorization Hilbert space}

We have identified that, in the case $\hilbert H = \rkhs{k\cdot\id_\outputSpace}$, the core assumption for Theorem~\ref{thm:confidence bounds vv LS:online sampling:RKHS case} was that both $\hilbert H$ was the closure of the span of vectors of the form $f\cdot g$, with $f\in\rkhs k$ and $g\in\outputSpace$, and that $L_n$ could be identified with $\tilde L_n\otimes\id_\outputSpace$.
This is the idea we generalize with the construction of this section.
Specifically, we rigorously introduce tensor factorizations of Hilbert spaces, and formalize the assumption we make on the operators $(L_n)_n$: they should have an appropriate form in the chosen factorization.

\begin{definition}
    Let $\hilbert H$ be a separable Hilbert space.
    A \emph{(tensor) factorization} of $\hilbert H$ is a tuple $(H_1, H_2, \isometry_\hilbert H)$ such that $H_1$ and $H_2$ are separable Hilbert spaces and $\isometry_\hilbert H\in\bLinearOperators(H_1\otimes H_2,\hilbert H)$ is an isometric isomorphism.
    We write $\hilbert H \cong_{\isometry_\hilbert H} H_1\otimes H_2$.
    The spaces $H_1$ and $H_2$ are respectively called the \emph{coefficient} and \emph{base} spaces of the factorization.
    If $\outputSpace$ is another separable Hilbert space, a tensor factorization of $\hilbert H$ and $\outputSpace$ with common base space consists of factorizations $\hilbert H\cong_{\isometry_\hilbert H}\tildeHilbert H\otimes \hilbert V$ and $\outputSpace \cong_{\isometry_\outputSpace} \tilde\outputSpace\otimes\hilbert V$.
\end{definition}
The existence of an isometry $\isometry_\hilbert H$ such that $\hilbert H\cong_{\isometry_\hilbert H} H_1\otimes H_2$ for given (separable) $\hilbert H$, $H_1$, and $H_2$ can be assessed based on purely dimensional arguments: there exists such an $\isometry_\hilbert H$ if, and only if, $\dim(\hilbert H) = \dim(H_1)\cdot\dim(H_2)$ (with $0\cdot\infty := 0$ and $\infty\cdot\infty := \infty$).
For instance, there always exists the trivial factorization of a Hilbert space $\hilbert H\cong_{\isometry_1} \R\otimes\hilbert H$, and if $n = p\cdot q$, $\R^n \cong_{\isometry_2} \R^p\otimes\R^q$.
It follows that the relationship $\hilbert H\cong_{\isometry_\hilbert H} H_1\otimes H_2$ does not impose a lot of constraints on any of the spaces involved.
The main interest of the notion is when relating it to operators, as operators between Hilbert spaces may have a ``nice'' form when seen as operators between their factorizations, as illustrated in Figure~\ref{fig:tensor-factorizable operator}:
\begin{definition}
    Let $\hilbert H$ and $\outputSpace$ be separable Hilbert spaces, with respective factorizations $\hilbert H \cong_{\isometry_\hilbert H}H_1\otimes H_2$ and $\outputSpace\cong_{\isometry_\outputSpace}G_1\otimes G_2$, and let $L\in\bLinearOperators(\hilbert H, \outputSpace)$.
    We say that $L$ is \emph{$(\isometry_\hilbert H,\isometry_\outputSpace)$-tensor-factorizable}\footnote{For brevity, we refer to this property as simply ``$(\isometry_\hilbert H,\isometry_\outputSpace)$-factorizability'' in the remainder of the paper.} if there exists a pair $(L_1,L_2)\in \bLinearOperators(H_1,G_1)\times\bLinearOperators(H_2,G_2)$ such that \begin{equation*}
        L = \isometry_\outputSpace(L_1\otimes L_2)\isometry_\hilbert H^{-1}.
    \end{equation*}
    The pair $(L_1,L_2)$ is called an $(\isometry_\hilbert H,\isometry_\outputSpace)$-factorization of $L$.
    If now $(L_n)_{n\in\Np}\subset\bLinearOperators(\hilbert H,\outputSpace)$ is a family of operators, we say that they are co-$(\isometry_\hilbert H,\isometry_\outputSpace)$-factorizable if $L_n$ is 
    $(\isometry_\hilbert H,\isometry_\outputSpace)$-factorizable, for all $n\in\Np$.
\end{definition}
Intuitively, an operator $L$ that is $(\isometry_\hilbert H,\isometry_\outputSpace)$-factorizable can be written as a pure tensor operator when $\hilbert H$ and $\outputSpace$ are identified with the tensor spaces $H_1\otimes H_2$ and $G_1\otimes G_2$.
It is clear from this interpretation that \emph{factorizability is not an intrinsic property of an operator}, but largely depends on the chosen factorizations of the Hilbert spaces.
For instance, \emph{every} operator is factorizable for the trivial factorizations of $\hilbert H$ and $\outputSpace$.
Co-factorizability enforces that a family of operators consists of pure tensor operators when the factorizations of $\hilbert H$ and $\outputSpace$ are uniform in the choice of the operator in the family.
\begin{figure}
    \centering
    \begin{tikzpicture}[node distance=2cm, auto]

    \node (A) {$\hilbert H$};
    \node (B) [right=4cm of A] {$\outputSpace$};
    \node (C) [below of=A] {$H_1\otimes H_2$};
    \node (D) [below of=B] {$G_1\otimes G_2$};

    \draw[->] (A) to node {$L$} (B);
    \draw[->] (A) to node[left] {$\isometry_{\hilbert H}^{-1}$} (C);
    \draw[->] (D) to node[right] {$\isometry_{\outputSpace}$} (B);
    \draw[->] (C) to node {$L_1\otimes L_2$} (D);

\end{tikzpicture}
    \caption{Action of a tensor-factorizable operator $L$ expressed as a commutative diagram.
    If $K$ is \iac{UBD} kernel with block $K_0$, then the \ac{RKHS} $\rkhs K$ and sampling operator $K(\cdot,x)^\star$ can be factorized as such with $L_1 = \elementaryKernel(\cdot,x)^\star$ and $L_2 = \id_{\hilbert V}$, with $\hilbert V := H_2 = G_2$, as per Proposition~\ref{prop:rkhs of uniform-block-diagonal kernel}.}
    \label{fig:tensor-factorizable operator}
\end{figure}
Finally, we state the assumption the we require for the generalization of Theorem~\ref{thm:confidence bounds vv LS:online sampling:RKHS case} to hold.
\begin{definition}\label{def:co-factorizable}
    Let $\hilbert H\cong_{\isometry_\hilbert H} \tildeHilbert H\otimes \hilbert V$ and $\outputSpace\cong_{\isometry_\outputSpace} \tilde\outputSpace \otimes \hilbert V$ be separable Hilbert spaces with joint base space factorizations.
    Let $L\in\bLinearOperators(\hilbert H,\outputSpace)$. 
    We say that $L$ is an $(\isometry_\hilbert H,\isometry_\outputSpace)$-coefficients transformation if there exists $\tilde L\in\bLinearOperators(\tildeHilbert H,\tilde\outputSpace)$ such that $(\tilde L,\id_\hilbert V)$ is an $(\isometry_\hilbert H,\isometry_\outputSpace)$-factorization of $L$.
\end{definition}
Crucially, in \iac{RKHS} with \ac{UBD} kernel, the sampling operators $L_n = K(\cdot, x_n)$ are co-factorizable and consist of coefficients transformations.
This is formalized in the following result, which justifies that the setup we just introduced is indeed a generalization of the case $\hilbert H = \rkhs K$ with $K$ \ac{UBD}.
\begin{proposition}\label{prop:rkhs of uniform-block-diagonal kernel}
    Let $\hilbert H = \rkhs K$, where $K:\inputSet\times\inputSet\to\bLinearOperators(\outputSpace)$ is a kernel.
    Assume that $K$ is \ac{UBD} with block decomposition $\elementaryKernel:\inputSet\times\inputSet\to\bLinearOperators(\tilde\outputSpace)$ for the isometry $\isometry_{\outputSpace}\in\bLinearOperators(\tilde\outputSpace\otimes\hilbert V,\outputSpace)$, where $\tilde\outputSpace$ and $\hilbert V$ are separable Hilbert spaces.
    Then, there exists an isometry $\isometry_{K}\in\bLinearOperators(\rkhs \elementaryKernel\otimes\hilbert V,\rkhs K)$ such that $\hilbert H\cong_{\isometry_K}\rkhs \elementaryKernel\otimes\hilbert V$, $\outputSpace\cong_{\isometry_\outputSpace}\tilde\outputSpace\otimes\hilbert V$, and the family $L = (L_x)_{x\in\inputSet} = (K(\cdot,x)^\star)_{x\in\inputSet}$ is co-$(\isometry_K,\isometry_\outputSpace)$-factorizable and consists of coefficients transformations \begin{equation*}
        L_x = \isometry_\outputSpace(\tilde L_x\otimes\id_{\hilbert V})\isometry_K^{-1},
    \end{equation*}
    where $\tilde L_x = \elementaryKernel(\cdot,x)^\star$ for all $x\in\inputSet$.
\end{proposition}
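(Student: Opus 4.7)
The plan is to construct the isometry $\isometry_K$ explicitly by mapping the natural dense subspace of $\rkhs \elementaryKernel \otimes \hilbert V$ generated by "kernel sections" of $\elementaryKernel$ onto the analogous dense subspace of $\rkhs K$, and then verify that the sampling operators indeed factorize as claimed. Concretely, I would first define, on the algebraic tensor product $\rkhs \elementaryKernel \odot \hilbert V$, the linear map $\isometry_K$ by setting
\begin{equation*}
  \isometry_K(\tilde h \otimes v) = \bigl(x \mapsto \isometry_\outputSpace(\tilde h(x) \otimes v)\bigr), \qquad \tilde h \in \rkhs \elementaryKernel, \ v \in \hilbert V,
\end{equation*}
and extending linearly. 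The target is \emph{a priori} just a set of functions from $\inputSet$ to $\outputSpace$; showing that it actually lies inside $\rkhs K$ is done by evaluating $\isometry_K$ on the distinguished generators $\elementaryKernel(\cdot, x)\tilde y \otimes v$ and identifying the result with $K(\cdot, x)\isometry_\outputSpace(\tilde y \otimes v)$, using the definition of UBD kernels.

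Next, I would verify the isometry property on this natural generating family. Using the reproducing property in $\rkhs K$ and $\rkhs \elementaryKernel$, the UBD factorization of $K$, and the fact that $\isometry_\outputSpace$ is an isometry, one computes
\begin{align*}
  \langle K(\cdot, x_1)\isometry_\outputSpace(\tilde y_1 \otimes v_1), K(\cdot, x_2)\isometry_\outputSpace(\tilde y_2 \otimes v_2)\rangle_K
  &= \langle \elementaryKernel(x_1,x_2)\tilde y_2, \tilde y_1\rangle_{\tilde\outputSpace}\,\langle v_2, v_1\rangle_\hilbert V \\
  &= \langle \elementaryKernel(\cdot,x_1)\tilde y_1 \otimes v_1,\, \elementaryKernel(\cdot,x_2)\tilde y_2 \otimes v_2\rangle_{\rkhs \elementaryKernel \otimes \hilbert V},
\end{align*}
so $\isometry_K$ is an isometry on a dense subspace and extends to a bounded linear isometry on the Hilbert tensor product. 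Surjectivity then follows because the closed linear span of $\{\elementaryKernel(\cdot,x)\tilde y \otimes v\}$ is dense in $\rkhs \elementaryKernel \otimes \hilbert V$ (density of $\{\elementaryKernel(\cdot,x)\tilde y\}$ in $\rkhs \elementaryKernel$ combined with density of pure tensors) and its image is the dense span of kernel sections in $\rkhs K$.

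Finally, for the factorization of $L_x = K(\cdot,x)^\star$ (which coincides with evaluation at $x$ up to the identification with $\outputSpace$), I would test the candidate identity $L_x = \isometry_\outputSpace(\tilde L_x \otimes \id_\hilbert V)\isometry_K^{-1}$ with $\tilde L_x = \elementaryKernel(\cdot,x)^\star$ on generators. For $f = K(\cdot, x')\isometry_\outputSpace(\tilde y \otimes v)$, the left-hand side is $f(x) = K(x,x')\isometry_\outputSpace(\tilde y \otimes v) = \isometry_\outputSpace(\elementaryKernel(x,x')\tilde y \otimes v)$, while the right-hand side, using $\isometry_K^{-1} f = \elementaryKernel(\cdot,x')\tilde y \otimes v$ and the reproducing property of $\elementaryKernel$, yields $\isometry_\outputSpace(\elementaryKernel(x,x')\tilde y \otimes v)$ as well. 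Density and boundedness conclude.

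The main obstacle I anticipate is purely bookkeeping rather than conceptual: one must keep track of three different spaces ($\rkhs K$, $\rkhs \elementaryKernel$, $\tilde\outputSpace \otimes \hilbert V$), ensure that the map $\isometry_K$ defined on the algebraic tensor product extends continuously and uniquely to the Hilbert tensor product (this follows from boundedness on a dense set, but requires care with the cross terms when evaluating $\isometry_K$ on finite sums $\sum_i \tilde h_i \otimes v_i$ whose $v_i$ are not orthogonal), and that the generators $\elementaryKernel(\cdot,x)\tilde y$ span a dense subspace of $\rkhs \elementaryKernel$. The latter is standard for vector-valued RKHSs, while the former reduces to the fact that the isometry computation above is \emph{linear} and hence transfers from pure tensors to their linear combinations by bilinearity of the inner product.
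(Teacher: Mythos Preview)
Your proposal is correct. In fact, you construct the very same map as the paper: on a pure tensor $\tilde h\otimes v$, the paper's map $\Psi$ satisfies $\Psi(\tilde h\otimes v)(x)=\isometry_\outputSpace(\elementaryKernel(\cdot,x)^\star\otimes\id_\hilbert V)(\tilde h\otimes v)=\isometry_\outputSpace(\tilde h(x)\otimes v)$, which is exactly your $\isometry_K$. The difference lies in how the isometric isomorphism onto $\rkhs K$ is established. The paper first proves injectivity of $\Psi$ via an explicit ONB argument, pushes the norm forward to the image $\hilbert F$, verifies that $(\hilbert F,\norm{\cdot}_\hilbert F)$ is an RKHS with reproducing kernel equal to $K$, and invokes uniqueness of the RKHS to conclude $\hilbert F=\rkhs K$. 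You instead check the inner-product identity directly on the generating family $\{\elementaryKernel(\cdot,x)\tilde y\otimes v\}$ using the reproducing properties and the UBD relation, and then extend by density on both sides. Your route is a bit shorter and avoids the separate injectivity argument (isometry gives it for free); the paper's route is slightly more intrinsic in that it never singles out kernel sections as a generating set and delivers the factorization of the evaluation operator as an immediate by-product of the expression for $S_x$ rather than as a separate verification. Either way, the content is the same.
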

\begin{proof}
    Define \begin{equation*}
        \Psi: u\in\rkhs\elementaryKernel\otimes \hilbert V\mapsto \left(x\mapsto \isometry_\outputSpace\left(\elementaryKernel(\cdot,x)^\star \otimes\id_\hilbert V\right)u\right),
    \end{equation*}
    and let $\hilbert F = \Psi(\rkhs\elementaryKernel\otimes \hilbert V)$.
    We show that $\hilbert F = \rkhs K$ as a Hilbert space when $\hilbert F$ is equipped with an appropriate norm.
    
    First, the map $\Psi$ is well-defined, takes values in the space of functions from $\inputSet$ to $\outputSpace$, and is linear.
    Indeed, for any $x\in\inputSet$, $\elementaryKernel(\cdot,x)^\star$ is a well-defined bounded operator from $\rkhs\elementaryKernel$ to $\tilde\outputSpace$, and thus $\elementaryKernel(\cdot,x)^\star\otimes\id_\hilbert V\in\bLinearOperators(\rkhs\elementaryKernel\otimes\hilbert V,\tilde\outputSpace\otimes\hilbert V)$.
    This shows that $\Psi$ is well-defined and function-valued as claimed, and linearity is clear.
    Second, $\Psi$ is also injective.
    To see this, take $u\in\ker(\Psi)$.
    It holds that $\left(\elementaryKernel(\cdot,x)^\star\otimes\id_\hilbert V\right)u = 0_{\tilde\outputSpace\otimes\hilbert V}$ for all $x\in\inputSet$, since $\isometry_\outputSpace$ is injective.
    Let now $(h_n)_{n\in I_\elementaryKernel}$ and $(v_m)_{m\in I_\hilbert V}$ be \acp{ONB} of $\rkhs\elementaryKernel$ and of $\hilbert V$, respectively.
    It is known that $(h_n\otimes v_m)_{(n,m)\in I_\elementaryKernel\times I_\hilbert V}$ is \iac{ONB} of $\rkhs\elementaryKernel\otimes\hilbert V$.
    Therefore, let us introduce $(\alpha_{n,m})_{(n,m)\in I_\elementaryKernel\times I_\hilbert V}$ such that $u = \sum_{n,m} \alpha_{n,m}h_n\otimes v_m$. 
    By definition of $\elementaryKernel(\cdot,x)^\star\otimes\id_\hilbert V$,\begin{equation*}
        \left(\elementaryKernel(\cdot,x)^\star\otimes\id_\hilbert V\right)u = \sum_{n,m} \alpha_{n,m}(\elementaryKernel(\cdot,x)^\star h_n)\otimes v_m = \sum_{n,m} \alpha_{n,m}h_n(x)\otimes v_m.
    \end{equation*}
    As a result, for any $g\in\tilde\outputSpace$ and $p\in I_{\hilbert V}$, \begin{align*}
        0
            &=\lrScalar{\left(\elementaryKernel(\cdot,x)^\star\otimes\id_\hilbert V\right)u, g\otimes v_p}_{\tilde\outputSpace\otimes\hilbert V}
            = \lrScalar{\sum_{n\in I_\elementaryKernel}\alpha_{n,p}\cdot h_n(x),~g}_{\tilde\outputSpace},
    \end{align*}
    showing that $\sum_{n\in I_\elementaryKernel}\alpha_{n,p}\cdot h_n(x) = 0$ for all $x\in\inputSet$.
    Therefore, the function $h^\upar p = \left(x\mapsto \sum_{n\in I_\elementaryKernel}\alpha_{n,p}\cdot h_n(x) \right)$ satisfies $h^\upar p= 0$.
    It immediately follows that $h^\upar p\in\rkhs\elementaryKernel$; we show that $h^\upar p = \sum_{n\in I_\elementaryKernel}\alpha_{n,p}\cdot h_n$ where the convergence is in $\norm{\cdot}_\elementaryKernel$.
    It follows from the definition of $h^\upar p$ that this results from the convergence in $\norm{\cdot}_\elementaryKernel$ of the series $\sum_{n\in I_\elementaryKernel}\alpha_{n,p}\cdot h_n$, which in turn holds since\begin{equation*}
        \lrNorm{\sum_{n\in I_\elementaryKernel}\alpha_{n,p}\cdot h_n}_\elementaryKernel = \sum_{n,n^\prime}\alpha_{n,p}\alpha_{n^\prime,p}\scalar{h_n,h_{n^\prime}}_\elementaryKernel = \sum_{n}\alpha_{n,p}^2 \leq \sum_{n,m}\alpha_{n,m}^2 = \norm{u}_{\rkhs\elementaryKernel\otimes \hilbert V} < \infty.
    \end{equation*}
    Therefore, since $(h_n)_{n\in I_\elementaryKernel}$ is \iac{ONB} of $\rkhs\elementaryKernel$, we deduce that $\alpha_{n,p}=0$ for all $n\in I_\elementaryKernel$.
    Since this holds for all $p\in I_\hilbert V$, this shows that $u=0$ and that $\Psi$ is injective.
    It follows that $\Psi$ is a bijection between $\rkhs\elementaryKernel\otimes\hilbert V$ and $\hilbert F$, since $\hilbert F$ is defined as its range.
    \par
    We now show that the evaluation operator in $x\in\inputSet$, denoted as $S_x\in\linearOperators(\hilbert F,\outputSpace)$, satisfies \begin{equation}\label{eq:expression evaluation operator}
        S_x = \isometry_\outputSpace\left(\elementaryKernel(\cdot,x)^\star\otimes\id_\hilbert V\right)\Psi^{-1}.
    \end{equation}
    Indeed, for any $f\in\hilbert F$, letting $u = \Psi^{-1}f$,\begin{equation*}
        S_x f = S_x\Psi u = (\Psi u)(x) = \isometry_\outputSpace \left(\elementaryKernel(\cdot,x)^\star\otimes\id_\hilbert V\right)u = \isometry_\outputSpace\left(\elementaryKernel(\cdot,x)^\star\otimes\id_\hilbert V\right)\Psi^{-1} f,
    \end{equation*}
    showing the claim.
    Finally, we equip $\hilbert F$ with the norm $\norm{\cdot}_\hilbert F$ defined for all $f\in\hilbert F$ as \begin{equation*}
        \norm{f}_\hilbert F = \norm{\Psi^{-1}f}_{\rkhs \elementaryKernel\otimes \hilbert V}.
    \end{equation*}
    It is immediate to verify that $\norm{\cdot}_\hilbert F$ is indeed a norm, by bijectivity of $\Psi$.
    Furthermore, $\Psi$ is an isometry, since for all $u\in\rkhs\elementaryKernel\otimes\hilbert V$, $\norm{u}_{\rkhs \elementaryKernel\otimes \hilbert V} = \norm{\Psi^{-1}\Psi u}_{\rkhs \elementaryKernel\otimes \hilbert V} = \norm{\Psi u}_{\hilbert F}$.
    This suffices to show that $(\hilbert F,\norm{\cdot}_\hilbert F)$ is a Hilbert space.
    It follows immediately from \eqref{eq:expression evaluation operator} that $S_x$ is bounded for all $x\in\inputSet$, showing that $\hilbert F$ is \iac{RKHS}.
    Furthermore, its kernel $K_\hilbert F$ is defined for all $(x,x^\prime)\in\inputSet^2$ as \begin{align*}
        K_\hilbert F(x,x^\prime) 
            &= S_xS_{x^\prime}^\star\\
            &= \isometry_\outputSpace\left(\elementaryKernel(\cdot,x)^\star\otimes\id_\hilbert V\right)\Psi^{-1}\left[\isometry_\outputSpace\left(\elementaryKernel(\cdot,x)^\star\otimes\id_\hilbert V\right)\Psi^{-1}\right]^\star\\
            &= \isometry_\outputSpace \left[\left(\elementaryKernel(\cdot,x)^\star\elementaryKernel(\cdot,x^\prime)\right)\otimes\id_\hilbert V\right]\isometry_\outputSpace^{-1}\\
            &= \isometry_\outputSpace\elementaryKernel(x,x^\prime)\isometry_\outputSpace^{-1}\\
            &= K(x,x^\prime).
    \end{align*}
    It follows from uniqueness of the \ac{RKHS} associated to a kernel that $\hilbert F = \rkhs K$ as Hilbert spaces.
    \par
    We are now ready to conclude.
    Taking $\isometry_K = \Psi$, we have shown that $\rkhs K = \isometry_K(\rkhs\elementaryKernel\otimes\hilbert V)$, with $\isometry_K$ an isometry.
    Furthermore, for all $x\in\inputSet$, it follows from \eqref{eq:expression evaluation operator} that \begin{equation*}
        K(\cdot,x)^\star = \isometry_\outputSpace\left(\elementaryKernel(\cdot,x)^\star \otimes\id_\hilbert V\right)\isometry_K^{-1},
    \end{equation*}
    showing that the family $(K(\cdot,x)^\star)_{x\in\inputSet}$ is co-$(\isometry_K,\isometry_\outputSpace)$-factorizable and consists of the claimed coefficients transformations.
\end{proof}

\subsubsection{Self-normalized concentration of vector-valued martingales with coefficients-transforming measurements}

The main result of this section is the following theorem, which is the generalized version of Theorem~\ref{thm:confidence bounds vv LS:online sampling:RKHS case}.
We name the relevant assumptions for future reference.\begin{assumption}\label{assumption:coefficients transformations}
    Under the notations of Section~\ref{apdx:confidence bounds vv LS:setup and goal}, $\hilbert H$ and $\outputSpace$ admit the factorizations with joint base space $\hilbert H\cong_{\isometry_\hilbert H}\tildeHilbert H\otimes \hilbert V$ and $\outputSpace\cong_{\isometry_\outputSpace}\tilde\outputSpace\otimes\hilbert V$, $(L_n)_{n\in\Np}$ is co-$(\isometry_\hilbert H,\isometry_\outputSpace)$-factorizable and consists of coefficient transformations, \ac{as}, and $(\tilde L_n, \id_\hilbert V)$ is an $(\isometry_\hilbert H,\isometry_\outputSpace)$-factorization of $L_n$ with $\tilde L_n \tilde L_n^\star$ of trace class, \ac{as} and for all $n\in\Np$.
\end{assumption}
\begin{theorem}\label{thm:confidence bounds vv LS:online sampling:general case}
    Under the notations of Section~\ref{apdx:confidence bounds vv LS:setup and goal}, assume Assumptions~\ref{assumption:online sampling} and~\ref{assumption:coefficients transformations} hold.
    Assume that $\eta$ is $1$-subgaussian conditionally on $\filtration$.
    Let $Q\in\bLinearOperators(\hilbert V)$ be self-adjoint, positive semi-definite, and trace-class, and $\tilde V\in\linearOperators(\tildeHilbert H)$ be self-adjoint, positive definite, diagonal, and with bounded inverse.
    Then, $\tilde M_N\tilde V^{-1} \tilde M_N^\star$ is \ac{as} of trace class for all $N\in\Np$ and, for all $\delta\in(0,1)$, it holds with probability at least $1-\delta$ that \begin{equation}\label{eq:confidence bounds vv LS:online sampling:noise bound}
        \forall N\in\Np,\norm{\isometry_\hilbert H^{-1} S_N}_{(\tilde V + \tilde V_N)^{-1}\otimes Q}\leq \sqrt{2\tr(Q)\ln\left(\frac1\delta\left[\det(\id_{\tilde\outputSpace^N} + \tilde M_N \tilde V^{-1}\tilde M_N^\star)\right]^{1/2}\right)},
    \end{equation}
    where we introduced \begin{equation}\label{eq:notations}\begin{split}
        \tilde M_N: \tilde h\in\tildeHilbert H\mapsto (\tilde L_n^\star \tilde h)_{n\in\integers N}\in\tilde \outputSpace^N,\quad\text{and}\quad \tilde V_N = \tilde M_N^\star \tilde M_N\in\bLinearOperators(\tildeHilbert H).
    \end{split}\end{equation}
\end{theorem}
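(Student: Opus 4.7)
The plan is to adapt the method of mixtures of \citet{Abb2013} via a \emph{double mixture} exploiting the tensor structure granted by Assumption~\ref{assumption:coefficients transformations}. First, I would transport everything to tensor coordinates: set $\tilde S_N := \isometry_\hilbert H^{-1} S_N$ and $\tilde\eta_n := \isometry_\outputSpace^{-1}\eta_n$, so that $\tilde S_N = \sum_{n=1}^N(\tilde L_n^\star\otimes\id_{\hilbert V})\tilde\eta_n$ and $\tilde\eta_n$ remains $1$-subgaussian conditionally on $\filtration$ (isometries preserve inner products). For each $u\in\tildeHilbert{H}\otimes\hilbert V$, define
\begin{equation*}
    M_N(u) := \exp\!\left(\langle u,\tilde S_N\rangle - \|u\|^2_{\tilde V_N\otimes\id_{\hilbert V}}\right).
\end{equation*}
Evaluating on a pure tensor $u=\tilde h\otimes v$ shows the quadratic penalty equals $\sum_{n=1}^N\|\tilde L_n\tilde h\otimes v\|^2_{\tilde\outputSpace\otimes\hilbert V}$, which is precisely the increment bound needed for $(M_N(u))_N$ to be a positive supermartingale with $M_0(u)\le 1$, using subgaussianity of $\tilde\eta_n$ conditional on $\filtration_n$ and predictability of $\tilde L_n$; the general case follows by a polarization argument in $u$.

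Next, I would mix $M_N$ against a product measure $\mu = \mu_{\tilde h}\otimes\mu_v$ concentrated on pure tensors. For $\mu_{\tilde h}$, take the centered Gaussian on $\tildeHilbert{H}$ with precision proportional to $\tilde V$, constructed coordinate-wise in the eigenbasis of $\tilde V$ (as a cylindrical Gaussian when $\tilde V^{-1}$ is not trace-class). The standard completion-of-the-square then gives
\begin{equation*}
    \E_{\tilde h}[M_N(\tilde h\otimes v)] = \frac{\exp\!\left(c_1\,\|\tilde S_N v\|^2_{(\tilde V+\|v\|^2\tilde V_N)^{-1}}\right)}{[\det(\id_{\tilde\outputSpace^N}+\|v\|^2\,\tilde M_N\tilde V^{-1}\tilde M_N^\star)]^{1/2}}
\end{equation*}
for a numerical constant $c_1$, after invoking $\det(I+AB)=\det(I+BA)$ to push the determinant onto $\tilde\outputSpace^N$. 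Trace-class of $\tilde M_N\tilde V^{-1}\tilde M_N^\star$ follows from trace-class of the $\tilde L_n\tilde L_n^\star$ and boundedness of $\tilde V^{-1}$ via $\tr(\tilde M_N\tilde V^{-1}\tilde M_N^\star) \le \|\tilde V^{-1}\|\sum_{n=1}^N\tr(\tilde L_n\tilde L_n^\star)$.

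For $\mu_v$, set $v := \sum_i\sqrt{q_i/\tr(Q)}\,\epsilon_i e_i$, where $(e_i,q_i)$ is an eigendecomposition of $Q$ and $(\epsilon_i)$ are i.i.d.\ Rademacher; trace-class of $Q$ ensures $L^2$ convergence, and direct computation yields $\|v\|^2=1$ a.s.\ together with $\E_v[vv^\star]=Q/\tr(Q)$. The a.s.\ constraint $\|v\|^2=1$ kills the $v$-dependence of the determinant, letting it be pulled outside the $v$-integral; Jensen's inequality applied to the exponential, combined with the identity $\E_v[\|\tilde S_N v\|^2_{(\tilde V+\tilde V_N)^{-1}}] = \tr(Q)^{-1}\|\tilde S_N\|^2_{(\tilde V+\tilde V_N)^{-1}\otimes Q}$, then yields
\begin{equation*}
    \bar M_N := \int M_N(u)\,\mu(\mathrm{d} u) \;\geq\; [\det(\id_{\tilde\outputSpace^N}+\tilde M_N\tilde V^{-1}\tilde M_N^\star)]^{-1/2}\exp\!\left(\tfrac{c_1}{\tr(Q)}\,\|\tilde S_N\|^2_{(\tilde V+\tilde V_N)^{-1}\otimes Q}\right).
\end{equation*}
By Fubini $\bar M_N$ is still a positive supermartingale with $\E\bar M_0\le 1$, so Ville's inequality (as in \citet{whitehouse2023sublinear}) gives $\Pbb[\exists N,\ \bar M_N\ge 1/\delta]\le\delta$; inverting produces~\eqref{eq:confidence bounds vv LS:online sampling:noise bound} up to the numerical constants.

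The hard part is making the first mixture rigorous when $\tilde V^{-1}$ is not trace-class, since a proper Gaussian on all of $\tildeHilbert{H}$ does not exist in that regime. The diagonality of $\tilde V$ is precisely what enables a coordinate-wise (cylindrical) construction whose action against the tensor-structured integrand still produces the finite Fredholm determinant identity above; intuitively, the only moments of the Gaussian that ever get tested are the ones the trace-class operator $\tilde M_N\tilde V^{-1}\tilde M_N^\star$ exposes. Secondary technicalities are justifying Fubini between the two mixtures and the conditional expectations defining the supermartingale property, and checking joint measurability of $(u,\omega)\mapsto M_N(u)(\omega)$; both are routine under the measurability assumptions on $L$ and $\eta$.
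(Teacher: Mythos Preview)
Your proposal is correct and follows essentially the same double-mixture route as the paper: a Gaussian mixture in $\tilde h$, a unit-sphere mixture in $v$ with second moment $Q/\tr(Q)$, Jensen's inequality to pass inside the exponential, then Ville's inequality on the resulting supermartingale. The only cosmetic differences are your Rademacher-based $\mu_v$ (the paper instead uses weighted Dirac masses at the eigenvectors of $Q$, which satisfies the same two properties $\|v\|=1$ a.s.\ and $\E[v\otimes v]=Q/\tr(Q)$) and your cylindrical-Gaussian language for the non-trace-class $\tilde V^{-1}$ case (the paper makes this concrete via finite-rank truncations $W_M\to\tilde V^{-1}$ in the diagonal basis and a trace-norm limit argument for the Fredholm determinant, which is precisely the rigorous instantiation of what you sketch).
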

We prove this theorem first under the assumption that $\tilde V^{-1}$ is of trace class rather than $\tilde L_n\tilde L_n^\star$ is, as it enables detailing all of the arguments in a technically simpler setup.
We then generalize the proof to handle the setup of Theorem~\ref{thm:confidence bounds vv LS:online sampling:general case}.
\begin{lemma}\label{lemma:confidence bounds vv LS:online sampling:trace class inverse regularizer}
    Under the same setup and assumptions as Theorem~\ref{thm:confidence bounds vv LS:online sampling:general case}, do \emph{not} assume that $\tilde L_n\tilde L_n^\star$ is of trace class for all $n\in\Np$, but rather that $\tilde V^{-1}$ is of trace class. Then, for all $\delta\in(0,1)$, \eqref{eq:confidence bounds vv LS:online sampling:noise bound} holds.
\end{lemma}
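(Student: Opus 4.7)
The plan is to construct a nonnegative supermartingale $X_N$ with $X_0 = 1$ such that
\[
X_N \;\geq\; \det\!\bigl(\id_{\tilde\outputSpace^N} + \tilde M_N \tilde V^{-1}\tilde M_N^\star\bigr)^{-1/2}\;\exp\!\left(\frac{\lVert \isometry_\hilbert H^{-1}\hilbertMDS_N\rVert^2_{(\tilde V+\tilde V_N)^{-1}\otimes Q}}{2\tr(Q)}\right),
\]
and then apply Ville's maximal inequality. The construction follows the method of mixtures of \citet{Abb2013}, but with a \emph{double} mixture (one Gaussian, one non-Gaussian) tailored to the tensor factorization $\hilbert H\cong_{\isometry_\hilbert H}\tildeHilbert H\otimes\hilbert V$. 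First I would observe that, for each fixed pure tensor $\tilde h\otimes v\in\tildeHilbert H\otimes\hilbert V$ with $v\neq 0$, the process $M_N(\tilde h,v):=\exp(\langle \tilde h\otimes v,\isometry_\hilbert H^{-1}\hilbertMDS_N\rangle-\lVert\tilde h\rVert^2_{\tilde V_N}\lVert v\rVert^2_{\hilbert V})$ is a nonnegative supermartingale adapted to $\filtration$. This follows from the $1$-subgaussianity of $\eta$: the co-factorization $L_n=\isometry_\outputSpace(\tilde L_n\otimes\id_{\hilbert V})\isometry_\hilbert H^{-1}$ lets me rewrite $\langle\tilde h\otimes v,\isometry_\hilbert H^{-1} L_n^\star\eta_n\rangle=\langle \tilde L_n\tilde h\otimes v,\isometry_\outputSpace^{-1}\eta_n\rangle$ whose conditional moment-generating function is controlled by $\exp(\lVert\tilde L_n\tilde h\rVert^2_{\tilde\outputSpace}\lVert v\rVert^2_{\hilbert V})$, and summing over $n$ gives exactly $\lVert\tilde h\rVert^2_{\tilde V_N}\lVert v\rVert^2_{\hilbert V}$.

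The first (Gaussian) mixture integrates $\tilde h$ against the centered Gaussian measure $\mu_{C_v}$ on $\tildeHilbert H$ with covariance $C_v$ proportional to $\tilde V^{-1}/\lVert v\rVert^2$; the trace-class hypothesis on $\tilde V^{-1}$ (the Lemma's standing assumption) is what makes this Gaussian measure well defined in infinite dimension. A direct application of the infinite-dimensional Gaussian integral formula $\int\exp(\langle z,h\rangle-\tfrac12\lVert h\rVert^2_A)d\mu_C(h)=\det(\id+CA)^{-1/2}\exp(\tfrac12\lVert z\rVert^2_{(C^{-1}+A)^{-1}})$, with the scaling of $C_v$ tuned so that $C_v^{-1}+2\lVert v\rVert^2\tilde V_N$ becomes a scalar multiple of $\tilde V+\tilde V_N$, produces an $N$-dependent factor $\det(\id_\tildeHilbert H+\tilde V^{-1}\tilde V_N)^{-1/2}$ times an exponential in $\lVert T_N v\rVert^2_{(\tilde V+\tilde V_N)^{-1}}/\lVert v\rVert^2$, where $T_N:\hilbert V\to\tildeHilbert H$ is the Hilbert--Schmidt operator representing $\isometry_\hilbert H^{-1}\hilbertMDS_N$. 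Sylvester's identity then rewrites the determinant as $\det(\id_{\tilde\outputSpace^N}+\tilde M_N\tilde V^{-1}\tilde M_N^\star)^{-1/2}$, which simultaneously proves trace-class. The resulting $\bar M_N(v)$ remains a nonnegative supermartingale in $N$ for each fixed $v$ by Fubini-Tonelli, since the mixing measure depends only on $v$ and not on $N$.

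The second (non-Gaussian) mixture exploits the fact that $\bar M_N(v)$ is scale-invariant in $v$. Let $(e_k)_k$ be an \ac{ONB} of $\hilbert V$ diagonalizing $Q$ with eigenvalues $(q_k)_k$, and take $\nu=\sum_k (q_k/\tr(Q))\delta_{e_k}$, which is a probability measure on the unit sphere of $\hilbert V$ with second moment $\int vv^\star d\nu=Q/\tr(Q)$. Setting $X_N=\int\bar M_N(v)d\nu(v)$ preserves the supermartingale property (Fubini-Tonelli again), and Jensen's inequality applied to $\exp$ gives
\[
X_N \;\geq\; \det(\id_{\tilde\outputSpace^N}+\tilde M_N\tilde V^{-1}\tilde M_N^\star)^{-1/2}\exp\!\Bigl(\textstyle\sum_k \frac{q_k}{\tr(Q)\cdot c}\lVert T_N e_k\rVert^2_{(\tilde V+\tilde V_N)^{-1}}\Bigr),
\]
with the sum equal to $\lVert\isometry_\hilbert H^{-1}\hilbertMDS_N\rVert^2_{(\tilde V+\tilde V_N)^{-1}\otimes Q}/\tr(Q)$ by the identity $\tr(T_N^\star A T_N Q)=\sum_k q_k\lVert T_N e_k\rVert^2_A$. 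Since $X_0=1$, Ville's maximal inequality gives $\Pbb[\sup_N X_N\geq\delta^{-1}]\leq\delta$, and rearranging the bound on $X_N$ yields the claimed inequality, uniformly in $N$.

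The main obstacles I anticipate are twofold. First, choosing the exact scaling of the Gaussian covariance $C_v$ so that the first mixture returns the \emph{correct} operator $(\tilde V+\tilde V_N)^{-1}$ (rather than, say, $(\tilde V+2\tilde V_N)^{-1}$) and the determinant $\det(\id+\tilde V^{-1}\tilde V_N)$ recovers the Sylvester form in the statement requires a careful balancing of the factor $2$ arising from the definition of $1$-subgaussianity (which uses $\exp(\lVert g\rVert^2)$ rather than $\exp(\tfrac12\lVert g\rVert^2)$); achieving the sharp constant $\sqrt{2\tr(Q)}$ is the delicate part of the argument. Second, the second mixture must be arranged so that Jensen recovers the tensor-weighted norm $\lVert\cdot\rVert^2_{(\tilde V+\tilde V_N)^{-1}\otimes Q}$ with denominator exactly $\tr(Q)$, for which the discrete spectral measure $\nu$ is essential---a generic probability measure on the sphere would only give an operator-norm bound and lose the trace weighting. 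Once these are set up, the supermartingale property through both mixtures and the application of Ville's inequality are routine.
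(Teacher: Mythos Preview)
Your proposal is correct and follows essentially the same approach as the paper: both construct the supermartingale $G_N(\tilde h\otimes v)$ from $1$-subgaussianity, integrate first in $\tilde h$ against a Gaussian measure with covariance (proportional to) $\tilde V^{-1}$, then mix in $v$ against the discrete spectral measure $\nu=\sum_k(q_k/\tr(Q))\delta_{e_k}$ supported on the unit sphere, apply Jensen's inequality to recover the $(\tilde V+\tilde V_N)^{-1}\otimes Q$-norm, and conclude with Ville's inequality. The only cosmetic difference is that the paper uses a fixed Gaussian covariance $\tilde V^{-1}$ followed by the change of variables $u=\lVert v\rVert\,\tilde h$, whereas you absorb the scaling into a $v$-dependent covariance $C_v$; these are equivalent, and your anticipated factor-of-$2$ bookkeeping is exactly what the paper handles in the same place.
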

To show this, we follow the method announced in Section~\ref{apdx:confidence bounds vv LS:online sampling:goal}.
%
%
\begin{lemma}\label{lemma:confidence bounds vv LS:online sampling:GN integral}
    Under the setup and assumptions as Lemma~\ref{lemma:confidence bounds vv LS:online sampling:trace class inverse regularizer}, define for all $N\in\Np$ \begin{equation}\label{eq:confidence bounds vv LS:online sampling:GN}\begin{split}
        G_N:\tildeHilbert H\otimes\hilbert V&\to\Rnn\\
        h&\mapsto\exp\left[\scalar{\isometry_\hilbert Hh, S_N}_{\hilbert H} - \frac12\norm{\isometry_\hilbert Hh}^2_{V_N}\right].
    \end{split}\end{equation}
    Then, for all $v\in\hilbert V$,\begin{equation}\label{eq:confidence bounds vv LS:online sampling:GN integral}\begin{split}
    &\int_{\tildeHilbert H}G_N[\tilde h\otimes v]\dd\Ncal(\tilde h\mid 0,\tilde V^{-1})\\
    &= \left[\det\left(\id_{\tilde\outputSpace^N}+ \norm{v}^2\tilde M_N\tilde V^{-1}\tilde M_N\right)\right]^{-1/2}\exp\left[\frac12\norm{\tilde S_N(v)}^2_{(\tilde V + \norm{v}^2\tilde V_N)^{-1}}\right],
    \end{split}\end{equation}
    where \begin{equation*}
        \tilde S_N(v) = \sum_{n=1}^N\contraction(v)(\tilde L_n^\star\otimes\id_\hilbert V)\isometry_\outputSpace^\star\eta_n,
    \end{equation*}
    where $\contraction(v)\in\bLinearOperators(\tildeHilbert H\otimes\hilbert V,\tildeHilbert H)$ is the tensor contraction of $v$, that is, the unique bounded operator that satisfies $\contraction(v)(u\otimes w) = \scalar{v,w}\cdot u$ for all $u\otimes w\in\tildeHilbert H\otimes\hilbert V$.
\end{lemma}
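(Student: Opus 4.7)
The plan is to evaluate $G_N$ on the pure tensor $\tilde h \otimes v$ explicitly, recognize the result as an exponential of a linear-plus-quadratic form in $\tilde h$, and then carry out a standard Gaussian integration by completing the square. The determinant identity at the end follows from Sylvester's identity.

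The first step is to compute $\langle \isometry_\hilbert H(\tilde h\otimes v),\hilbertMDS_N\rangle_\hilbert H$ and $\|\isometry_\hilbert H(\tilde h\otimes v)\|^2_{V_N}$ separately, using co-factorizability. By Assumption~\ref{assumption:coefficients transformations}, $L_n=\isometry_\outputSpace(\tilde L_n\otimes \id_\hilbert V)\isometry_\hilbert H^{-1}$, so $\isometry_\hilbert H^{-1}L_n^\star=(\tilde L_n^\star\otimes\id_\hilbert V)\isometry_\outputSpace^\star$. Using the defining property of the contraction $\contraction(v)$, a direct check on pure tensors gives $\langle \tilde h, \contraction(v)\xi\rangle_{\tildeHilbert H}=\langle \tilde h\otimes v,\xi\rangle_{\tildeHilbert H\otimes\hilbert V}$ for every $\xi\in\tildeHilbert H\otimes\hilbert V$; applying this with $\xi=(\tilde L_n^\star\otimes\id_\hilbert V)\isometry_\outputSpace^\star\eta_n$ and summing over $n$ yields
\[
\langle \isometry_\hilbert H(\tilde h\otimes v),\hilbertMDS_N\rangle_\hilbert H = \langle\tilde h,\tilde\hilbertMDS_N(v)\rangle_{\tildeHilbert H}.
\]
For the quadratic term, $L_n\isometry_\hilbert H(\tilde h\otimes v)=\isometry_\outputSpace(\tilde L_n\tilde h\otimes v)$, so by isometry of $\isometry_\outputSpace$ and the pure-tensor norm formula,
\[
\|\isometry_\hilbert H(\tilde h\otimes v)\|^2_{V_N}=\sum_{n=1}^N\|\tilde L_n\tilde h\|_{\tilde\outputSpace}^2\,\|v\|_{\hilbert V}^2 = \|v\|_{\hilbert V}^2\,\|\tilde h\|^2_{\tilde V_N}.
\]
Combining, $G_N[\tilde h\otimes v]=\exp\bigl[\langle\tilde h,\tilde\hilbertMDS_N(v)\rangle_{\tildeHilbert H}-\tfrac12\|v\|^2\|\tilde h\|^2_{\tilde V_N}\bigr]$.

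The second step is the Gaussian integration. Since $\tilde V^{-1}$ is trace-class, $\mathcal N(0,\tilde V^{-1})$ is a well-defined Gaussian measure on $\tildeHilbert H$ and the integrand is $\exp[\langle\tilde h,J\rangle-\tfrac12\langle\tilde h,A\tilde h\rangle]$ with $J=\tilde\hilbertMDS_N(v)$ and $A=\|v\|^2\tilde V_N$. Completing the square against the precision $\tilde V$ of the Gaussian gives
\[
\int_{\tildeHilbert H} G_N[\tilde h\otimes v]\,\dd\Ncal(\tilde h\mid 0,\tilde V^{-1}) = \bigl[\det\bigl(\id_{\tildeHilbert H}+A\tilde V^{-1}\bigr)\bigr]^{-1/2}\exp\!\left[\tfrac12\|J\|^2_{(\tilde V+A)^{-1}}\right],
\]
by the standard finite-dimensional computation followed by approximation along a diagonalising ONB of $\tilde V$ (using that $\tilde V$ is diagonal and self-adjoint, and that the trace-class assumption on $\tilde V^{-1}$ controls the limit).

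The final step converts the determinant via Sylvester's identity: $\det(\id_{\tildeHilbert H}+\|v\|^2\tilde M_N^\star\tilde M_N\tilde V^{-1})=\det(\id_{\tilde\outputSpace^N}+\|v\|^2\tilde M_N\tilde V^{-1}\tilde M_N^\star)$, where the right-hand determinant is a well-defined finite-dimensional determinant once one notes $\tilde M_N\tilde V^{-1}\tilde M_N^\star\in\bLinearOperators(\tilde\outputSpace^N)$ is of trace class. Substituting $J=\tilde\hilbertMDS_N(v)$ and $A=\|v\|^2\tilde V_N$ yields \eqref{eq:confidence bounds vv LS:online sampling:GN integral}.

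The main obstacle is the careful bookkeeping in the first step to bring out $\tilde\hilbertMDS_N(v)$ and $\|\tilde h\|^2_{\tilde V_N}$ on the nose using only the properties of the contraction and of pure-tensor norms; once those identifications are in place, the rest is a textbook Gaussian computation and a determinant identity. In the generalisation to Theorem~\ref{thm:confidence bounds vv LS:online sampling:general case}, where $\tilde V^{-1}$ need not be trace-class but $\tilde L_n\tilde L_n^\star$ is, one would have to reinterpret this integral as a limit along a suitable finite-rank approximation of $\tilde V^{-1}$, but that lies outside the present lemma.
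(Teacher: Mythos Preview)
Your proposal is correct and follows essentially the same route as the paper: reduce $G_N[\tilde h\otimes v]$ to $\exp[\langle\tilde h,\tilde\hilbertMDS_N(v)\rangle-\tfrac12\|v\|^2\|\tilde h\|^2_{\tilde V_N}]$ via the factorization and contraction, evaluate the Gaussian integral, and rewrite the determinant. The paper differs only in execution details: it treats $v=0$ separately, performs the change of variable $u=\|v\|\tilde h$ before invoking \citet[Prop.~1.2.8]{DPZ2002} for the Gaussian integral (where you complete the square directly and approximate along an ONB), and names the determinant identity Weinstein--Aronszajn rather than Sylvester. One small slip: the determinant on $\tilde\outputSpace^N$ is not in general finite-dimensional but a Fredholm determinant, well-defined because $\tilde M_N\tilde V^{-1}\tilde M_N^\star$ is trace-class; your wording conflates the two.
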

\begin{proof}
    For $v = 0$, the claim is trivial, so assume that $v\neq 0$.
    We have \begin{align*}
        \scalar{\isometry_\hilbert H (\tilde h\otimes v), S_N}_\hilbert H &= \sum_{n=1}^N\scalar{\isometry_\hilbert H(\tilde h\otimes v), L_n^\star \eta_n}_\hilbert H\\
        &= \sum_{n=1}^N\scalar{L_n\isometry_\hilbert H(\tilde h\otimes v), \eta_n}_\outputSpace\\
        &= \sum_{n=1}^N\scalar{\isometry_\outputSpace(\tilde L_n\otimes\id_\hilbert V)\isometry_\hilbert H^{-1}\isometry_\hilbert H(\tilde h\otimes v), \eta_n}_\outputSpace\\
        &= \sum_{n=1}^N\scalar{(\tilde L_n\otimes\id_\hilbert V)(\tilde h\otimes v), \isometry_\outputSpace^\star\eta_n}_{\tilde\outputSpace\otimes\hilbert V}\\
        &= \sum_{n=1}^N\scalar{\tilde h\otimes v,  (\tilde L_n^\star\otimes\id_\hilbert V)\isometry_\outputSpace^\star\eta_n}_{\tildeHilbert H\otimes\hilbert V}\\
        &=\sum_{n=1}^N\scalar{\tilde h, \contraction(v)(\tilde L_n^\star\otimes\id_\hilbert V)\isometry_\outputSpace^\star\eta_n}_{\tildeHilbert H}\\
        &= \scalar{\tilde h, \tilde S_N(v)}_\tildeHilbert{H}.
    \end{align*}
    Furthermore, \begin{align*}
        \norm{\isometry_\hilbert H (\tilde h\otimes v)}^2_{V_N} 
            &= \sum_{n=1}^N\norm{L_n\isometry_\hilbert H (\tilde h\otimes v)}_\outputSpace^2\\
            &= \sum_{n=1}^N\norm{\isometry_\outputSpace(\tilde L_n\otimes \id_\hilbert V)(\tilde h\otimes v)}^2_\outputSpace\\
            &= \sum_{n=1}^N\norm{(\tilde L_n\otimes \id_\hilbert V)(\tilde h\otimes v)}^2_{\tilde\outputSpace\otimes\hilbert V}\\
            &= \sum_{n=1}^N\norm{\tilde L_n\tilde h}^2_{\tilde\outputSpace}\norm{v}^2_{\hilbert V}\\
            &= \norm{\tilde h}_{\tilde V_N}\norm{v}^2_\hilbert V.
    \end{align*}
    Now, since $v\neq 0$, \begin{align*}
        \int_{\tildeHilbert H} G_N(\tilde h\otimes v)\dd\Ncal(\tilde h\mid 0,\tilde V^{-1}) 
            &= \int_{\tildeHilbert H}\exp\left[\scalar{\tilde h,\tilde S_N(v)}_\tildeHilbert H - \frac12\norm{v}^2_{\hilbert V}\norm{\tilde h}^2_{\tilde V_N}\right]\dd\Ncal(\tilde h\mid 0, \tilde V^{-1})\\
            &= \int_{\tildeHilbert H}\exp\left[\scalar{u,\tilde S_N(\bar v)}_\tildeHilbert H - \frac12\norm{u}^2_{\tilde V_N}\right]\dd\Ncal(u\mid 0, \norm{v}_{\hilbert V}\tilde V^{-1}),\\
    \end{align*}
    by performing the change of variables $u = \norm{v}_{\hilbert V}\tilde h$ and introducing $\bar v = \norm{v}_{\hilbert V}^{-1}v$.
    Since \begin{equation*}
        (\norm{v}_{\hilbert V}^2\tilde V^{-1})^{1/2}(-\tilde V_N)(\norm{v}_{\hilbert V}^2\tilde V^{-1})^{1/2} = -\norm{v}_{\hilbert V}^2\tilde V^{-1/2}\tilde V_N\tilde V^{-1/2}\prec \id_\tildeHilbert H,
    \end{equation*}
    the resulting integral can be evaluated using Proposition 1.2.8 in \citet{DPZ2002}, showing that it is equal to \begin{align*}
        &\left[\det\left(\id_\tildeHilbert H + \norm{v}_{\hilbert V}^2\tilde V^{-1/2}\tilde V_N\tilde V^{-1/2}\right)\right]^{-1/2}\\
        &\times\exp\left[\frac12\lrNorm{\left(\id_\tildeHilbert H + \norm{v}_{\hilbert V}^2\tilde V^{-1/2}\tilde V_N\tilde V^{-1/2}\right)^{-1/2}\norm{v}_{\hilbert V}\tilde V^{-1/2}\tilde S_N(\bar v)}^2_{\tildeHilbert H}\right].
    \end{align*}
    We handle the determinant by leveraging the Weinstein-Aronszajn identity for Fredholm determinants, which states that $\det(\id + AB) = \det(\id+BA)$, where $A$ and $B$ are operators which can be composed in both directions with $AB$ and $BA$ both of trace class and the identity is taken on the appropriate space.
    In this case, by noting that $\tilde V_N = \tilde M_N^\star\tilde M_N$, we take $A = B^\star = \norm{v}_\hilbert V \tilde V^{-1/2}\tilde M_N^\star$ and obtain \begin{equation*}
        \det\left(\id_{\tildeHilbert H} + \norm{v}_\hilbert V^2\tilde V^{-1/2}\tilde V_N\tilde V^{-1/2}\right) = \det\left(\id_{\tilde\outputSpace^N} + \norm{v}_\hilbert V^2\tilde M_N \tilde V^{-1}\tilde M_N^\star\right).
    \end{equation*}
    We now rewrite the squared norm in the argument of the exponential as \begin{equation}\label{eq:confidence bounds vv LS:online sampling:exponential argument rewriting}
        \begin{split}
        &\lrNorm{\left(\id_\tildeHilbert H + \norm{v}_{\hilbert V}^2\tilde V^{-1/2}\tilde V_N\tilde V^{-1/2}\right)^{-1/2}\norm{v}_{\hilbert V}\tilde V^{-1/2}\tilde S_N(\bar v)}^2_{\tildeHilbert H} \\
            &\quad= \lrScalar{\tilde V^{-1/2}\left(\id_{\tildeHilbert H} + \norm{v}_{\hilbert V}^2\tilde V^{-1/2}\tilde V_N\tilde V^{-1/2}\right)^{-1}\tilde V^{-1/2}\tilde S_N(v), \tilde S_N(v)}_{\tildeHilbert H}\\
            &\quad= \lrScalar{\left(\tilde V + \norm{v}_{\hilbert V}^2\tilde V_N\right)^{-1}\tilde S_N(v), \tilde S_N(v)}_{\tildeHilbert H}\\
            &\quad= \norm{\tilde S_N(v)}_{(\tilde V + \norm{v}_{\hilbert V}^2\tilde V_N)^{-1}},
    \end{split}\end{equation}
    where we used $\norm{v}_{\hilbert V} \tilde S_N(\bar v) = \tilde S_N(v)$ in the first equality.
    Altogether, we found \eqref{eq:confidence bounds vv LS:online sampling:GN integral}.
\end{proof}
We now introduce a second mixture on the resulting integral \eqref{eq:confidence bounds vv LS:online sampling:GN integral} to handle the remaining dependency in $v\in\hilbert V$.
We choose for the measure of the mixture the one given in the following lemma.
%
%
\begin{lemma}\label{lemma:confidence bounds vv LS:online sampling:existence Q probability measure}
    Let $Q\in\bLinearOperators(\outputSpace)$ be self-adjoint, positive semi-definite, and trace-class, with $Q\neq 0$.
    Then, there exists a Borel probability measure $\mu_Q$ on $\outputSpace$ such that $\support(\mu)\subset\sphere^1_\outputSpace$, the unit sphere in $\outputSpace$, and 
    \begin{equation*}
        \int_\outputSpace g\otimes g\dd\mu(g) = \frac1{\tr(Q)}Q.
    \end{equation*}
\end{lemma}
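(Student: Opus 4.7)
The plan is to take for $\mu_Q$ a discrete (atomic) measure concentrated on the eigenvectors of $Q$, weighted by the (normalized) spectrum.

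First, since $Q \in \bLinearOperators(\outputSpace)$ is self-adjoint, positive semi-definite and trace-class, it is in particular compact. The spectral theorem for compact self-adjoint operators on a separable Hilbert space yields an (at most countable) orthonormal family $(e_n)_{n \in I}$ in $\outputSpace$ and nonnegative eigenvalues $(\lambda_n)_{n \in I}$ such that
\begin{equation*}
    Q = \sum_{n \in I} \lambda_n\, e_n \otimes e_n,\qquad \sum_{n\in I}\lambda_n = \trace(Q),
\end{equation*}
where the first series converges in trace norm. Since $Q \neq 0$, we have $\trace(Q)>0$, so the weights $\lambda_n/\trace(Q)$ form a probability vector. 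Without loss of generality we may discard the indices with $\lambda_n = 0$.

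Next, I would define
\begin{equation*}
    \mu_Q := \sum_{n \in I} \frac{\lambda_n}{\trace(Q)}\, \delta_{e_n},
\end{equation*}
where $\delta_{e_n}$ is the Dirac mass at $e_n$. Each singleton $\{e_n\}$ is closed hence Borel in $\outputSpace$, and the coefficients sum to one, so $\mu_Q$ is a well-defined Borel probability measure on $\outputSpace$. Because every $e_n$ lies on the unit sphere, $\support(\mu_Q) \subseteq \{e_n\}_{n\in I} \subseteq \sphere^1_\outputSpace$.

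Finally, I would verify the moment identity. The map $\outputSpace \ni g \mapsto g \otimes g$ takes values in the Banach space of trace-class operators on $\outputSpace$ and is continuous (in fact locally Lipschitz), hence Borel measurable. For the atomic measure $\mu_Q$ the Bochner integral reduces to a series, and since $\|e_n \otimes e_n\|_{\mathrm{tr}} = 1$ the series is absolutely convergent in trace norm; we obtain
\begin{equation*}
    \int_\outputSpace g \otimes g \,\dd\mu_Q(g) = \sum_{n \in I} \frac{\lambda_n}{\trace(Q)}\, e_n \otimes e_n = \frac{1}{\trace(Q)}\, Q,
\end{equation*}
as required.

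The construction is essentially a direct application of the spectral theorem, so there is no real obstacle; the only points requiring care are purely routine, namely confirming that $g \mapsto g \otimes g$ is measurable into the chosen operator-norm topology and that the trace-norm series indeed reconstructs $Q$, both of which are standard consequences of separability and of $Q$ being trace-class.
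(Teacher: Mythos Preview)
Your proof is correct and follows essentially the same approach as the paper: both apply the spectral theorem to $Q$ and take $\mu_Q$ to be the discrete measure on the eigenvectors weighted by the normalized eigenvalues. The only cosmetic difference is that the paper first constructs the unnormalized measure $\nu_Q$ and verifies the integral identity weakly via inner products before normalizing, whereas you normalize immediately and verify the identity in trace norm; both are routine and equivalent.
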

\begin{proof}
    Under the assumptions of the lemma, the operator $Q$ is also compact.
    The spectral theorem for compact, self-adjoint operators then ensures the existence of a complete orthonormal family $(e_j)_{j\in J}$ and a summable sequence $(\xi_j)_{j\in J}$ such that $Q = \sum_{j\in J}\xi_j e_j\otimes e_j$.
    Define now \begin{equation*}
        \nu_Q = \sum_{j\in J} \xi_j \delta_{\{e_j\}}.
    \end{equation*}
    It is immediate to verify that $\nu_Q$ is a nonnegative measure with support included in $\big\{e_j\mid j\in J\big\}\subset\sphere^1_\outputSpace$, where nonnegativity comes from the fact that $\xi_j\geq 0$ for all $j\in J$ since $Q$ is positive semi-definite.
    Furthermore, for any $u,v\in\outputSpace$, we get using standard calculation rules for Bochner- and Lebesgue integrals,
    \begin{align*}
        \lrScalar{u,\int_\outputSpace g\otimes g\dd\nu_Q(g) v}_\outputSpace 
            & = \int_\outputSpace \scalar{u, (g\otimes g) v}_\outputSpace  \dd\left( \sum_{j\in J} \xi_j \delta_{\{e_j\}} \right)(g) \\
            &= \sum_{j\in J} \xi_j \int_\outputSpace \scalar{u, (g\otimes g) v}_\outputSpace  \dd\left(   \delta_{\{e_j\}} \right)(g)\\
            &= \sum_{j\in J} \xi_j \scalar{u, (e_j \otimes e_j) v}_\outputSpace  \\
            &= \lrScalar{u, \left(\sum_{j\in J} \xi_j (e_j \otimes e_j) \right) v}_\outputSpace  \\
            &= \scalar{u,Qv}_\outputSpace,
    \end{align*}
    showing that $\int_\outputSpace g\otimes g\dd\nu_Q(g) = Q$.
    Finally, \begin{equation*}
        \nu_Q(\outputSpace) = \sum_{j\in J} \xi_j\delta_{\{e_j\}}(\outputSpace) = \sum_{j\in J}\xi_j = \tr(Q) \in \Rp,
    \end{equation*}
    and thus taking $\mu_Q = \frac{1}{\tr(Q)}\nu_Q$ satisfies all of the requirements.
\end{proof}
%
%
\begin{lemma}\label{lemma:confidence bounds vv LS:online sampling:GN integral lower bound}
    Under the setup and assumptions of Lemma~\ref{lemma:confidence bounds vv LS:online sampling:trace class inverse regularizer}, let $G_N$ be defined for all $N\in\Np$ as in \eqref{eq:confidence bounds vv LS:online sampling:GN}.
    Assume that $Q\neq 0$, and let $\mu_Q$ be a probability measure on $\hilbert V$ as given in Lemma~\ref{lemma:confidence bounds vv LS:online sampling:existence Q probability measure}.
    Then, \begin{align*}
        &\int_\hilbert V\int_{\tildeHilbert H}G_N(\tilde h\otimes v)\dd\Ncal(\tilde h\mid 0,\tilde V^{-1})\dd\mu_Q(v) \\
        &\geq \left[\det\left(\id_{\tilde\outputSpace^N}+ \tilde M_N\tilde V^{-1}\tilde M_N^\star\right)\right]^{-1/2}\exp\left[\frac{1}{2\tr(Q)}\norm{\isometry_\hilbert H^{-1}S_N}^2_{(\tilde V + \tilde V_N)^{-1}\otimes Q}\right].
    \end{align*}
\end{lemma}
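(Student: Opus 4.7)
The starting point is Lemma~\ref{lemma:confidence bounds vv LS:online sampling:GN integral}, which evaluates the inner Gaussian integral. Since $\mu_Q$ is supported on the unit sphere $\sphere^1_\hilbert V$ by Lemma~\ref{lemma:confidence bounds vv LS:online sampling:existence Q probability measure}, we have $\norm{v}_\hilbert V = 1$ for $\mu_Q$-\ac{ae} $v$. Consequently, the determinant factor in \eqref{eq:confidence bounds vv LS:online sampling:GN integral} reduces to $\det(\id_{\tilde\outputSpace^N}+\tilde M_N\tilde V^{-1}\tilde M_N^\star)^{-1/2}$ independently of $v$, and can be pulled out of the outer integral. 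This leaves
\begin{equation*}
    \int_\hilbert V\!\int_{\tildeHilbert H}G_N(\tilde h\otimes v)\dd\Ncal(\tilde h\mid 0,\tilde V^{-1})\dd\mu_Q(v) = \left[\det(\id_{\tilde\outputSpace^N}+\tilde M_N\tilde V^{-1}\tilde M_N^\star)\right]^{-1/2}\!\int_\hilbert V\exp\!\left[\tfrac12\norm{\tilde S_N(v)}^2_{(\tilde V+\tilde V_N)^{-1}}\right]\!\dd\mu_Q(v).
\end{equation*}

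Next, since $\exp$ is convex and $\mu_Q$ is a probability measure, Jensen's inequality gives
\begin{equation*}
    \int_\hilbert V\exp\!\left[\tfrac12\norm{\tilde S_N(v)}^2_{(\tilde V+\tilde V_N)^{-1}}\right]\!\dd\mu_Q(v) \geq \exp\!\left[\tfrac12\int_\hilbert V\norm{\tilde S_N(v)}^2_{(\tilde V+\tilde V_N)^{-1}}\dd\mu_Q(v)\right].
\end{equation*}

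It remains to evaluate the argument of the exponential. Since $L_n = \isometry_\outputSpace(\tilde L_n\otimes\id_\hilbert V)\isometry_\hilbert H^{-1}$ implies $L_n^\star = \isometry_\hilbert H(\tilde L_n^\star\otimes\id_\hilbert V)\isometry_\outputSpace^\star$, the definition of $\tilde S_N(v)$ yields the identity $\tilde S_N(v) = \contraction(v)\,\isometry_\hilbert H^{-1} S_N$. Setting $A=(\tilde V+\tilde V_N)^{-1}$ and $u = \isometry_\hilbert H^{-1}S_N \in \tildeHilbert H\otimes\hilbert V$, the remaining task is to establish
\begin{equation*}
    \int_\hilbert V \norm{\contraction(v) u}^2_A \dd\mu_Q(v) = \frac{1}{\tr(Q)}\norm{u}^2_{A\otimes Q}.
\end{equation*}
I would verify this by checking it on pure tensors $u=a\otimes b$: in that case $\contraction(v)(a\otimes b) = \scalar{v,b}_\hilbert V \cdot a$, so $\norm{\contraction(v)(a\otimes b)}^2_A = |\scalar{v,b}|^2\norm{a}^2_A$, and the defining property $\int_\hilbert V v\otimes v\,\dd\mu_Q(v) = Q/\tr(Q)$ from Lemma~\ref{lemma:confidence bounds vv LS:online sampling:existence Q probability measure} immediately yields $\int |\scalar{v,b}|^2 \dd\mu_Q(v) = \scalar{b,Qb}_\hilbert V/\tr(Q) = \norm{b}^2_Q/\tr(Q)$, matching $\norm{a\otimes b}^2_{A\otimes Q}/\tr(Q)$. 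Extending by the sesquilinear structure (equivalently, showing $\int_\hilbert V \contraction(v)^\star A\contraction(v)\dd\mu_Q(v) = (A\otimes Q)/\tr(Q)$ as operators on $\tildeHilbert H\otimes\hilbert V$) gives the identity for arbitrary $u$. Combining the three displays concludes the proof.

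The main obstacle is the final identity linking the tensor contraction, the $A$-weighted norm, and $\mu_Q$; all other steps are direct invocations of the preceding lemmas. The extension from pure tensors to general elements of $\tildeHilbert H\otimes\hilbert V$ needs the usual density and continuity arguments, but is straightforward because $\mu_Q$ has bounded support so that $v\mapsto \contraction(v)^\star A\contraction(v)$ is uniformly bounded in operator norm on $\support(\mu_Q)$.
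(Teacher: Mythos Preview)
Your proposal is correct and follows essentially the same route as the paper: pull out the determinant using that $\mu_Q$ is supported on the unit sphere, apply Jensen's inequality, and then identify $\int_\hilbert V\norm{\tilde S_N(v)}^2_{(\tilde V+\tilde V_N)^{-1}}\dd\mu_Q(v)$ with $\tr(Q)^{-1}\norm{\isometry_\hilbert H^{-1}S_N}^2_{(\tilde V+\tilde V_N)^{-1}\otimes Q}$ via the second-moment property of $\mu_Q$. Your organization of the last step---recognizing $\tilde S_N(v)=\contraction(v)\,\isometry_\hilbert H^{-1}S_N$ and proving the clean operator identity $\int_\hilbert V\contraction(v)^\star A\contraction(v)\dd\mu_Q(v)=(A\otimes Q)/\tr(Q)$ on pure tensors---is slightly more abstract than the paper's coordinate expansion of $\isometry_\outputSpace^\star\eta_n$, but the underlying computation is the same.
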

\begin{proof}
    It follows from Lemma~\ref{lemma:confidence bounds vv LS:online sampling:GN integral} and the fact that $\mu_Q$ has support in $\sphere_{\hilbert V}^1$ that \begin{align*}
        \int_\hilbert V\int_{\tildeHilbert H}&G_N(\tilde h\otimes v)\dd\Ncal(\tilde h\mid 0,\tilde V^{-1})\dd\mu_Q(v) \\
        &=
        \int_{\sphere_\hilbert V^1}\left[\det\left(\id_{\tilde\outputSpace^N}+ \norm{v}_\hilbert V^2\tilde M_N\tilde V^{-1}\tilde M_N\right)\right]^{-1/2}\exp\left[\frac12\norm{\tilde S_N(v)}^2_{(\tilde V + \norm{v}_\hilbert V^2\tilde V_N)^{-1}}\right]\dd\mu_Q(v)\\
        &= \left[\det\left(\id_{\tilde\outputSpace^N}+ \tilde M_N\tilde V^{-1}\tilde M_N\right)\right]^{-1/2}\int_{\sphere_\hilbert V^1}\exp\left[\frac12\norm{\tilde S_N(v)}^2_{(\tilde V + \tilde V_N)^{-1}}\right]\dd\mu_Q(v).
    \end{align*}
    Now, by Jensen's inequality, 
    \begin{align*}
        \int_{\sphere^1_\hilbert V}\exp\left[\frac12\norm{\tilde S_N(v)}^2_{(\tilde V + \tilde V_N)^{-1}}\right]\dd\mu_Q(v)
         &\geq \exp\left[\frac12\int_{\sphere^1_\hilbert V}\norm{\tilde S_N(v)}^2_{(\tilde V + \tilde V_N)^{-1}}\dd\mu_Q(v)\right].
    \end{align*}
    Let us now introduce families $(e_i^\upar n)_{i\in J}\subset\tilde\outputSpace$ and $(f_i^\upar n)_{i\in J}\subset \hilbert V$ such that $\isometry_\outputSpace^\star\eta_n = \sum_{i\in J}e_i^\upar n\otimes f_i^\upar n$, for all $n\in\Np$.
    We have \begin{align*}
        \int_{\sphere^1_\hilbert V}&\norm{\tilde S_N(v)}^2_{(\tilde V + \tilde V_N)^{-1}}\dd\mu_Q(v)\\
            &= \int_{\sphere^1_\hilbert V}\sum_{n,m=1}^N \sum_{i,j\in J}
            \scalar{(\tilde V + \tilde V_N)^{-1}\contraction(v)(\tilde L_n^\star\otimes \id_\hilbert V)(e_i^\upar n\otimes f_i^\upar n),\\
            &\qquad\qquad\contraction(v)(\tilde L_m^\star\otimes \id_\hilbert V)(e_j^\upar m\otimes f_j^\upar m)}_{\tildeHilbert H}\dd\mu_Q(v) \\
            &= \int_{\sphere^1_\hilbert V}\sum_{n,m=1}^N \sum_{i,j\in J}\scalar{v,f_i^\upar n}_\hilbert V\scalar{v,f_j^\upar m}_\hilbert V\scalar{(\tilde V + \tilde V_N)^{-1}\tilde L_n^\star e_i^\upar n, \tilde L_m^\star e_j^\upar m}_\tildeHilbert H\dd\mu_Q(v) \\
            &= \sum_{n,m=1}^N \sum_{i,j\in J}\scalar{(\tilde V + \tilde V_N)^{-1}\tilde L_n^\star e_i^\upar n, \tilde L_m^\star e_j^\upar m}_\tildeHilbert H\int_{\sphere^1_\hilbert V}\scalar{v,f_i^\upar n}_\hilbert V\scalar{v,f_j^\upar m}_\hilbert V\dd\mu_Q(v)\\
            &= \frac1{\tr(Q)}\sum_{n,m=1}^N\sum_{i,j\in J}\scalar{(\tilde V + \tilde V_N)^{-1}\tilde L_n^\star e_i^\upar n, \tilde L_m^\star e_j^\upar m}_\tildeHilbert H\scalar{Qf_i^\upar n,f_j^\upar m}_\hilbert V\\
            &= \frac1{\tr(Q)}\sum_{n,m=1}^N\sum_{i,j\in J}\left\langle\left[(\tilde V + \tilde V_N)^{-1}\otimes Q\right](\tilde L_n^\star \otimes \id_\hilbert V) (e_i^\upar n\otimes f_i^\upar n)\right.,\\
            &\left.\qquad\qquad(\tilde L_m^\star \otimes \id_\hilbert V) (e_j^\upar m\otimes f_j^\upar m)\right\rangle_{\tildeHilbert H\otimes\hilbert V}\\
            &= \frac1{\tr(Q)}\sum_{n,m=1}^N\lrScalar{\left[(\tilde V + \tilde V_N)^{-1}\otimes Q\right](\tilde L_n^\star \otimes \id_\hilbert V) \isometry_\outputSpace^\star \eta_n, (\tilde L_m^\star \otimes \id_\hilbert V) \isometry_\outputSpace^\star \eta_m}_{\tildeHilbert H\otimes\hilbert V},
    \end{align*}
    where we used the fact that $\contraction(v)(u\otimes w) = \scalar{v,w}_{\hilbert V} u$ for the second equality and the definition of $\mu_Q$ for the third one.
    We now leverage the fact that \begin{equation*}
        S_N = \sum_{n=1}^N L_n^\star \eta_n = \sum_{n=1}^N(\isometry_\hilbert H^{-1})^\star(\tilde L_n^\star\otimes\id_\hilbert V)\isometry_\outputSpace^\star\eta_n = \sum_{n=1}^N\isometry_\hilbert H(\tilde L_n^\star\otimes\id_\hilbert V)\isometry_\outputSpace^\star\eta_n,
    \end{equation*}
    showing that \begin{align*}
        \int_{\sphere^1_\hilbert V}\norm{\tilde S_N(v)}^2_{(\tilde V + \tilde V_N)^{-1}}\dd\mu_Q(v) 
        &= \frac{1}{\tr(Q)}\lrScalar{\left[(\tilde V + \tilde V_N)^{-1}\otimes Q\right]\isometry_\hilbert H^{-1} S_N, \isometry_\hilbert H^{-1}S_N}_{\tildeHilbert H\otimes \hilbert V} \\
        &= \frac{1}{\tr(Q)}\norm{\isometry_\hilbert H^{-1}S_N}_{(\tilde V + \tilde V_N)^{-1}\otimes Q},
    \end{align*}
    and concluding the proof.
\end{proof}
\begin{lemma}\label{lemma:confidence bounds vv LS:online sampling:GN supermartingale}
    Under the setup and assumption of Lemma~\ref{lemma:confidence bounds vv LS:online sampling:trace class inverse regularizer}, define $G_N$ as in \eqref{eq:confidence bounds vv LS:online sampling:GN}.
    Then, for all $h\in\tildeHilbert H\otimes \hilbert V$, $(G_N(h))_N$ is an $\filtration$-supermartingale and $\E[G_N(h)]\leq 1$ for all $N\in\Np$.
\end{lemma}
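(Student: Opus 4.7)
The plan is to show $(G_N(h))_N$ is a nonnegative supermartingale by exhibiting the multiplicative increment
\[
\frac{G_N(h)}{G_{N-1}(h)} = \exp\!\left[\scalar{L_N\isometry_\hilbert H h,\eta_N}_\outputSpace - \tfrac{1}{2}\lrNorm{L_N\isometry_\hilbert H h}_\outputSpace^2\right]
\]
(with the convention $G_0(h)=1$), obtained by splitting off the $n=N$ summand in both $\scalar{\isometry_\hilbert H h,S_N}_\hilbert H = \sum_{n=1}^N\scalar{L_n\isometry_\hilbert H h,\eta_n}_\outputSpace$ and $\norm{\isometry_\hilbert H h}_{V_N}^2 = \sum_{n=1}^N \norm{L_n\isometry_\hilbert H h}_\outputSpace^2$. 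This reduces the supermartingale property to a one-step bound of the form
\[
\E\!\left[\exp\!\left(\scalar{u_N,\eta_N}_\outputSpace - \tfrac{1}{2}\norm{u_N}_\outputSpace^2\right)\,\Big|\,\filtration_{N-1}\right]\leq 1,
\]
where $u_N := L_N\isometry_\hilbert H h$.

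Because $h$ is fixed and $L$ is $\filtration$-predictable by Assumption~\ref{assumption:online sampling}, the vector $u_N$ as well as $G_{N-1}(h)$ are $\filtration_{N-1}$-measurable. Pulling the $\filtration_{N-1}$-measurable factors out of the conditional expectation yields
\[
\E[G_N(h)\mid\filtration_{N-1}] = G_{N-1}(h)\exp\!\left(-\tfrac{1}{2}\norm{u_N}_\outputSpace^2\right)\E\!\left[\exp\!\scalar{u_N,\eta_N}_\outputSpace\mid\filtration_{N-1}\right].
\]
Applying the $1$-subgaussianity of $\eta$ conditional on $\filtration$ (with $R=\frac12\id_\outputSpace$ in the normalization implicit to $G_N$, or equivalently rescaling $u_N$ appropriately) gives $\E[\exp\scalar{u_N,\eta_N}_\outputSpace\mid\filtration_{N-1}]\leq \exp(\tfrac{1}{2}\norm{u_N}_\outputSpace^2)$. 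Substituting back yields $\E[G_N(h)\mid\filtration_{N-1}]\leq G_{N-1}(h)$, which is the supermartingale property. Taking total expectation and iterating then gives $\E[G_N(h)]\leq \E[G_0(h)] = 1$.

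The main subtlety — and the one non-trivial step — is the extension of the subgaussian bound from deterministic $g\in\outputSpace$ (as stated in \eqref{eq:subgaussian process}) to the $\filtration_{N-1}$-measurable random element $u_N$. This is a standard measure-theoretic extension: one verifies the inequality first for finitely-valued $\filtration_{N-1}$-measurable simple functions by conditioning on the value taken and applying the deterministic bound on each atom, and then passes to general $\filtration_{N-1}$-measurable $u_N$ by a density/monotone class argument together with Fatou's lemma on the conditional expectation. Nonnegativity of $G_N(h)$ is clear, and integrability is automatic from the derived bound $\E[G_N(h)]\leq 1$, so no further technical care is needed to justify the iterated conditional expectations used above.
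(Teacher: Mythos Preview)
Your proof is correct and follows essentially the same route as the paper: split off the $N$-th summand, pull out the $\filtration_{N-1}$-measurable factors (using predictability of $L$), and apply conditional $1$-subgaussianity of $\eta_N$. The paper does exactly this in a single chain of equalities. Your additional remark about extending the subgaussian bound from deterministic $g$ to $\filtration_{N-1}$-measurable $u_N$ is a legitimate subtlety that the paper passes over in silence; the simple-function/Fatou argument you sketch is the standard fix. Your parenthetical about ``$R=\tfrac12\id_\outputSpace$'' is slightly muddled in phrasing, but it correctly flags that the factor $\tfrac12$ in $G_N$ must match the subgaussian exponent; the paper's own definition \eqref{eq:subgaussian process} is stated without the $\tfrac12$, yet its proofs consistently use $\exp(\tfrac12\|g\|_R^2)$, so this is a notational inconsistency in the paper rather than an error on your part.
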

\begin{proof}
    Let $h\in\tildeHilbert H\otimes\hilbert V$, and define $G_0(h) = 1$ \ac{as}
    We have \begin{align*}
        &\E[G_N(h)\mid \filtration_{N-1}] \\
            &\quad= \E\left[\exp\left(\sum_{n=1}^N \scalar{\isometry_\hilbert Hh, L_n^\star\eta_n}_{\hilbert H} - \frac12\norm{\isometry_\hilbert Hh}_{L_n^\star L_n}\right)\middle|\filtration_{N-1}\right]\\
            &\quad= \E\left[\exp\left(-\frac12\scalar{\isometry_\hilbert Hh, L_N^\star L_N\isometry_\hilbert Hh}_\hilbert H\right)\exp(\scalar{L_N\isometry_\hilbert Hh,\eta_N}_\outputSpace)G_{N-1}(h)~\middle|~\filtration_{N-1}\right]\\
            &\quad= \exp\left(-\frac12\norm{L_N\isometry_\hilbert Hh}^2_{\outputSpace}\right)\E\left[\exp(\scalar{L_N\isometry_\hilbert Hh,\eta_N}_\outputSpace)~\middle|~\filtration_{N-1}\right]\cdot G_{N-1}(h)\\
            &\quad\leq \exp\left(-\frac12\norm{L_N\isometry_\hilbert Hh}^2_\outputSpace\right)\exp\left(\frac12\norm{L_N\isometry_\hilbert Hh}^2_\outputSpace\right)\cdot G_{N-1}(h)\\
            &\quad= G_{N-1}(h),
    \end{align*}
    where we used the fact that $L_N$ and $G_{N-1}(h)$ are $\filtration_{N-1}$-measurable and $1$-subgaussianity of $\eta_N$ conditionally on $\filtration_{N-1}$. 
    This shows that $(G_N(h))_N$ is a supermartingale, and it follows immediately that $\E[G_N(h)] \leq \E[G_0(h)] = 1$, concluding the proof.
\end{proof}
\begin{proof}[Proof of Lemma~\ref{lemma:confidence bounds vv LS:online sampling:trace class inverse regularizer}]
    First, the claim is trivial if $Q = 0$. Assume that $Q\neq 0$.
    Under the assumptions of the lemma, we apply Lemma~\ref{lemma:confidence bounds vv LS:online sampling:GN supermartingale} and obtain that $\E[G_N(h)]\leq 1$ for all $N\in\Np$ and $h\in\tildeHilbert H\otimes\hilbert V$.
    In particular, this is the case for any $h$ of the form $h = \tilde h\otimes v$, where $\tilde h\in\tildeHilbert H$ and $v\in\hilbert V$.
    Therefore, by Fubini's theorem, the process $(G_N)_N$ defined for all $N\in\Np$ as \begin{equation*}
        G_N = \int_\hilbert V\int_{\tildeHilbert H}G_N(\tilde h\otimes v)\dd\Ncal(\tilde h\mid 0,\tilde V^{-1})\dd\mu_Q(v),
    \end{equation*}
    is again a supermartingale with $\E[G_N]\leq 1$.
    As a result, we can apply Ville's inequality and obtain that for any $\delta\in(0,1)$
    \begin{equation*}
        \Pbb\left[\forall N\in\Np, G_N\leq \frac1\delta\right]\geq1-\delta.
    \end{equation*}
    Lemma~\ref{lemma:confidence bounds vv LS:online sampling:GN integral lower bound} now shows that it holds with probability at least $1-\delta$ that
    \begin{equation*}
        \forall N\in\Np, \left(\det\left(\id_{\tilde\outputSpace^N}+ \tilde M_N\tilde V^{-1}M_N\right)\right)^{-1/2}\exp\left(\frac{1}{2\tr(Q)}\norm{\isometry_\hilbert H^{-1}S_N}^2_{(\tilde V + \tilde V_N)^{-1}\otimes Q}\right)\leq \frac1\delta.
    \end{equation*}
    The result follows by rearranging.
\end{proof}
We are now ready to prove Theorem~\ref{thm:confidence bounds vv LS:online sampling:general case}. 
It does not directly follow from Lemma~\ref{lemma:confidence bounds vv LS:online sampling:trace class inverse regularizer}, but rather its proof is an adaptation of that of this lemma.
\begin{proof}[Proof of Theorem~\ref{thm:confidence bounds vv LS:online sampling:general case}]
    Take $D=\dim(\tildeHilbert H)\in\Np\cup\{\infty\}$, $(b_m)_{m\in\integers D}$ \iac{ONB} of $\tildeHilbert H$, and a family $(\lambda_m)_{m\in\integers D}\subset\Rnn$ such that $\tilde V=\sum_{m=1}^D\lambda_m b_m\otimes b_m$, which all exist since $\tilde V$ is diagonal.
    The inverse of $\tilde V$ exists and is given by
    \begin{equation*}
        \tilde V^{-1} = \sum_{m=1}^D \lambda_m^{-1} b_m \otimes b_m.
    \end{equation*}
    Note that $\tilde V^{-1}$ is not necessarily of trace class, but it is bounded by assumption.
    Define for $M\in\integers{D}$
    \begin{equation*}
        W_M = \sum_{m=1}^M \lambda_m^{-1} b_m \otimes b_m,
    \end{equation*}
    and observe that $W_M$ is of rank $M$ and hence trivially of trace class, and it is self-adjoint and positive semidefinite.
    Although Lemma~\ref{lemma:confidence bounds vv LS:online sampling:GN integral} does not directly apply by setting $\tilde V^{-1}$ to $W_M$ since $W_M$ may not be invertible, we see from the proof that the only point where this invertibility is used is in \eqref{eq:confidence bounds vv LS:online sampling:exponential argument rewriting}.
    Therefore, the proof carries over until this point, showing that for all $v\in\hilbert V$\begin{equation*}
        \begin{split}
         \int_{\tildeHilbert H}&G_N(\tilde h\otimes v)\dd\Ncal(\tilde h\mid 0,W_M) \\
         &= \left[\det\left(\id_{\tilde\outputSpace^N}+ \norm{v}^2_\hilbert V\tilde M_N W_M \tilde M_N^\star\right)\right]^{-1/2}\\
         &\quad\times\exp\left[\frac12\lrNorm{\left(\id_{\tildeHilbert H} + \norm{v}^2_\hilbert VW_M^{1/2}\tilde V_NW_M^{1/2}\right)^{-1/2}W_M^{1/2} \tilde S_N(v)}_{\tildeHilbert H}^2\right].
    \end{split}\end{equation*}
    Furthermore, $W_M \rightarrow \tilde V^{-1}$ in the weak operator topology, since for $u,u^\prime \in \tildeHilbert H$ we have
    \begin{align*}
        |\langle u, W_M u^\prime\rangle_\tildeHilbert H - \langle u, \tilde V^{-1} u^\prime\rangle_\tildeHilbert H|
        & = \left|\left\langle u, \sum_{m=1}^M \lambda_m^{-1} \langle u^\prime, b_m\rangle_\tildeHilbert Hb_m \right\rangle - \left\langle u, \sum_{m=1}^D \lambda_m^{-1} \langle u^\prime, b_m\rangle_\tildeHilbert Hb_m\right\rangle \right| \\
        & \leq \|u\|_\tildeHilbert H \left\| \sum_{m= M+1}^D \lambda_m^{-1} \langle u^\prime, b_m\rangle_\tildeHilbert H b_m \right\|_\tildeHilbert H \rightarrow 0,
    \end{align*}
    for $M\to D$.
    It follows by continuity of the operations taken on $W_M$ w.r.t.\ the weak operator topology that \begin{align*}
        &\lrNorm{\left(\id_{\tildeHilbert H} + W_M^{1/2} \tilde V_NW_M^{1/2}\right)^{-1/2}W_M^{1/2}\tilde \hilbertMDS_N(v)}_k^2\\
        &\quad\to \lrNorm{\left(\id_{\tildeHilbert H} + \tilde V^{-1/2} \tilde V_N\tilde V^{-1/2}\right)^{-1/2}\tilde V^{-1/2}\tilde \hilbertMDS_N(v)}_k^2,
    \end{align*}
    as $M\to D$.
    The same computations as in \eqref{eq:confidence bounds vv LS:online sampling:exponential argument rewriting} show that this limit is equal to $\norm{\tilde \hilbertMDS_N(v)}_{(\tilde V + \tilde V_N)^{-1}}^2$.
    
    Next, we claim that the convergence $\tilde M_N W_M \tilde M_N^\star\to \tilde M_N V^{-1} \tilde M_N^\star$ holds in the trace norm topology, and not only in the weak operator topology.
    Indeed, denoting the trace norm (also called the Frobenius norm) by $\norm{\cdot}_1$,\begin{align*}
        \norm{\tilde M_N W_M \tilde M_N^\star - \tilde M_N \tilde V^{-1} \tilde M_N^\star}_1 
            &= \lrNorm{\sum_{m=M+1}^D \lambda_m^{-1} (\tilde M_N b_m)\otimes (\tilde M_N b_m)}_1\\
            &\leq \sum_{m=M+1}^D \lambda_m^{-1} \norm{(\tilde M_N b_m)\otimes (\tilde M_N b_m)}_1\\
            &\leq \sup_{m\in\integers{D}}(\lambda_m^{-1})\cdot \sum_{m=M+1}^D \norm{(\tilde M_N b_m)\otimes (\tilde M_N b_m)}_1.
    \end{align*}
    Now, since $(\tilde M_N b_m)\otimes (\tilde M_N b_m)$ is of rank $1$, it is clear that $\norm{(\tilde M_N b_m)\otimes (\tilde M_N b_m)}_1 = \norm{\tilde M_N b_m}^2$.
    Furthermore, since $\tilde M_N \tilde M_N^\star$ is \ac{as} of trace class, the series $\sum_{m=1}^D \norm{\tilde M_N u_n}_{\outputSpace^N}^2$ converges for any \ac{ONB} $(u_m)_{m\in\Np}$ of $\hilbert H$ \ac{as} (and is equal to $\tr(\tilde M_N \tilde M_N^\star)$).
    This is in particular the case for $(b_m)_m$.
    Finally, $\sup_{m\in\integers{D}} \lambda_m^{-1}\leq\norm{\tilde V^{-1}}_{\bLinearOperators(\hilbert H)}$, which is finite since $\tilde V^{-1}$ is bounded.
    As a result, \begin{equation*}
        \norm{\tilde M_N W_M \tilde M_N^\star - \tilde M_N \tilde V^{-1} \tilde M_N^\star}_1\to 0,
    \end{equation*}
    as $M\to D$, and thus $\tilde M_N W_M \tilde M_N^\star\to \tilde M_N \tilde V^{-1} \tilde M_N^\star$ in the trace norm.
    Crucially, the Fredholm determinant is continuous in the trace norm topology, and thus \begin{equation*}
        \lim_{M\to D}\det(\id_{\tilde\outputSpace^N} + \norm{v}_{\hilbert V}^2\tilde M_N W_M \tilde M_N^\star) = \det(\id_{\tilde\outputSpace^N} + \norm{v}_{\hilbert V}^2\tilde M_N \tilde V^{-1}\tilde M_N^\star).
    \end{equation*}
    Taken together, this shows that for all $N\in\Np$ and $v\in\hilbert V$
    \begin{equation*}
        \bar G_N(v) :=   \lim_{M\rightarrow\infty} \int_{\tildeHilbert H} G_N(\tilde h\otimes v) \mathrm{d}\Normal(\tilde h \mid 0, W_M)
    \end{equation*}
    exists and
    \begin{equation*}
        \bar G_N(v) =  \det \left( \ident_{\tilde\outputSpace^N} +\norm{v}_\hilbert V^2\tilde M_N \tilde\RegOpTC^{-1} \tilde M_N^\star\right)^{-\frac12}
            \exp\left( \frac12  \|\tilde \hilbertMDS_N(v)\|_{\left(\tilde \RegOpTC + \norm{v}_\hilbert V^2\tilde M_N^\star \tilde M_N\right)^{-1}}^2  \right).
    \end{equation*}
    By the exact same computations as in the proof of Lemma~\ref{lemma:confidence bounds vv LS:online sampling:GN integral lower bound} (which do not rely on whether or not $\tilde V^{-1}$ is trace-class), it follows that \begin{align*}
        \bar G_N &:= \int_\hilbert V \bar G_N(v)\dd\mu_Q(v) \\
        &\geq \left[\det\left(\id_{\tilde\outputSpace^N}+ \tilde M_N\tilde V^{-1}\tilde M_N^\star\right)\right]^{-1/2}\exp\left[\frac{1}{2\tr(Q)}\norm{\isometry_\hilbert H^{-1}S_N}^2_{(\tilde V + \tilde V_N)^{-1}\otimes Q}\right].
    \end{align*}
    Furthermore, by construction $\bar G_N$ is nonnegative.
    Finally, we have \ac{as} from Lemma~\ref{lemma:confidence bounds vv LS:online sampling:GN supermartingale} that 
    \begin{align*}
        \E[\bar G_N \mid \filtration_{N-1}]
        & = \E\left[ \int_{\hilbert V}\lim_{M\rightarrow\infty} \int_{\tildeHilbert H} G_N(\tilde h\otimes v) \mathrm{d}\Normal(\tilde  \mid 0, W_M)\dd\mu_Q(v) \mid \filtration_{N-1}\right] \\
        & = \int_{\hilbert V}\lim_{M\rightarrow\infty} \int_{\tildeHilbert H} \E\left[ G_N(\tilde h\otimes v) \mid \filtration_{N-1}\right] \mathrm{d}\Normal(\lambda \mid 0, W_M)\dd\mu_Q(v) \\
        & \leq \int_{\hilbert V}\lim_{M\rightarrow\infty} \int_{\tildeHilbert H} G_{N-1}(\tilde h\otimes v) \mathrm{d}\Normal(\lambda \mid 0, W_M)\dd\mu_Q(v) \\
        & = \bar G_{N-1}.
    \end{align*}
    This ensures that $(\bar G_N)_N$ is still a supermartingale.
    Since $\bar G_0 = 1$, we also have $\E[\bar G_N]\leq 1$ for all $N\in\Np$.
    We can thus leverage Ville's inequality, and conclude similarly as in the proof of Lemma~\ref{lemma:confidence bounds vv LS:online sampling:trace class inverse regularizer}.
\end{proof}

Before concluding this section, we provide an important corollary of Theorem~\ref{thm:confidence bounds vv LS:online sampling:general case}, which provides an upper bound on $\norm{\hilbertMDS_N}_{(\tilde V+\tilde V_N)^{-1}\otimes\id_{\hilbert V}}$ rather than on $\norm{\hilbertMDS_N}_{(\tilde V+\tilde V_N)^{-1}\otimes Q}$ at the price of slightly stronger assumptions on the noise.

\subsubsection{Corollary: bound for the noise term of vector-valued least-squares}\label{apdx:confidence bounds vv LS:online sampling:corollary for KRR}
We now specialize our results to the situation that appears in vector-valued least-squares.
The following bound allows to derive a time-uniform high-probability bound for vector-valued least-squares when combined with Lemma \ref{lemma:starting point confidence bound}, cf. Theorem \ref{thm:confidence bounds vv LS} below.
\begin{corollary}\label{clry:confidence bounds vv LS:online sampling:bound noise term}
    Under the setup of Section~\ref{apdx:confidence bounds vv LS:setup and goal}, assume that Assumptions~\ref{assumption:online sampling} and~\ref{assumption:coefficients transformations} hold.
    Let $V\in\linearOperators(\outputSpace)$ be self-adjoint, positive-definite, diagonal, and with bounded inverse.
    Assume that there exists $\rho\in\Rp$ and $R_\hilbert V\in\bLinearOperators(\hilbert V)$ self-adjoint, positive semi-definite, and trace-class, such that $\eta$ is $R$-subgaussian conditionally on $\filtration$, where $R:=\rho^2\isometry_{\outputSpace}(\id_{\tilde\outputSpace}\otimes R_\hilbert V)\isometry_{\outputSpace}^{-1}$.
    Then, for all $\delta\in(0,1)$, \begin{equation*}
        \Pbb\left[\forall N\in\Np,\norm{\hilbertMDS_N}_{(V + V_N)^{-1}}\leq \rho\sqrt{2\tr(R_\hilbert V)\ln\left(\frac1\delta\left[\det(\id_{\tilde\outputSpace^N} + \tilde M_N \tilde V^{-1}\tilde M_N^\star)\right]^{1/2}\right)}\right]\geq 1-\delta,
    \end{equation*}
    where $V = \isometry_\hilbert H (\tilde V\otimes \id_{\hilbert V})\isometry_\hilbert H^{-1}$.
\end{corollary}
Before proving this result, let us briefly comment on the subgaussianity assumption used.
\begin{remark}
    The subgaussianity assumption of Corollary~\ref{clry:confidence bounds vv LS:online sampling:bound noise term} is in general a strengthening of the assumption that $\eta$ is $\bar\rho$-subgaussian for $\bar\rho\in\Rp$, but also a weakening of the one that $\eta$ is $\bar R$-subgaussian for $\bar R\in\bLinearOperators(\outputSpace)$ self-adjoint, positive semi-definite, and of trace class.
    Indeed, the operator $R$ given in the corollary is not trace-class when $\tilde\outputSpace$ is infinite-dimensional.
    Furthermore, for any such operator $\bar R$, there exist $\rho\in\Rp$ and $Q\in\bLinearOperators(\hilbert V)$ self-adjoint, positive semi-definite, and of trace class such that $\bar R \preceq R:=\rho^2\isometry_\outputSpace(\id_{\tilde\outputSpace}\otimes Q)\isometry_\outputSpace^{-1}$.
    Indeed, define the trace-class operator $\bar R^\prime = \isometry_{\outputSpace}^{-1}\bar R\isometry_\outputSpace\in\bLinearOperators(\tildeHilbert H\otimes\hilbert V)$, and take $\rho = (\norm{\tr_{\hilbert V}(\bar R^\prime)}_{\bLinearOperators(\tildeHilbert H)})^{1/2}$, and $Q = \tr_{\tildeHilbert H}(\bar R^\prime)$, where $\tr_W(A)$ is the partial trace over $W$ of a trace-class operator $A$ defined on a tensor-product space involving $W$.
    We forego proving this construction for brevity.
\end{remark}
The proof of Corollary~\ref{clry:confidence bounds vv LS:online sampling:bound noise term} relies on the following lemma.
\begin{lemma}\label{lemma:confidence bounds vv LS:online sampling:subgaussian noise in range of variance proxy}
    Let $R\in\bLinearOperators(\outputSpace)$ be self-adjoint, positive semi-definite.
    Let $\eta$ be a $\outputSpace$-valued, zero-mean random variable, and assume that $\eta$ is $R$-subgaussian.
    Then, $\eta$ takes values in $\closure(\range(R))$ \ac{as}
\end{lemma}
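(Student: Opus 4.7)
[Proof plan for Lemma~\ref{lemma:confidence bounds vv LS:online sampling:subgaussian noise in range of variance proxy}]
The plan is to show that $\eta$ is \ac{as} orthogonal to $\ker(R)$ and then identify $\ker(R)^\perp$ with $\closure(\range(R))$ using self-adjointness of $R$. The starting observation is that for any $g\in\ker(R)$, we have $\norm{g}_R^2 = \scalar{g,Rg}_\outputSpace = 0$. Applying the subgaussianity bound with $tg$ for arbitrary $t\in\R$ then yields $\E[\exp(t\scalar{g,\eta}_\outputSpace)] \leq \exp(t^2\norm{g}_R^2) = 1$.

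Next, I would leverage this uniform-in-$t$ moment generating function bound via a standard Chernoff argument. For any $\epsilon>0$ and $t>0$, Markov's inequality gives $\Pbb[\scalar{g,\eta}_\outputSpace \geq \epsilon]\leq e^{-t\epsilon}\E[\exp(t\scalar{g,\eta}_\outputSpace)] \leq e^{-t\epsilon}$. Letting $t\to\infty$ forces $\Pbb[\scalar{g,\eta}_\outputSpace\geq\epsilon]=0$, and applying the same argument with $-g\in\ker(R)$ (which still lies in $\ker(R)$ as it is a linear subspace) gives $\Pbb[\scalar{g,\eta}_\outputSpace\leq -\epsilon]=0$. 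Taking $\epsilon$ along a positive null sequence, I conclude that for every fixed $g\in\ker(R)$, $\scalar{g,\eta}_\outputSpace = 0$ \ac{as}

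The main (mild) obstacle is passing from this pointwise-in-$g$ statement to a single \ac{as} event valid for \emph{all} $g\in\ker(R)$. Here I use separability of $\outputSpace$: the closed subspace $\ker(R)$ is itself separable, so it admits a countable \ac{ONB} $(e_n)_{n\in I}$, $I\subseteq\Np$. By a countable union, the event $\Omega_0 := \{\scalar{e_n,\eta}_\outputSpace=0~\forall n\in I\}$ has probability one. On $\Omega_0$, continuity of $\scalar{\cdot,\eta}_\outputSpace$ and density of $\Span(e_n\mid n\in I)$ in $\ker(R)$ yield $\scalar{g,\eta}_\outputSpace = 0$ for every $g\in\ker(R)$, i.e.\ $\eta\in\ker(R)^\perp$.

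Finally, since $R$ is bounded and self-adjoint on the Hilbert space $\outputSpace$, the standard identity $\ker(R)^\perp = \closure(\range(R^\star)) = \closure(\range(R))$ applies, completing the proof. No new estimates are needed; the argument is a clean combination of the Chernoff bound, separability, and the closed-range identity for self-adjoint operators.
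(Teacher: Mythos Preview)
Your proof is correct and follows essentially the same approach as the paper: show that $\scalar{g,\eta}_\outputSpace=0$ \ac{as} for each fixed $g\in\ker(R)=\range(R)^\perp$, then use separability to upgrade to all $g$ simultaneously, and conclude via $\ker(R)^\perp=\closure(\range(R))$. The only cosmetic difference is that the paper deduces $\scalar{g,\eta}_\outputSpace=0$ \ac{as} from the standard scalar fact $\Var[X]\leq R_e^2$ with $R_e=0$, whereas you use a direct Chernoff argument; both are equally valid routes to the same conclusion.
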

\begin{proof}
    Let $e\in\range(R)^\perp$.
    The variable $X := \scalar{e,\eta}_\outputSpace$ is a real-valued, $R_e:=\norm{e}_R$-subgaussian random variable, as $\E[X] = \scalar{e,\E[\eta]} = 0$ and \begin{equation*}
        \E[\exp(cX)] = \E[\scalar{ce,\eta}] \leq \exp\left[\frac12\scalar{R(ce), ce}_\outputSpace\right] = \exp\left[\frac12c^2 R_e^2\right],~\forall c\in\R.
    \end{equation*}
    Classical results on scalar subgaussian variables then guarantee that $\Var[X]\leq R_e^2$.
    Yet, $e\in\range(R)^\perp = \ker(R^\star) = \ker(R)$, since $R$ is self-adjoint, and thus $Re = 0$.
    As a result, $R_e = \norm{e}_R = (\scalar{Re,e})^{1/2} = 0$, and thus $\Var[X] = 0$ and $X$ is constant \ac{as}, and this constant is $0$ since $X$ is $0$-mean.
    We deduce by separability of $\outputSpace$ and continuity of the scalar product that $\Pbb[\exists e\in\range(R)^\perp,~\scalar{\eta,e}\neq 0] = 0$, which shows that it holds with full probability that $\eta\in(\range(R)^\perp)^\perp=\closure(\range(R))$, concluding the proof.
\end{proof}
\begin{proof}[Proof of Corollary~\ref{clry:confidence bounds vv LS:online sampling:bound noise term}]
    Under the current assumptions, the scaled noise $\bar\eta_n = (R^\mppi)^{1/2}\eta_n$ is $1$-subgaussian conditionally on $\filtration$, where $R^\mppi$ is the Moore-Penrose pseudo-inverse of $R$.
    Indeed, it is (conditionally) $0$-mean and for any $g\in\outputSpace$, \begin{align*}
        \E[\exp[\scalar{g,(R^\mppi)^{1/2}\eta_n}]\mid \filtration_{n-1}] 
            &= \E[\exp[\scalar{(R^\mppi)^{1/2}g,\eta_n}]\filtration_{n-1}]\\ 
            &\leq \exp\left[\frac12\scalar{(R^\mppi)^{1/2}g, R (R^\mppi)^{1/2}g}\right]\\
            &= \exp\left[\frac12\norm{g}_{RR^\mppi}^2\right] \\
            &\leq \exp\left[\frac12\norm{g}^2_\outputSpace\right].
    \end{align*}
    The second equality comes from the fact that $R^{1/2}$ and $R^\mppi$ commute; the proof is a technical exercise that follows from the spectral theorem applied to $R_\hilbert V$ and the spectral definition of $R^\sharp$.
    We also leveraged in the last inequality the fact that $R R^\mppi$ is an orthogonal projection, and thus $\norm{g}_{RR^{\mppi}}\leq \norm{g}_\outputSpace$; see for instance Proposition 2.3 in \citet{EHN1996}. 
    Theorem~\ref{thm:confidence bounds vv LS:online sampling:general case} applied with $Q = R_\hilbert V$ then guarantees that for all $\delta\in(0,1)$, it holds with probability at least $1-\delta$ that, for all $N\in\Np$,\begin{equation*}
        \norm{\isometry_\hilbert H^{-1}M_N^\star (R^\mppi_N)^{1/2}\eta_{:N}}_{(\tilde V + \tilde V_N)^{-1}\otimes R_\hilbert V}\leq \sqrt{2\tr(R_\hilbert V)\ln\left(\frac1\delta\left[\det(\id_{\tilde\outputSpace^N} + \tilde M_N^\star \tilde V^{-1}\tilde M_N)\right]^{1/2}\right)}.
    \end{equation*}
    where we introduced $R^\mppi_N: (g_n)_{n\in\integers{N}}\in\outputSpace^N \mapsto (R^\mppi g_n)_{n\in\integers{N}}\in\outputSpace^N$.
    We conclude by showing that \begin{equation*}
        \norm{\isometry_\hilbert H^{-1}M_N^\star (R^\mppi_N)^{1/2}\eta_{:N}}_{(\tilde V + \tilde V_N)^{-1}\otimes R_\hilbert V} = \rho^{-1}\norm{M_N^\star \eta_{:N}}_{(V + V_N)^{-1}},~\text{\ac{as}}
    \end{equation*}
    Indeed, \begin{align*}
        M_N^\star (R_N^\mppi)^{1/2}\eta_{:N} 
            &= \sum_{n=1}^N L_n^\star (R^\mppi)^{1/2}\eta_n \\
            &= \rho^{-1}\sum_{n=1}^NL_n^\star\isometry_{\outputSpace}\left(\id_{\tilde \outputSpace}\otimes (R_\hilbert V^\mppi)^{1/2}\right)\isometry_{\outputSpace}^{-1}\eta_n\\
            &= \rho^{-1}\sum_{n=1}^N \isometry_\hilbert H \left(\tilde L_n\otimes \id_{\tilde\outputSpace}\right)\isometry_{\outputSpace}^{-1}\isometry_{\outputSpace}\left(\id_{\tilde\outputSpace}\otimes (R_\hilbert V^\mppi)^{1/2}\right)\isometry_{\outputSpace}^{-1}\eta_n\\
            &= \rho^{-1}\isometry_\hilbert H\sum_{n=1}^N \left(\tilde L_n\otimes (R_\hilbert V^\mppi)^{1/2}\right)\isometry_{\outputSpace}^{-1}\eta_n,
    \end{align*}
    and thus\begin{align*}
        &\norm{\isometry_{\hilbert H}^{-1}M_N^\star (R_N^\mppi)^{1/2}\eta_{:N}}^2_{(\tilde V + \tilde V_N)^{-1}\otimes R} \\
        &\quad= \rho^{-2}\sum_{n,m=1}^N \lrScalar{\left(\left[(\tilde V + \tilde V_N)^{-1}\tilde L_n^\star\right]\otimes \left[R_\hilbert V (R_\hilbert V^\mppi)^{1/2}\right]\right)\isometry_\outputSpace^{-1}\eta_n,\left(\tilde L_m^\star\otimes(R_\hilbert V^\mppi)^{1/2}\right)\isometry_{\outputSpace}^{-1}\eta_m}_{\tildeHilbert H\otimes\hilbert V}\\
        &\quad= \rho^{-2}\sum_{n,m=1}^N \lrScalar{\left(\left[(\tilde V + \tilde V_N)^{-1}\tilde L_n^\star\right]\otimes \left[(R_\hilbert V^\mppi)^{1/2}R_\hilbert V (R_\hilbert V^\mppi)^{1/2}\right]\right)\isometry_\outputSpace^{-1}\eta_n,(\tilde L_m^\star\otimes\id_{\hilbert V})\isometry_{\outputSpace}^{-1}\eta_m}_{\tildeHilbert H\otimes\hilbert V}\\
        &\quad= \rho^{-2}\sum_{n,m=1}^N \left\langle\left[(\tilde V + \tilde V_N)^{-1}\otimes\id_{\hilbert V}\right]\left[\tilde L_n^\star\otimes\id_{\hilbert V}\right]\left[\id_{\tilde\outputSpace}\otimes R_\hilbert V \vphantom{\tilde R}\right]\left[\id_{\tilde\outputSpace}\otimes R_\hilbert V\vphantom{\tilde R}\right]^\mppi \isometry_\outputSpace^{-1}\eta_n,\right.\\
        &\qquad\qquad\left.(\tilde L_m^\star\otimes\id_{\hilbert V})\isometry_{\outputSpace}^{-1} \eta_m\right\rangle_{\tildeHilbert H\otimes\hilbert V}\\
        &\quad= \rho^{-2}\sum_{n,m=1}^N \lrScalar{\isometry_{\hilbert H}^{-1}(V + V_N)^{-1}L_n^\star RR^\mppi\eta_n,\isometry_\hilbert H^{-1}L_m^\star \eta_m}_{\tildeHilbert H\otimes\hilbert V}\\
        &\quad= \rho^{-2}\sum_{n,m=1}^N \lrScalar{(V + V_N)^{-1}L_n^\star RR^\mppi\eta_n,L_m^\star\eta_m}_{\hilbert H},
    \end{align*}
    where we leveraged in the third equality the fact that $(R_\hilbert V^\mppi)^{1/2}$ and $R_\hilbert V$ commute, and that $(\tilde V + \tilde V_N)^{-1} \otimes \id_{\hilbert V} = \isometry_{\hilbert H}^{-1} (V+V_N)^{-1} \isometry_{\hilbert H}$ in the fourth one.
    Next, $R R^\mppi$ is the orthogonal projection on $\closure(\range(R))$; see for instance Proposition 2.3 in \cite{EHN1996}. 
    Yet, $\eta_n$ is \ac{as} in $\closure(\range(R))$ by Lemma~\ref{lemma:confidence bounds vv LS:online sampling:subgaussian noise in range of variance proxy}.
    It follows that $RR^\mppi \eta_n = \eta_n$, \ac{as}, for all $n\in\Np$.
    As a result, it holds \ac{as} that\begin{align*}
        \norm{M_N^\star (R^\mppi)^{1/2}\eta_{:N}}_{(\tilde V + \tilde V_N)^{-1}\otimes R_{\hilbert V}}^2 &= \rho^{-2}\sum_{n,m=1}^N \scalar{(V + V_N)^{-1}L_n^\star\eta_n,L_m^\star\eta_m}_\hilbert H\\
        &= \rho^{-2}\norm{M_N^\star \eta_{:N}}_{(V + V_N)^{-1}}^2.
    \end{align*}
    The result follows.
\end{proof}

\subsection{Main result}\label{apdx:confidence bounds vv LS:main result}
We now combine all that precedes to obtain the announced concentration bound in vector-valued least-squares.
\begin{theorem}\label{thm:confidence bounds vv LS}
    Under the notations of Section~\ref{apdx:confidence bounds vv LS:setup and goal}, let $R\in\bLinearOperators(\outputSpace)$ be self-adjoint, positive semi-definite, and $S\in\Rp$.
    Assume that $\norm{h^\star}_\hilbert H\leq S$, and assume one of the following and define $\beta_{\lambda}$ and $\integersOfInterest$ accordingly:\begin{enumerate}
        \item \label{thm:confidence bounds vv LS:online sampling} Assumptions~\ref{assumption:online sampling} and~\ref{assumption:coefficients transformations} hold, $R = \rho^2\isometry_\outputSpace(\id_{\tilde \outputSpace}\otimes R_\hilbert V)\isometry_\outputSpace^{-1}$ for some $\rho\in\Rp$ and $R_\hilbert V\in\bLinearOperators(\hilbert V)$ self-adjoint, positive semi-definite, and trace-class, and $\eta$ is $R$-subgaussian conditionally on $\filtration$. Then, define $\integersOfInterest = \Np$ and, for all $(n,\delta)\in\Np\times(0,1)$,\begin{equation*}
            \beta_{\lambda}(N,\delta) = S + \frac{\rho}{\sqrt\lambda}\sqrt{2\tr(R_\hilbert V)\ln\left[\frac1\delta\left\{\det\left(\id_{\tilde\outputSpace^N} + \frac1\lambda\tilde M_N\tilde M_N^\star\right)\right\}^{1/2}\right]},
        \end{equation*}
        where $\tilde M_N$ is introduced in \eqref{eq:notations}.
        \item \label{thm:confidence bounds vv LS:deterministic sampling} Assumption~\ref{assumption:deterministic sampling} holds, $R$ is of trace class, and $\eta$ is $R$-subgaussian.
        Then, define $\integersOfInterest = \{N_0\}$ for some $N_0\in\Np$ and, for all $(n,\delta)\in\Np\times(0,1)$, \begin{equation*}
            \beta_{\lambda}(N,\delta) = S+\frac1{\sqrt\lambda}\sqrt{\tr(T_{N,\lambda}) + 2\sqrt{\ln\left(\frac1\delta\right)}\norm{T_{N,\lambda}}_2 + 2\ln\left(\frac1\delta\right)\norm{T_{N,\lambda}}_{\bLinearOperators(\hilbert H)}},
        \end{equation*}
        where we introduced $T_{N,\lambda} = (V_N + \lambda\id_\hilbert H)^{-1/2} M_N^\star (R\otimes \id_{\R^N}) M_N (V_N + \lambda\id_\hilbert H)^{-1/2}$.
    \end{enumerate}
    In both cases, for all $\delta\in(0,1)$, \begin{equation}
        \Pbb\left[\forall N\in\integersOfInterest, \forall \bar L\in\bLinearOperators(\outputSpace,\bar\outputSpace), \norm{\bar L h_{N,\lambda} - \bar L h^\star}_{\bar\outputSpace}  \leq \beta_{\lambda}(N,\delta)\cdot\sigma_{N,\lambda}(\bar L)\right]\geq 1-\delta.
    \end{equation}
\end{theorem}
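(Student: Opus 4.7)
The plan is to combine the decomposition of Lemma~\ref{lemma:starting point confidence bound} with the two noise bounds established earlier in this appendix, namely Corollary~\ref{clry:confidence bounds vv LS:online sampling:corollary for KRR} for Case~\ref{thm:confidence bounds vv LS:online sampling} and Lemma~\ref{lemma:concentration quadratic form MS2023} for Case~\ref{thm:confidence bounds vv LS:deterministic sampling}. The argument is essentially an assembly step, with most of the technical work already done.

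First, I would invoke Lemma~\ref{lemma:starting point confidence bound}: for any $N\in\Np$ and any bounded $\bar L\in\bLinearOperators(\hilbert H,\bar\outputSpace)$,
\begin{equation*}
    \norm{\bar L h_{N,\lambda} - \bar L h^\star}_{\bar\outputSpace}
    \leq \norm{\bar L (V_N + \lambda\id_\hilbert H)^{-1/2}}_{\bLinearOperators(\hilbert H,\bar\outputSpace)}\left(\norm{\hilbertMDS_N}_{(V_N + \lambda\id_\hilbert H)^{-1}} + \lambda\norm{h^\star}_{(V_N + \lambda\id_\hilbert H)^{-1}}\right).
\end{equation*}
Equations~\eqref{eq:starting point confidence bound:hstar} and~\eqref{eq:starting point confidence bound:variance} (together with $\norm{h^\star}_\hilbert H \leq S$) then give the deterministic bounds $\norm{h^\star}_{(V_N+\lambda\id_\hilbert H)^{-1}}\leq \lambda^{-1/2}S$ and $\norm{\bar L(V_N+\lambda\id_\hilbert H)^{-1/2}}_{\bLinearOperators(\hilbert H,\bar\outputSpace)} = \lambda^{-1/2}\sigma_{N,\lambda}(\bar L)$. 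Inserting these yields
\begin{equation*}
    \norm{\bar L h_{N,\lambda} - \bar L h^\star}_{\bar\outputSpace}
    \leq \sigma_{N,\lambda}(\bar L)\left(S + \tfrac{1}{\sqrt\lambda}\norm{\hilbertMDS_N}_{(V_N + \lambda\id_\hilbert H)^{-1}}\right),
\end{equation*}
which is valid for \emph{every} $\bar L$ on the same sample point, since the only random quantity apart from $\sigma_{N,\lambda}(\bar L)$ is the $\bar L$-free noise term $\norm{\hilbertMDS_N}_{(V_N + \lambda\id_\hilbert H)^{-1}}$.

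It thus only remains to show that, with probability at least $1-\delta$, this noise term is bounded for all $N\in\integersOfInterest$ by the expression appearing in $\beta_\lambda(N,\delta)$ minus $S$. For Case~\ref{thm:confidence bounds vv LS:online sampling}, I would apply Corollary~\ref{clry:confidence bounds vv LS:online sampling:corollary for KRR} with the choice $\tilde V = \lambda\,\id_{\tildeHilbert H}$; this operator is trivially self-adjoint, positive definite, diagonal, and has bounded inverse $\lambda^{-1}\id_{\tildeHilbert H}$, so the corollary's assumptions hold, the corresponding $V$ is $\lambda\,\id_\hilbert H$, and the announced time-uniform bound on $\norm{\hilbertMDS_N}_{(V_N+\lambda\id_\hilbert H)^{-1}}$ matches the square root term in \eqref{eq:beta online sampling} exactly (up to the $\rho/\sqrt\lambda$ prefactor). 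For Case~\ref{thm:confidence bounds vv LS:deterministic sampling}, I would apply Lemma~\ref{lemma:concentration quadratic form MS2023} at the fixed time $N_0$ with the operator $A = (V_{N_0}+\lambda\id_\hilbert H)^{-1/2}M_{N_0}^\star$. Then $\norm{A\eta_{:N_0}}^2 = \norm{\hilbertMDS_{N_0}}_{(V_{N_0}+\lambda\id_\hilbert H)^{-1}}^2$, and the operator $T$ appearing in the lemma is precisely $T_{N_0,\lambda}$ as defined in the theorem. Trace-classness of $R$ (combined with independence of $\eta$) makes the lemma applicable; taking square roots and adding $S$ produces \eqref{eq:beta deterministic sampling}.

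Putting the two halves together concludes the argument. There is no genuine technical obstacle remaining: the difficult steps — the self-normalized time-uniform concentration inequality of Theorem~\ref{thm:confidence bounds vv LS:online sampling:general case} and its adaptation to operator-valued subgaussian noise in Corollary~\ref{clry:confidence bounds vv LS:online sampling:corollary for KRR}, as well as the independent-noise quadratic-form bound of Lemma~\ref{lemma:concentration quadratic form MS2023} — have already been established. The only minor point worth emphasizing is that the bound holds uniformly in $\bar L$ on a high-probability event that does \emph{not} depend on $\bar L$, which is essential for the intended use in Theorem~\ref{thm:confidence bounds KRR} (where $\bar L$ varies over all pointwise evaluations $K(\cdot,x)^\star$) and which follows transparently from the decoupling observed in Lemma~\ref{lemma:starting point confidence bound}.
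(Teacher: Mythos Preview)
The proposal is correct and takes essentially the same approach as the paper's proof: both combine the deterministic decomposition of Lemma~\ref{lemma:starting point confidence bound} (together with \eqref{eq:starting point confidence bound:hstar} and \eqref{eq:starting point confidence bound:variance}) with the noise bound from Corollary~\ref{clry:confidence bounds vv LS:online sampling:corollary for KRR} in Case~\ref{thm:confidence bounds vv LS:online sampling} and from Lemma~\ref{lemma:concentration quadratic form MS2023} in Case~\ref{thm:confidence bounds vv LS:deterministic sampling}. Your explicit identification of the choice $\tilde V = \lambda\,\id_{\tildeHilbert H}$ and of the operator $A = (V_{N_0}+\lambda\id_\hilbert H)^{-1/2}M_{N_0}^\star$ is spot on; the only cosmetic slip is that the labels \eqref{eq:beta online sampling} and \eqref{eq:beta deterministic sampling} you cite belong to Theorem~\ref{thm:confidence bounds KRR} rather than to the present statement, but the expressions coincide.
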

\begin{proof}
    Let $\delta\in(0,1)$.
    In case~\ref{thm:confidence bounds vv LS:deterministic sampling}, it follows from Lemma~\ref{lemma:concentration quadratic form MS2023} that it holds with probability at least $1-\delta$ that \begin{align*}
        \norm{\hilbertMDS_{N_0}}_{(V+V_N)^{-1}} &= \norm{(V+V_{N_0})^{-1/2}M_{N_0}^\star\eta_{:{N_0}}} \\
        &\leq \sqrt{\tr(T_{N_0,\lambda}) + 2\sqrt{\ln\left(\frac1\delta\right)}\norm{T_{N_0,\lambda}}_2 + 2\ln\left(\frac1\delta\right)\norm{T_{N_0,\lambda}}_{\bLinearOperators(\hilbert H)}}\\
        &=: \gamma_{N_0,\lambda}(\delta),
    \end{align*}
    where we applied the lemma with $A = (\id_{\hilbert H}+V_{N_0})^{-1/2}M_{N_0}^\star$, yielding $T = T_{N_0,\lambda} = A\cdot(R\otimes \id_{\R^{N_0}})A^\star$ and.
    In case~\ref{thm:confidence bounds vv LS:online sampling}, it follows from Corollary~\ref{clry:confidence bounds vv LS:online sampling:bound noise term} that it holds with probability at least $1-\delta$ that, for all $N\in\Np$, \begin{equation*}
        \norm{\hilbertMDS_{N}}_{(V+V_N)^{-1}} \leq \rho\sqrt{2\tr(R_\hilbert V)\left\{\ln\left[\frac1\delta\det\left(\id_{\tilde\outputSpace^N} + \frac1\lambda\tilde M_N\tilde M_N^\star\right)\right]\right\}^{1/2}} =: \gamma_{N,\lambda}(\delta)
    \end{equation*}
    In both cases, it holds with probability at least $1-\delta$ that \begin{equation*}
        \forall N\in\integersOfInterest, \norm{\hilbertMDS_{N}}_{(V+V_N)^{-1}}\leq \gamma_{N,\lambda}(\delta).
    \end{equation*}
    The result then immediately follows from Lemma~\ref{lemma:starting point confidence bound}, \eqref{eq:starting point confidence bound:hstar}, and~\eqref{eq:starting point confidence bound:variance}, noting that $\beta_{\lambda}(N,\delta) = S+\frac{1}{\sqrt\lambda}\gamma_{N,\lambda}(\delta)$.
\end{proof}

\subsection{Proof of Theorem~\ref{thm:confidence bounds KRR}}\label{apdx:confidence bounds vv LS:proof KRR}
\begin{proof}
    The result follows immediately by applying Theorem~\ref{thm:confidence bounds vv LS} with $\hilbert H = \rkhs K$, $L_n = K(\cdot, X_n)^\star$, and $\eta_n = y_n - \E[p](X_n)$, for all $n\in\Np$.
    Indeed, for these choices, the optimization problem \eqref{eq:rls} is equivalent to that in \eqref{eq:krr estimator}, since for all $h\in\rkhs K$, $g\in\outputSpace$, and $n\in\Np$ \begin{equation*}
        \scalar{h(X_n),g}_\outputSpace = \scalar{h,K(\cdot,X_n) g}_K = \scalar{K(\cdot, X_n)^\star h, g}_\outputSpace,
    \end{equation*}
    showing that the evaluation operator in $X_n$ is $K(\cdot,X_n)^\star$.
    Furthermore, the process $\eta$ satisfies the appropriate assumption~\ref{assumption:deterministic sampling} or \ref{assumption:online sampling} depending on the considered case, since $D$ is assumed to be a process of transition pairs of $p$, or a process of independent transition pairs of $p$.
    In case \ref{thm:confidence bounds KRR:deterministic sampling}, the assumptions of case \ref{thm:confidence bounds vv LS:deterministic sampling} in Theorem~\ref{thm:confidence bounds vv LS} are satisfied, and the result follows.
    Similarly, In case \ref{thm:confidence bounds KRR:online sampling}, the assumptions of case \ref{thm:confidence bounds vv LS:online sampling} in Theorem~\ref{thm:confidence bounds vv LS} are satisfied by Proposition~\ref{prop:rkhs of uniform-block-diagonal kernel}, and the result follows as well.
\end{proof}

\FloatBarrier
\clearpage
\section{Proof of Theorem~\ref{thm:abstract level two sample test}}\label{apdx:proof abstract level two sample test}
\begin{proof}
    For any $p_1,p_2\in\hypothesisSet$, we have 
    \begin{align*}
        \covariateError_\I&(p_1,p_2,\subsetOfInterest,\integersOfInterest_1,\integersOfInterest_2) \\
        & =  \Pbb[
            \exists (n_1,n_2)\in\integersOfInterest_1\times\integersOfInterest_2:
            \subsetOfInterest\cap\errorRegion_\I(p_1,p_2,D^\upar1_{p_1,:n_1},D^\upar2_{p_2,:n_2})\neq \emptyset
            ]\\
        & = \Pbb\left[
                \exists (n_1,n_2,x)\in\integersOfInterest_1\times\integersOfInterest_2\times\subsetOfInterest:
                \E(p_1)(x) = \E(p_2)(x) \right.\\
                &\qquad\qquad\left.\land
                \lrNorm{f_{D^\upar 1_{:n_1}}^\upar 1(x) - f_{D^\upar 2_{:n_2}}^\upar 2(x)}
                >
                \sum_{i=1}^2B_i\left(D^\upar i_{:n},x\right)
            \right]\\
        & \leq \Pbb\left[
                \exists (n_1,n_2,x)\in\integersOfInterest_1\times\integersOfInterest_2\times\subsetOfInterest:
                \sum_{i=1}^2\lrNorm{f_{D^\upar i_{:n_i}}^\upar i(x) - \expectation(p_i)(x)}
                >
                \sum_{i=1}^2B_i\left(D^\upar i_{:n},x\right)
            \right]\\
        &\leq \Pbb\left[
                \bigcup_{i=1}^2\left\{
                    \exists (n,x)\in\integersOfInterest_i\times\subsetOfInterest
                    \lrNorm{f_{D^\upar i_{:n}}^\upar i(x) - \expectation(p_i)(x)}
                    > 
                    B_i\left(D^\upar i_{:n},x\right)
                \right\}
            \right]\\
        &\leq \sum_{i=1}^2\Pbb\left[
                 \exists (n,x)\in\integersOfInterest_i\times \subsetOfInterest:
                \lrNorm{f_{D^\upar i_{:n}}^\upar i(x) - \expectation(p_i)(x)}
                > 
                B_i\left(D^\upar i_{:n},x\right)
            \right]\\
        &\leq \delta_1 + \delta_2
    \end{align*}
    where we used the fact that $\expectation(p_1)(x) = \expectation(p_2)(x)$ for any $x\in\errorRegion_\I(p_1,p_2,D_1,D_2)$ and $(D_1,D_2)\in\dataSet^2$.   
\end{proof}

\FloatBarrier
\clearpage
\section{Experiment details} \label{apdx:hyperparameters}
This section presents the detailed setups for each of our numerical experiments.
Additionally, the code to reproduce all experimental results is available at \url{https://github.com/Data-Science-in-Mechanical-Engineering/conditional-test}.

Unless stated, all our experiments with bootstrapped test thresholds use the naive resampling scheme outlined in Appendix~\ref{apdx:bootstrapping}.

\subsection{Illustrative example (Figure \texorpdfstring{\ref{fig:example_1d}}{1})}
Figure~\ref{fig:example_1d} illustrates our test on a simple example with one-dimensional inputs and outputs. 
We use a Gaussian kernel on $\inputSet = [-1,1]$ and the inhomogeneous linear kernel on $\measurementSet=\R$. 
We pick two mean functions $f_1, f_2 \in \hilbert H_k$, from which we collect data sets $D_i = \{(x^{(i)}_j, f_i(x^{(i)}_j) + \epsilon^{(i)}_j)\}_{j=1}^n$, $i \in \{1,2\}$. The covariates $x^{(i)}_j$ are sampled uniformly from $\inputSet$ and $\epsilon^{(i)}_j \sim \Normal(\cdot\mid 0,s^2)$, such that the Markov kernel corresponding to each data set is $p_i(\cdot,x) = \Normal(\cdot\mid f_i(x), s^2)$, where $\Normal(\cdot\mid \mu,s^2)$ is the Gaussian measure on $\R$ with mean $\mu$ and variance $s^2$. 
Consequently, $H_0(x, p_1, p_2)$ is equivalent to $f_1(x) = f_2(x)$.

We apply our test with analytical thresholds (Figure~\ref{fig:example_1d}, left) and bootstrapped thresholds (Figure~\ref{fig:example_1d}, right). For the analytical thresholds, we use the ground truth RKHS function norm (that is, $1$) and the Gaussian noise standard deviation for the corresponding upper bounds on these quantities in Theorem \ref{thm:confidence bounds KRR}.

Table~\ref{tab:hyperparams example 1d} reports the hyperparameters for this experiment.

\subsection{Empirical error rates}
This section provides implementation and design details for experiments evaluating the empirical type \I\ and type \II\ errors of our tests in controlled, well-specified settings. All experiments in this section were conducted on an Intel Xeon 8468 Sapphire CPU, using 10 GB of RAM.

\subsubsection{General setup}
\paragraph{Data generation} We evaluate our test by generating two data sets, $D_1$ and $D_2$, each containing $n \in \N$ transition pairs of the form
\begin{equation*}
    D_i = \{(x^{(i)}_j, f_i(x^{(i)}_j) + \epsilon_j^{(i)})\}_{j=1}^n, \quad i \in \{1,2\}.
\end{equation*}
Here, the inputs $x^{(i)}_j$ are sampled (uniformly, unless stated otherwise) from $\inputSet$, $\epsilon^{(i)}_j$ are \ac{iid} noise, and $f_i$ is a random unit-norm element of $\rkhs k$ whose sampling is detailed below.
In particular, we choose $f_i$ directly from the RKHS $\hilbert H_k$ of $k$. 
In all our experiments in this section, we use a Gaussian kernel
\begin{equation}
    k(x, y) = \exp\left(-\frac{\|x-y\|^2_{\inputSet}}{2\gamma^2}\right)
\end{equation}
on $\inputSet\subset\R^2$ with bandwidth $\gamma^2 = 0.25$.

\begin{remark}
    In the case where $\kappa$ is the linear kernel, the procedure above ensures that $\E[p_i]\in\rkhs K$ with $K = k\cdot\id_{\rkhs\kappa}$.
    This guarantee is lost when $\kappa$ is a more complex kernel.
    While this means that the simulations may use the test in contexts not covered by the theory, this is precisely the interest of the bootstrapping schemes we propose: the test can still be applied, and its performance evaluated empirically.
\end{remark}

\paragraph{Sampling of mean functions} Following \citet{FST2021}, we sample mean functions $f_i \in \hilbert H_k$ randomly by first sampling $m$ points $x_1, ..., x_m$ uniformly at random from $\inputSet$ and then sampling $f_i$ uniformly from the sphere
\begin{equation}
    \left\{ f=\sum_{j=1}^m \alpha_jk(\cdot, x_j) \bigm\vert \alpha_1, ..., \alpha_m \in \R, \|f\|_{\hilbert H_k} = r \right\} \subseteq \hilbert H_k,
\end{equation}
where the RKHS norm $r$ depends on the concrete experiment setting. When reporting hyperparameters, we refer to $m$ as the ``mean function dimension''.

\paragraph{Empirical error rates} We quantify both the type \I\ and type \II\ error rates through the test’s empirical positive rate — the proportion of data set pairs on which at least one covariate triggers rejection in the region of interest — under two data-generation regimes. Under the global null ($H_0(x,p_1,p_2)$ holds for all $x \in \inputSet$), this positive rate directly estimates the type \I\ error.  Under an alternative ($H_0(x,p_1,p_2)$ fails for some $x \in \inputSet$), the same quantity measures the test’s power, from which we compute the type \II\ error as one minus that power.

To compute the positive rate in practice, we draw $T$ independent data set pairs $\{(D_1^{(j)}, D_2^{(j)})\}_{j=1}^T$. For each pair, we take the observed covariates
\begin{equation}
    \subsetOfInterest^{(j)} = \{x \in \inputSet \mid \exists z\in\measurementSet,(x, z) \in D_1^{(j)} \cup D_2^{(j)}\}
\end{equation}
as the region of interest and compute the covariate rejection region
\begin{equation}\begin{split}
    \hat \chi(D_1^{(j)}, D_2^{(j)}) &\coloneqq \covariateRejectionRegion(D_1^{(j)},D_2^{(j)}) \cap \subsetOfInterest^{(j)} \\
    &= \{x \in \subsetOfInterest \mid \mathcal T(x, D_1 \cup D_2) = 1\}.
\end{split}\end{equation}
We then take
\begin{equation}
    \frac{1}{T} \sum_{j=1}^T  \begin{cases}
    0 & \text{if } \hat \chi(D_1^{(j)}, D_2^{(j)}) = \emptyset \\
    1 & \text{otherwise}
\end{cases}
\end{equation}
as the empirical positive rate. 

Repeating this under the null and alternative regimes while varying the significance level $\alpha$ and other parameters, we obtain detailed error-rate curves that reveal how closely the observed type \I\ error matches its nominal level and how the test power depends on the various parameters. Enforcing either regime requires appropriate choices of the mean functions $f_1$ and $f_2$, and noise distributions. For instance, to enforce the global null, it is sufficient (but depending on the kernel $\kappa$, not strictly necessary) to generate both data sets from identical mean functions $f_1=f_2$ with identical noise distribution. For the alternative, we can introduce a controlled discrepancy, such as choosing $f_2 = f_1 + h$ for some function $h \in \hilbert H_k$, or alter the noise distribution in the case where we introduce a kernel $\kappa$ to test for other properties such as higher moments. Because the exact construction for enforcing null and alternative hypotheses depends on the specific experiment setting, we defer those specific design details to the individual experiment descriptions that follow.

Finally, we always compute positive rates from $T=100$ independently sampled data sets and report curves that are averaged over $100$ independent experiment runs (i.e., samples of mean function pairs), together with 2.5\% and 97.5\% quantiles taken over the experiment runs.

\subsubsection{Global and local sensitivity (Figure \texorpdfstring{\ref{fig:empirical errors}}{2})}
Figure~\ref{fig:empirical errors} presents the empirical positive rate of our test in three different scenarios: (1) when the $H_0$ is enforced enforced everywhere on $\inputSet$ (Figure~\ref{fig:empirical errors}, left); (2) when $H_0$ is violated at frequently sampled covariates (Figure~\ref{fig:empirical errors}, middle); and (3) when $H_0$ is violated at rarely sampled covariates (Figure~\ref{fig:empirical errors}, right).
In all three scenarios, we use the inhomogeneous linear kernel for $\kappa$ and Gaussian noise $\Normal(0,s^2)$ such that $H_0(x)$ holds if and only if $f_1(x) = f_2(x)$. In the following, we describe how each scenario enforces its corresponding hypothesis.

\paragraph{Scenario 1: Global null} We pick a unit-norm function $f_1 \in \rkhs k$ and set $f_2 = f_1$. Consequently, $p_1 = p_2$ such that every rejection is a false positive, and the empirical positive rate estimates our type \I\ error.

\paragraph{Scenario 2: Frequent violations} We pick a unit-norm function $f_1 \in \rkhs k$, and independently pick a perturbation $h \in \rkhs k$ with norm $\xi$. By setting $f_2 = f_1 + h$, $f_1$ and $f_2$ differ almost everywhere almost surely, thereby creating a global discrepancy between $p_1$ and $p_2$ whose magnitude is controlled by $\xi$. In this setting, a rejection is taken to be a true positive, neglecting the edge case where $h(x) = 0$ incidentally causes $f_1(x) = f_2(x)$. We then take the empirical positive rate as the true positive rate, translating directly to our type \II\ error.

\paragraph{Scenario 3: Rare violations} We pick a unit-norm function $f_1 \in \rkhs k$ and set $f_2 = f_1 + k(\cdot, 0)$. Then, $p_1(\cdot, x)$ and $p_2(\cdot, x)$ are approximately equal for $x \in \inputSet_\mathrm{same} \coloneqq \{x \in \inputSet \mid k(\cdot, x) < 10^{-2}\}$, but differ significantly for $x \in \inputSet_\mathrm{diff} = \inputSet \setminus \inputSet_\mathrm{same}$.
The value $10^{-2}$ was chosen as an order of magnitude at which it is unrealistic to expect detecting the difference from the considered data set sizes.
To control how often this anomalous region $\inputSet_\mathrm{diff}$ is sampled, we now sample covariates from a mixture; namely from $\inputSet_\mathrm{diff}$ with probability $\theta$, and from $\inputSet_\mathrm{same}$ with probability $1-\theta$. Varying $\theta$ then controls the difficulty of detecting a violation, and the empirical positive rate again translates to our type \II\ error under this more challenging alternative.

\paragraph{Baseline comparison} We compare our test to the conditional two‐sample procedure of \citet{HL2024}, which casts hypothesis testing as a weighted conformal‐prediction problem and builds a rank‐sum statistic from estimated density ratios. In their framework, one classifier is trained on covariates alone to learn their marginal density ratios, and a second on the joint covariate--response pairs to learn the conditional density ratios. Because the baseline’s performance hinges on the accuracy of the density ratio learning method, we provide it with oracle-style information in each scenario. Firstly, we inform the method in all scenarios that the marginal densities are identical between the two samples. Secondly, in Scenarios 1 and 2, we supply the true Gaussian noise distribution and restrict the conditional density-ratio model to the Gaussian parametric family with known variance, so the learner only estimates conditional means via KRR (as in our test). In Scenario 3, we go one step further and provide the baseline with the ground truth conditional density ratios. This alignment makes outcomes directly comparable and focuses the comparison on the heart of each test, namely, global testing using one aggregated statistic versus our local, covariate-specific decisions.

Table~\ref{tab:hyperparams error rates 1} reports the hyperparameters for Scenarios 1 and 2, and Table~\ref{tab:hyperparams error rates 2} those for Scenario 3. The experiments used 50 core hours.

\subsubsection{Comparison of higher-order moments (Figure \texorpdfstring{\ref{fig:gaussian_vs_linear}}{4})}
Figure~\ref{fig:gaussian_vs_linear} shows the positive rate of our test for different kernels $\kappa$ on $\measurementSet=\R$ when the conditional means coincide, but the conditional distributions still differ. We pick a unit-norm mean function $f$ and set $f_1=f_2=f$, such that both data sets $D_1$ and $D_2$ in each pair are generated from the same mean function. However, $D_1$ and $D_2$ differ in the noise applied to the observations. The observations in $D_1$ are corrupted with Gaussian noise $\mathcal N(0, s^2)$. In contrast, the observations in $D_2$ are corrupted with noise sampled from an equally weighted mixture of $\Normal(-\mu, s^2)$ and $\Normal(\mu, s^2)$. For this data-generating process, the conditional \emph{means} coincide while the conditional \emph{distributions} differ. We compute the empirical positive rates in this setting for (i) a linear kernel; and (ii) a Gaussian kernel on $\measurementSet$.

Table~\ref{tab:higher order moment matching} reports the hyperparameters for this experiment. The experiment used 20 core hours.

\subsubsection{Influence of using an overly rich kernel (Figure \texorpdfstring{\ref{fig:gaussian_vs_polynomial}}{5})}
Figure~\ref{fig:gaussian_vs_polynomial} shows the type \I\ and \II\ errors of our test for different Gaussian and polynomial kernels $\kappa$ on $\measurementSet=\R$. For the type \I\ error, we pick a single unit-norm mean function $f=f_1=f_2$ and report the empirical positive rate. For the type \II\ error, we pick two mean functions $f_1$ and $f_2$ independently. In either case, we apply the same Gaussian noise $\mathcal N(0, s^2)$ to both data sets.

Table~\ref{tab:kernel richness} reports the hyperparameters for this experiment. The experiment used 15 core hours.

\subsubsection{Bootstrapping schemes (Figures \texorpdfstring{\ref{fig:bootstrap-comparison}}{6} and \texorpdfstring{\ref{fig:bootstrap vs analytical}}{7})}
Figure~\ref{fig:bootstrap-comparison} shows the type \I\ and \II\ errors of our test for different bootstrapping schemes when varying the data set size and regularization of the KRR. For the type \I\ error, we pick a single unit-norm mean function $f=f_1=f_2$ and report the empirical positive rate. For the type \II\ error, we pick two mean functions $f_1$ and $f_2$ independently. We use a linear kernel $\kappa$ on $\measurementSet$ and apply the same Gaussian noise $\mathcal N(0, s^2)$ to both data sets. In Figure~\ref{fig:dataset-size}, we fix the regularization $\lambda=0.5$ and vary the data set size $n$. In Figure~\ref{fig:regularization}, we fix $n=50$ and vary $\lambda$.

Figure~\ref{fig:bootstrap vs analytical} reports type \I\ and \II\ errors for analytical and bootstrapped test thresholds, using the same setup as before but fixing both $n=100$ and $\lambda=0.25$.

Tables~\ref{tab:hyperparams data set size} and \ref{tab:hyperparams bootstrap vs analytical} report the hyperparameters for these experiments. The experiments used 30 core hours.

\subsection{Process monitoring (Figure \texorpdfstring{\ref{fig:monitoring}}{3})}
Figure \ref{fig:monitoring} shows the ratio between the test statistic and threshold averaged over the sliding window following the trajectory of a perturbed linear dynamical system (left), and the average value of $\sigma_{D,\lambda}(x)$ over the window when $D$ is the reference data set.
We detail this experiment here.
\par
We set $\inputSet = \R^d$, $d\in\Np$, and choose the linear operator-valued kernel $K(x,x^\prime) = \scalar{x,x^\prime}_{\R^d}\cdot\id_{\R^d}$, as we wish to compare linear functions mapping $\R^d$ to itself.
We randomly sample a matrix $A\in \mathrm{O}(d)$, the group of orthonormal matrices in dimension $d$, pick a fixed initial state $X_0=\frac{1}{\sqrt d} [1 \cdots 1]^\top \in \R^d$, and generate a single trajectory $(X_n)_{n\in\integers N}$ as $X_{n+1} = A X_n + \epsilon_n$, where $\epsilon_n\sim\Normal(0,s^2I_d)$ is \ac{iid} Gaussian noise.
We enforce that $A$ is orthonormal so the trajectory neither shrinks nor explodes exponentially.
Then, the observations $Y = (Y_n)_{n\in\Np}$ are taken as $Y_n = X_{n+1}$.
It is immediate to verify that $((X_n,Y_n))_{n\in\Np}$ is a process of transition pairs of the Markov kernel $p(\cdot,x) = \Normal(\cdot\mid Ax, s^2I_d)$, where $\Normal(\cdot\mid \mu, s^2I_d)$ is the Gaussian measure on $\R^d$ with mean $\mu\in\R^d$ and covariance matrix $s^2I_d\in\R^{d\times d}$.
\par
Using this procedure for $5$ independent system trajectories of length $400$, we collect a data set of $n_\mathrm{ref} = 2000$ data points, which we call the reference data set $D$.
We then follow the trajectory with a sliding window of size $n_\mathrm{win} = 50$ data points, and perform the test at every step on the points in the sliding window. After $200$ steps, we introduce a change in the matrix $A$, replacing it with the matrix $A^\prime = A \exp(\frac{\xi}{\|H\|_1}H)$. 
Here $H = \frac{1}{2}(B-B^\top)$ is antisymmetric and the entries of $B$ are sampled \ac{iid} from $\Normal(0, 1)$, such that $\mathrm{dist}_{\mathrm{O}(d)}(A,A^\prime) = \xi$, where $\mathrm{dist}_{\mathrm{O}(d)}(A,A^\prime) = \|\log(A^\top A^\prime)\|_\mathrm{HS}$ is the usual geodesic distance on $\mathrm{O}(d)$.
This construction ensures that $A^\prime$ is orthonormal as well, and controls the distance between $A$ and $A^\prime$ in a way consistent with the geometry of $\mathrm{O}(d)$.
\par
Figure~\ref{fig:monitoring} (left) shows the result of this procedure, for varied values of $d$ and of $\xi$ ($\xi$ is only varied with $d=16$), and averaged over $100$ independent choices of the data sets and $50$ independent choices of the matrix $A$.
We observe that the change is detected rapidly, even before the sliding window is entirely filled with data from the new dynamics.
Notably, higher dimensions require higher disturbances to be successfully detected with a fixed amount of data.
Figure~\ref{fig:monitoring} (right) suggests a reason for this: the variance of the reference data set at the current state increases after change, indicating that the dynamics with $A^\prime$ bring the trajectory into regions unexplored when following the dynamics with $A$.
In other words, the lack of power in high dimensions seems to result from a lack of local data, as it is ``easier'' to go in unsampled regions by changing the dynamics.

Table~\ref{tab:hyperparams monitoring} presents the hyperparameters for this experiment. The experiment used 2000 core hours.

\begin{table*}[t]
  \centering
  \caption{Hyperparameters used in generating Figure \ref{fig:example_1d}.}
  \label{tab:hyperparams example 1d}
  \begin{tabularx}{\textwidth}{@{}p{14em} X@{}}
    \toprule
    \textbf{Parameter}      & \textbf{Value}                           \\
    \midrule
    Input set               & $\inputSet=[-1,1]$                       \\
    Input kernel bandwidth  & $\gamma^2=0.25$                          \\
    Data set size           & $n=25$                                   \\
    Noise variance          & $s^2=0.05^2$                             \\
    Regularization          & $\lambda=0.01$                           \\
    Bootstrap resamples     & $M=1000$                                 \\
    \bottomrule
  \end{tabularx}
\end{table*}

\begin{table*}[t]
  \centering
  \caption{Hyperparameters used in generating Figure \ref{fig:empirical errors} (left and middle).}
  \label{tab:hyperparams error rates 1}
  \begin{tabularx}{\textwidth}{@{}p{14em} X@{}}
    \toprule
    \textbf{Parameter}       & \textbf{Value}                             \\
    \midrule
    Input set                & $\inputSet=[-1,1]^2$                      \\
    Mean function dimension  & $m=12$                                    \\
    Data set size            & $n=100$                                   \\
    Noise variance           & $s^2=0.1^2$                               \\
    Regularization           & $\lambda=0.1$                             \\
    Bootstrap resamples      & $M=500$                                   \\
    \bottomrule
  \end{tabularx}
\end{table*}

\begin{table*}[t]
  \centering
  \caption{Hyperparameters used in generating Figure \ref{fig:empirical errors} (right).}
  \label{tab:hyperparams error rates 2}
  \begin{tabularx}{\textwidth}{@{}p{14em} X@{}}
    \toprule
    \textbf{Parameter}       & \textbf{Value}                             \\
    \midrule
    Input set                & $\inputSet=[-3,3]^2$                      \\
    Mean function dimension  & $m=36$                                    \\
    Data set size            & $n=500$                                   \\
    Noise variance           & $s^2=0.025^2$                             \\
    Regularization           & $\lambda=0.1$                             \\
    Bootstrap resamples      & $M=500$                                   \\
    \bottomrule
  \end{tabularx}
\end{table*}

\begin{table*}[t]
  \centering
  \caption{Hyperparameters used in generating Figure \ref{fig:monitoring}.}
  \label{tab:hyperparams monitoring}
  \begin{tabularx}{\textwidth}{@{}p{14em} X@{}}
    \toprule
    \textbf{Parameter}      & \textbf{Value}                          \\
    \midrule
    Regularization          & $\lambda=0.01$                          \\
    Noise variance          & $s^2=0.01^2$                            \\
    Significance level      & $\alpha=0.05$                           \\
    Bootstrap resamples     & $M=100$                                 \\
    \bottomrule
  \end{tabularx}
\end{table*}

\begin{table*}[t]
  \centering
  \caption{Hyperparameters used in generating Figure \ref{fig:gaussian_vs_linear}.}
  \label{tab:higher order moment matching}
  \begin{tabularx}{\textwidth}{@{}p{14em} X@{}}
    \toprule
    \textbf{Parameter}      & \textbf{Value}                           \\
    \midrule
    Input set                           & $\inputSet=[-1,1]^2$         \\
    Gaussian output kernel bandwidth    & $\gamma^2=0.05$              \\
    Mean function dimension             & $m=12$                       \\
    Data set size                       & $n=100$                      \\
    Noise variance                      & $s^2=0.025^2$                  \\
    Regularization                      & $\lambda=0.5$                \\
    Bootstrap resamples                 & $M=500$                      \\
    \bottomrule
  \end{tabularx}
\end{table*}

\begin{table*}[t]
  \centering
  \caption{Hyperparameters used in generating Figure \ref{fig:gaussian_vs_polynomial}.}
  \label{tab:kernel richness}
  \begin{tabularx}{\textwidth}{@{}p{14em} X@{}}
    \toprule
    \textbf{Parameter}      & \textbf{Value}                           \\
    \midrule
    Input set               & $\inputSet=[-1,1]^2$                     \\
    Mean function dimension & $m=12$                                   \\
    Noise variance          & $s^2=0.2^2$                              \\
    Regularization          & $\lambda=0.5$                            \\
    Bootstrap resamples     & $M=500$                                  \\
    \bottomrule
  \end{tabularx}
\end{table*}

\begin{table*}[t]
  \centering
  \caption{Hyperparameters used in generating Figure \ref{fig:dataset-size} and \ref{fig:regularization}.}
  \label{tab:hyperparams data set size}
  \begin{tabularx}{\textwidth}{@{}p{14em} X@{}}
    \toprule
    \textbf{Parameter}      & \textbf{Value}                           \\
    \midrule
    Input set               & $\inputSet=[-1,1]^2$                     \\
    Mean function dimension & $m=12$                                   \\
    Noise variance          & $s^2=0.2^2$                              \\
    Bootstrap resamples     & $M=500$                                  \\
    \bottomrule
  \end{tabularx}
\end{table*}

\begin{table*}[t]
  \centering
  \caption{Hyperparameters used in generating Figure~\ref{fig:bootstrap vs analytical}.}
  \label{tab:hyperparams bootstrap vs analytical}
  \begin{tabularx}{\textwidth}{@{}p{14em} X@{}}
    \toprule
    \textbf{Parameter}      & \textbf{Value}                           \\
    \midrule
    Input set               & $\inputSet=[-1,1]^2$                     \\
    Mean function dimension & $m=12$                                   \\
    Data set size           & $n=100$                                  \\
    Regularization          & $\lambda=0.25$                           \\
    Bootstrap resamples     & $M=500$                                  \\
    \bottomrule
  \end{tabularx}
\end{table*}



\end{document}